\definecolor{mydarkblue}{rgb}{0,0.08,0.45}
\newcommand{\Prob}{\mathbb{P}}
\newcommand{\E}{\mathbb{E}}
\newcommand{\reals}{\mathbb{R}}
\newcommand{\D}{\mathcal{D}}    
\newcommand{\A}{\mathcal{A}}    
\newcommand{\cR}{\mathcal{R}}
\newcommand{\R}{\mathbb{R}}
\newcommand{\Var}{\text{Var}}
\newcommand{\Tr}{\text{Tr}}
\newcommand{\ebias}{\mathcal{E}_\text{bias}}
\newcommand{\evar}{\mathcal{E}_\text{variance}}
\newcommand{\enoise}{\mathcal{E}_\text{noise}}
\DeclareMathOperator{\rank}{rank}
\DeclareMathOperator{\rowspace}{rowspace}
\DeclareMathOperator{\nullspace}{nullspace}
\newcommand{\var}{\mathrm{Var}}
\newtheorem{theorem}{Theorem}
\newtheorem{assumption}{Assumption}
\newtheorem{lemma}{Lemma}
\newtheorem{coro}{Corollary}
\newtheorem{prop}{Proposition}
\newtheorem{definition}{Definition}
\newcommand{\beq}{\begin{equation}}
\newcommand{\eeq}{\end{equation}}
\newcommand{\be}{\begin{equation}}
\newcommand{\ee}{\end{equation}}
\newcommand{\beqa}{\begin{eqnarray}}
\newcommand{\eeqa}{\end{eqnarray}}
\newcommand{\bean}{\begin{eqnarray*}}
\newcommand{\eean}{\end{eqnarray*}}
\crefname{appsec}{Appendix}{Appendices}
\begin{document}

%
\runningtitle{A Modern Take on the Bias-Variance Tradeoff}

%
\runningauthor{Neal, Mittal, Baratin, Tantia, Scicluna, Lacoste-Julien, Mitliagkas}

\twocolumn[

\aistatstitle{A Modern Take on the Bias-Variance Tradeoff \\ in Neural Networks}


\aistatsauthor{ Brady Neal \quad Sarthak Mittal \quad  Aristide Baratin \quad Vinayak Tantia \quad Matthew Scicluna\\ \textbf{Simon Lacoste-Julien$^{\dag,\ddagger}$ \quad Ioannis Mitliagkas$^{\dag}$} }

\aistatsaddress{ Mila, Université de Montréal\\ 
   $^\dag$Canada CIFAR AI Chair \qquad $^\ddag$CIFAR Fellow} ]

\begin{abstract}
The bias-variance tradeoff tells us that as model complexity increases, bias falls and variances increases, leading to a U-shaped test error curve. However, recent empirical results with over-parameterized neural networks are marked by a striking absence of the classic U-shaped test error curve: test error keeps decreasing in wider networks. This suggests that there might not be a bias-variance tradeoff in neural networks with respect to network width, unlike was originally claimed by, e.g., \citet{geman}. Motivated by the shaky evidence used to support this claim in neural networks, we measure bias and variance in the modern setting. We find that \emph{both} bias \emph{and} variance can decrease as the number of parameters grows. To better understand this, we introduce a new decomposition of the variance to disentangle the effects of optimization and data sampling. We also provide theoretical analysis in a simplified setting that is consistent with our empirical findings.


\end{abstract}

\section{INTRODUCTION}

There is a dominant dogma in machine learning:
\begin{quote}
    ``The price to pay for achieving low bias is high variance'' \citep{geman}.
\end{quote}
The quantities of interest here are the bias and variance of a learned model's {\em prediction} on a new input, where the randomness comes from the sampling of the training data.
This idea that bias decreases while variance increases with model capacity, leading to a U-shaped test error curve is commonly known as the \emph{bias-variance tradeoff} (\cref{fig:main_common_intuition_wrong} (left)).

There exist experimental evidence and theory that support the idea of a tradeoff. In their landmark paper, \citet{geman} measure bias and variance in various models.
They show convincing experimental evidence for the bias-variance tradeoff in nonparametric methods such as kNN (k-nearest neighbor) and kernel regression. They also show experiments on neural networks and claim that bias decreases and variance increases with network width. Statistical learning theory \citep{vapnik1998statistical} successfully predicts these U-shaped test error curves implied by a tradeoff for a number of classic machine learning models.
A key element is identifying a notion of model capacity, understood as the main parameter controlling this tradeoff.



Surprisingly, there is a growing amount of empirical evidence that \textit{wider} networks generalize \textit{better} than their smaller counterparts \citep{DBLP:journals/corr/NeyshaburTS14,wide_resnet,novak2018sensitivity, lee2018deep,belkin2018, jamming,fisher-rao_metric,DBLP:journals/corr/CanzianiPC16}.
 In those cases the classic U-shaped test error curve is not observed. 

A number of different research directions have spawned in response to these findings. \citet{DBLP:journals/corr/NeyshaburTS14} hypothesize the existence of an implicit regularization mechanism.
Some study the role that optimization plays \citep{implicit_bias__gd_linear_sep, implicit_bias_opt_geo}. Others suggest new measures of capacity \citep{fisher-rao_metric,neyshabur2018the}.
All approaches focus on test error, rather than studying bias and variance directly \citep{neyshabur2018the,scaling,fisher-rao_metric,belkin2018}.

\begin{figure*}[t]
    \centering
    \begin{subfigure}[t]{0.48\textwidth}
        \centering
        \includegraphics[width=\textwidth]{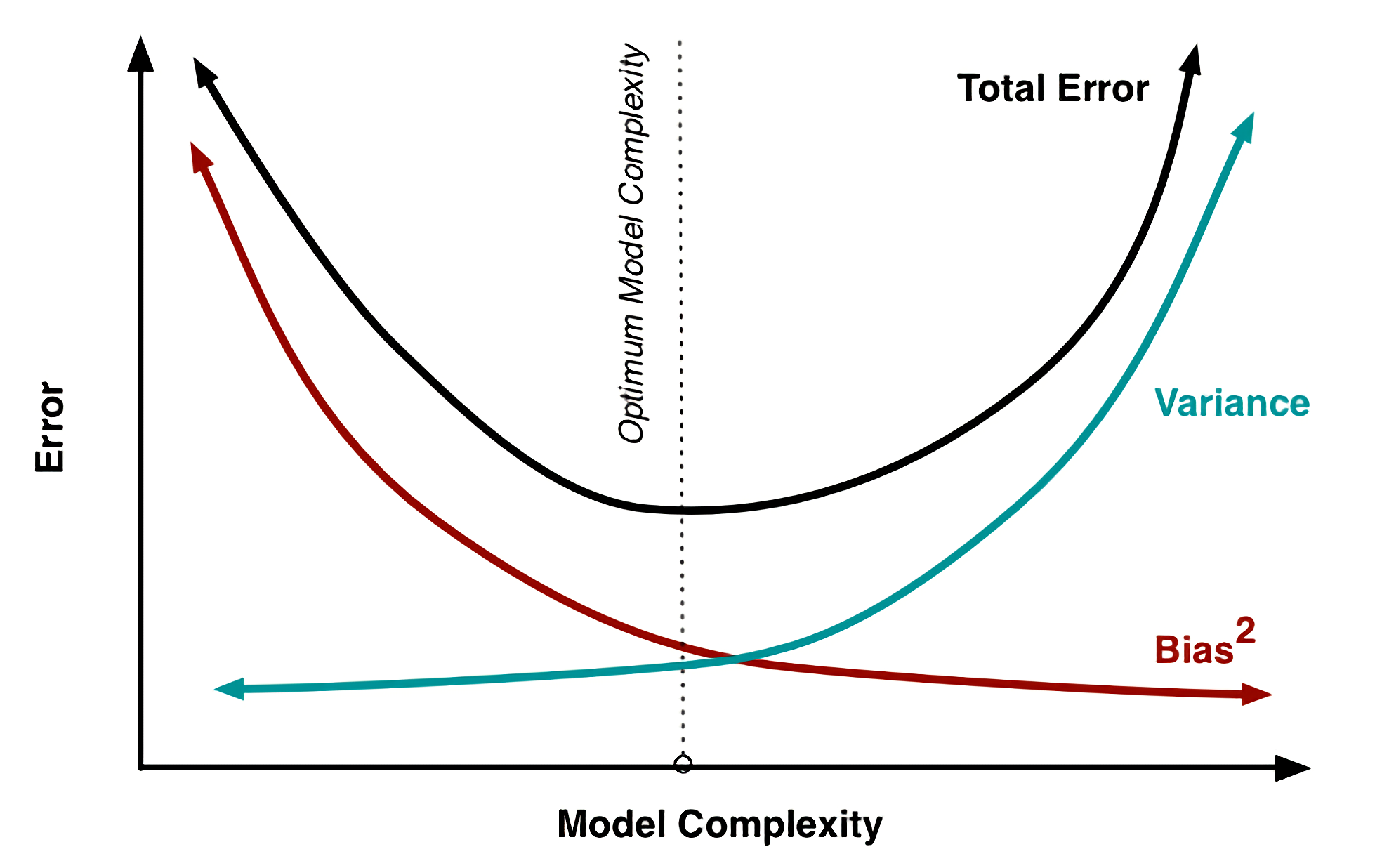}
    \end{subfigure}
    \hfill
    \begin{subfigure}[t]{0.48\textwidth}
        \centering
         \includegraphics[width=\textwidth]{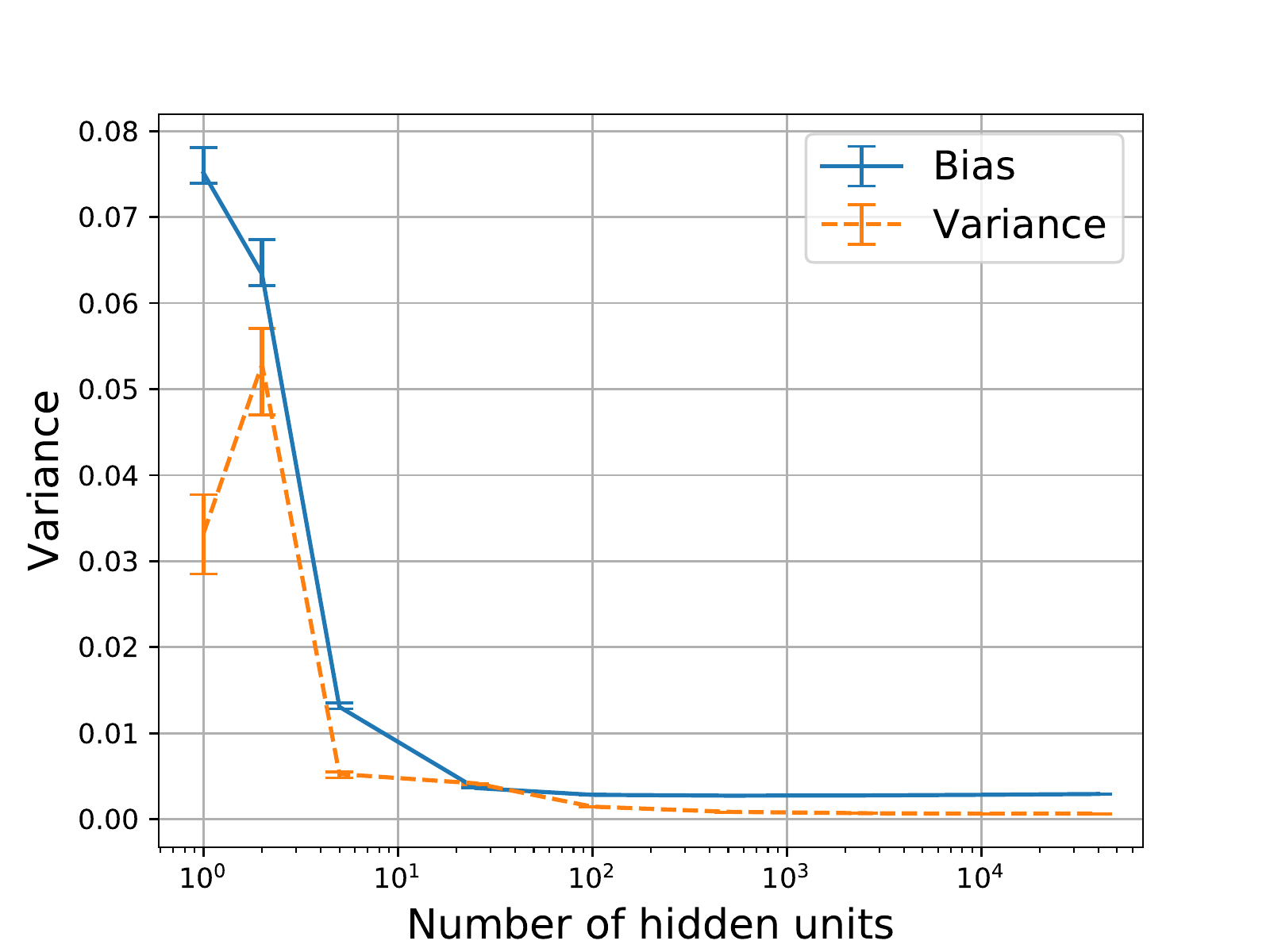}
    \end{subfigure}
    \caption{On the left is an illustration of the common intuition for the bias-variance tradeoff \citep{fortmann-roe_2012}. We find that {\em both} bias and variance decrease when we increase network width on MNIST (right) and other datasets (\cref{sec:width}). These results seem to contradict the traditional intuition of a strict tradeoff.}
    \label{fig:main_common_intuition_wrong}
\end{figure*}

Test error analysis does not give a definitive answer on the lack of a bias-variance tradeoff. 
Consider boosting:
it is known that its test error often decreases with the number of rounds \cite[Figures 8-10]{Schapire1999}.
In spite of this monotonicity in test error, \citet{buhlmann2003boosting}
show that variance grows at an exponentially decaying rate, calling this an ``exponential bias-variance tradeoff.''
To study the bias-variance tradeoff, one has to isolate and measure bias and variance individually.
To the best of our knowledge, there has not been published work reporting such measurements on neural networks since 
\citet{geman}.




We go back to basics and study bias and variance.
We start by taking a closer look at \citet[Figure 16 and Figure 8 (top)]{geman}'s experiments with neural networks.
We notice that their experiments do not support their claim that ``bias falls and variance increases with the number of hidden units.''
The authors attribute this inconsistency to convergence issues and maintain their claim that the bias-variance tradeoff is universal.
Motivated by this inconsistency, we perform a set of bias-variance experiments with modern neural networks.

We measure prediction bias and variance of fully connected neural networks.
These measurements allow us to reason directly about whether there exists a tradeoff with respect to network width.
We find evidence that \emph{both} bias \emph{and} variance can decrease at the same time as network width increases in common classification and regression settings (\cref{fig:main_common_intuition_wrong,sec:width}).

We observe the qualitative lack of a bias-variance tradeoff in network width with a number of gradient-based optimizers. 
In order to take a closer look at the roles of optimization and data sampling, we propose a simple decomposition of total prediction variance (\cref{sec:decomposition}). 
We use the law of total variance to get a term that corresponds to average (over data samplings) variance due to optimization and a term that corresponds to variance due to training set sampling of an ensemble of differently initialized networks. 
Variance due to optimization is significant in the under-parameterized regime and monotonically decreases with width in the over-parameterized regime. 
There, total variance is much lower and dominated by variance due to sampling (\cref{fig:all_variances}).

We provide theoretical analysis, consistent with our empirical findings,
in simplified analysis settings:
i) prediction variance does not grow arbitrarily with number of parameters in fixed-design linear models;
ii) variance due to optimization diminishes with number of parameters in neural networks under strong assumptions.

\begin{figure*}[t]
    \centering
    \begin{subfigure}[t]{0.47\textwidth}
        \centering
         \includegraphics[width=\textwidth]{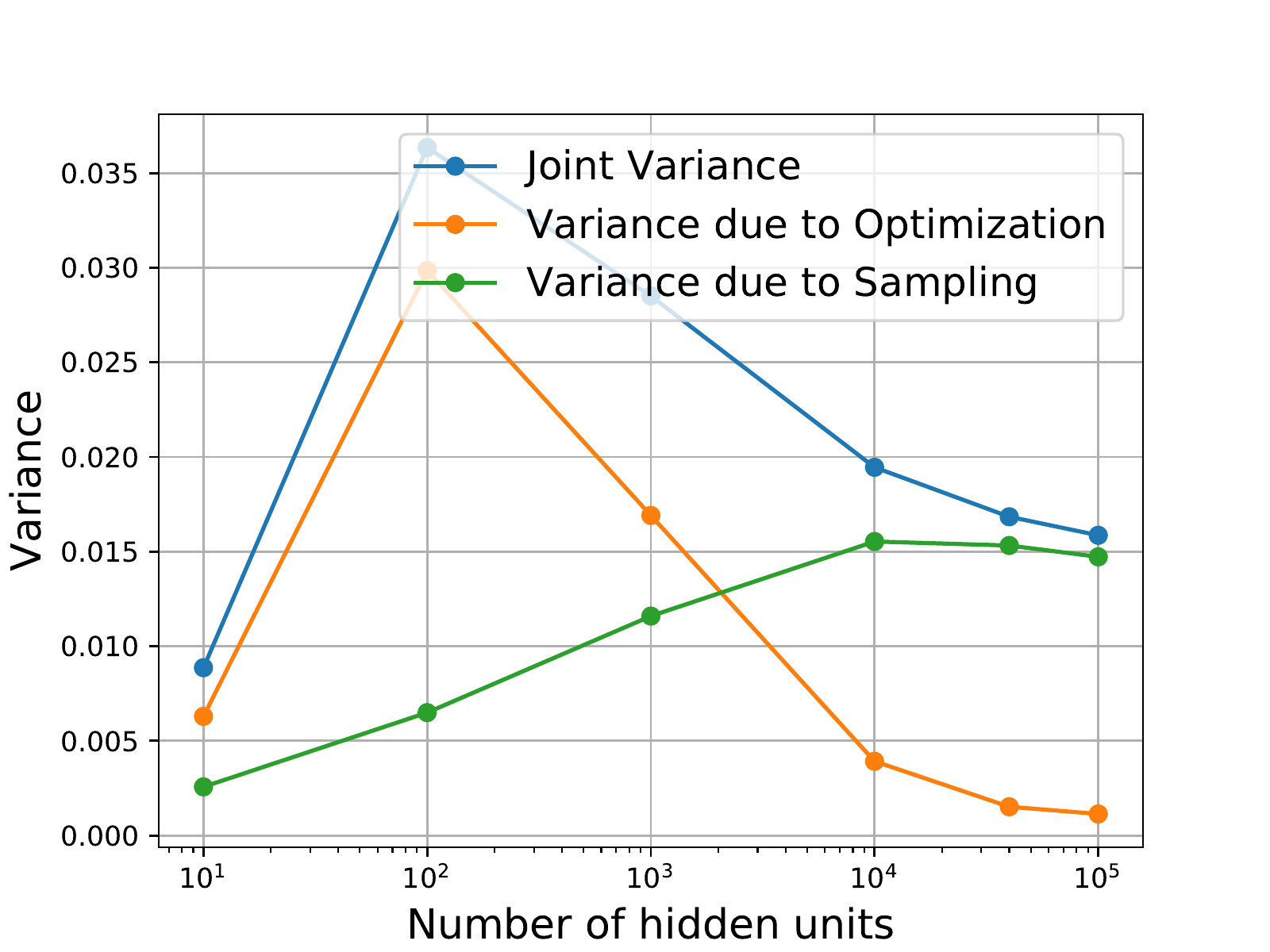}
    \end{subfigure}
    \hspace{.5cm}
    \begin{subfigure}[t]{0.47\textwidth}
        \centering
     \includegraphics[width=\textwidth]{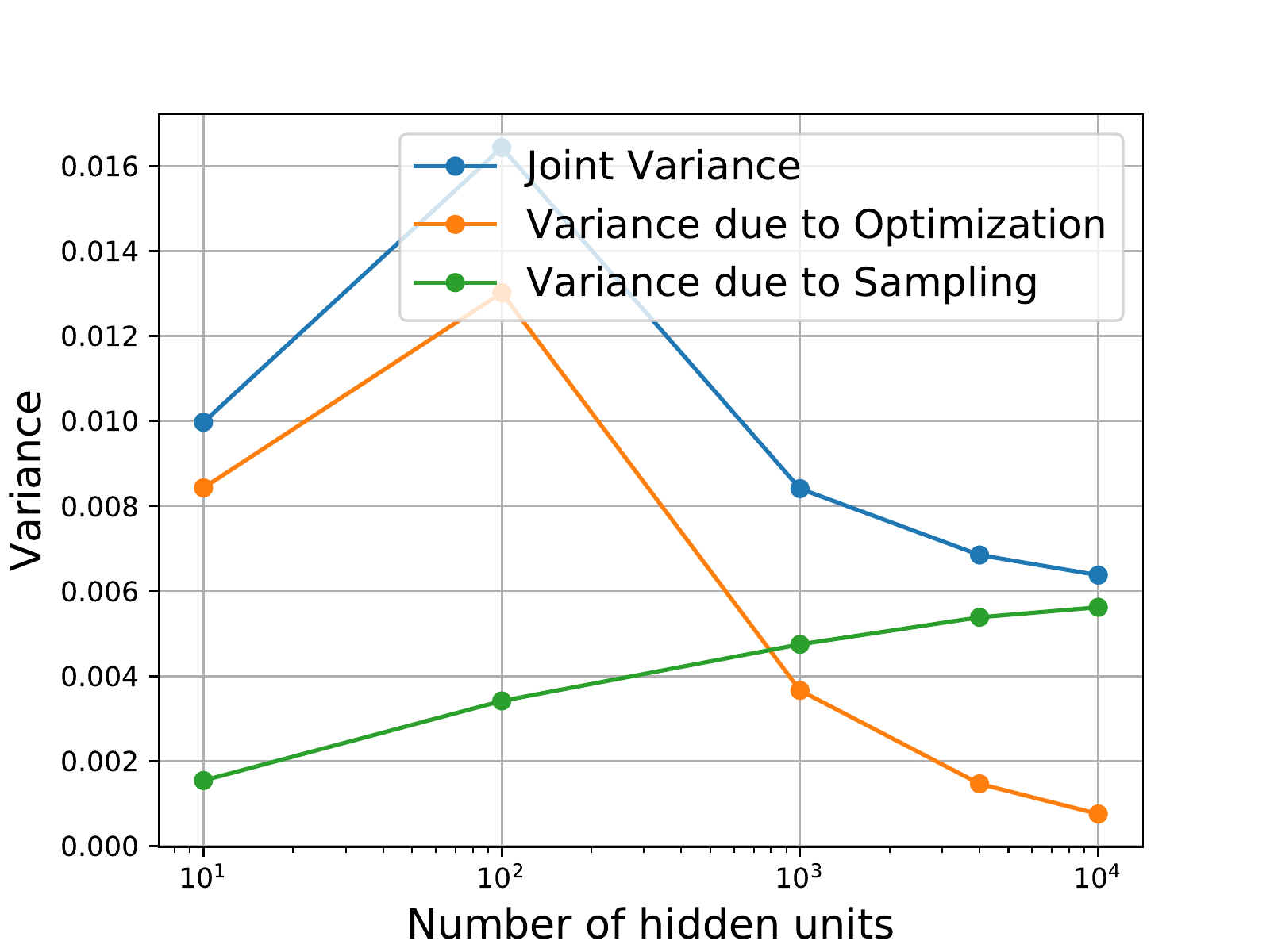}
    \end{subfigure}
    \caption{Trends of variance due to sampling and variance due to optimization with width on CIFAR10 (left) and on SVHN (right).
    Variance due to optimization decreases with width, once in the over-parameterized setting.
    Variance due to sampling plateaus and remains constant. This is in contrast with what the bias-variance tradeoff would suggest.}
    \label{fig:all_variances}
\end{figure*}





    


\paragraph{Organization}
The rest of this paper is organized as follows. We discuss relevant related work in \cref{sec:related-work}. \Cref{sec:preliminaries} establishes necessary preliminaries, including our variance decomposition. In \cref{sec:width}, we empirically study the impact of network width on variance. In \cref{sec:theory}, we present theoretical analysis in support of our findings.

\section{RELATED WORK}
\label{sec:related-work}

\citet{DBLP:journals/corr/NeyshaburTS14,neyshaburthesis} point out that because increasing network width does not lead to a U-shaped test error curve, there must be some form of implicit regularization controlling capacity. Our work is consistent with this finding, but by approaching the problem from the bias-variance perspective, we gain additional insights: 1) We specifically address the hypothesis that decreased bias must come at the expense of increased variance (see \citet{geman} and \cref{app:intuitions}) by measuring both quantities.
2) Our more fine-grain approach reveals that variance due to optimization vanishes with width, while variances due to sampling increases and levels off.
This insight about variance due to sampling is consistent with existing variance results for boosting \citep{buhlmann2003boosting}.
To ensure that we are studying networks of increasing capacity, one of the experimental controls we use throughout the paper is to verify that bias is decreasing.

In independent concurrent work, \citet{jamming, belkin2018} point out that generalization error acts according to conventional wisdom in the under-parameterized setting, that it decreases with capacity in the over-parameterized setting, and that there is a sharp transition between the two settings. Although the phrase ``bias-variance trade-off'' appears in \citet{belkin2018}'s title, their work really focuses on the shape of the test error curve: they argue it is not the simple U-shaped curve that conventional wisdom would suggest, and it is not the decreasing curve that \citet{DBLP:journals/corr/NeyshaburTS14} found; it is ``double descent curve,'' which is essentially a concatenation of the two curves. This is in contrast to our work, where we actually measure bias, variance, and components of variance in this over-parameterized regime. Interestingly, \citet{belkin2018}'s empirical study of test error provides some evidence that our bias-variance finding might not be unique to neural networks and might be found in other models such as decision trees.

In subsequent work,\footnote{By ``subsequent work,'' we mean work that appeared on arXiv five months after our paper appeared on arXiv.} \citet{belkin2019models,hastie2019surprises} perform a theoretical analysis of student-teacher linear models (with random features), showing the double descent curve theoretically. \citet{Advani2017HighdimensionalDO} also performed a similar analysis. \citet{hastie2019surprises} is the only one to theoretically analyze variance. Their work differs from ours in that we run experiments with neural networks on complex, real data, while they carry out a theoretical analysis of linear models in a simplified teacher (data generating distribution) setting.




\section{PRELIMINARIES}
\label{sec:preliminaries}

\subsection{Set-up}
\label{sec:setup}

We consider the typical supervised learning task of predicting  an output $y \in \mathcal{Y}$ from an input $x \in \mathcal{X}$, where the pairs $(x, y)$ are drawn from some unknown joint distribution, $\mathcal{D}$. The learning problem consists of learning a function $h_S:  \mathcal{X} \to \mathcal{Y}$ from a finite training dataset $S$ of $m$ i.i.d.\  samples from $\mathcal{D}$. The quality of a predictor $h$ can quantified by the expected error,
\beq \label{populrisk}
\mathcal{E}(h) = \E_{(x, y) \sim \mathcal{D}} \, \ell(h(x), y) \, ,
\eeq
for some loss function $\ell : \mathcal{Y} \times \mathcal{Y} \to \R$. 

In this paper, predictors $h_{\theta}$ are parameterized by the weights $\theta \in \R^N$ of neural networks.
We consider
the average performance over possible training sets (denoted by the random variable $S$) of size $m$. This is the same quantity \citet{geman} consider. While $S$ is the only random quantity studied in the traditional bias-variance decomposition, we also study randomness coming from optimization. We denote the random variable for optimization randomness (e.g.\ initialization) by $O$.

Formally, given a fixed training set $S$ and fixed optimization randomness $O$, the learning algorithm $\A$ produces $\theta$ = $\A(S, O)$. Randomness in optimization translates to randomness in $\A(S, \cdot)$. Given a fixed training set, we encode the randomness due to $O$ in a conditional distribution $p(\theta |S)$.
Marginalizing over the training set $S$ of size $m$ gives a marginal distribution $p(\theta)=\E_S p(\theta | S) $ on the weights learned by $\mathcal{A}$ from $m$ samples. In this context, the average performance of the learning algorithm using training sets of size $m$ can be expressed in the following ways:
\beq \label{fullrisk}
\mathcal{R}_m = \E_{\theta \sim p} \mathcal{E}(h_\theta) = \E_S \E_{\theta\sim p(\cdot |S)} \mathcal{E}(h_\theta) = \E_S \E_O \mathcal{E}(h_\theta)
\eeq

\subsection{Bias-variance decomposition}
\label{Sec:bv}

We briefly recall the standard bias-variance decomposition in the case of squared-loss. We work in the context of classification, where each  class $k \in \{1\cdots K\}$ is represented by a one-hot vector in $\mathbb{R}^K$. The predictor outputs a score  or probability vector in $\R^K$. In this context, the risk in \cref{fullrisk}   decomposes into three sources of error \citep{geman}:
\beq  \label{bv}
\mathcal{R}_m = \enoise + \ebias + \evar 
\eeq  
The first term is an intrinsic error term independent of the predictor;  the second is a bias term:
\begin{equation*}
\enoise = \E_{(x,y)} \left[\|y - \bar{y}(x)\|^2 \right],
\end{equation*}
\begin{equation*}
\ebias = \E_{x} \left[\|\E_\theta [h_\theta(x)] - \bar{y}(x)\|^2 \right],
\end{equation*}
where  $\bar{y}(x)$ denotes the expectation $\E[y | x]$ of $y$ given $x$. The third term is the expected variance of the output predictions: 
\begin{equation*}
    \evar = \E_{x}  \Var(h_\theta(x)),
\end{equation*}
\begin{equation*}
    \Var(h_\theta(x)) =
    \E_\theta \left[ \|h_\theta(x) - \E_\theta[h_\theta(x)] \|^2\right],
 \end{equation*}
where the expectation over $\theta$ can be done as in \cref{fullrisk}. Interpreting this bias-variance decomposition as a bias-variance tradeoff is quite pervasive (see, e.g., \citet[Chapter 2.9]{hastie01statisticallearning}, \citet[5.4.4]{Goodfellow-et-al-2016}, \citet[Chapter 3.2]{Bishop:2006}). It is generally invoked to emphasize that the model selected should be of the complexity that achieves the optimal balance between bias and variance.

Note that risks computed with classification losses (e.g cross-entropy or 0-1 loss)  do not have such a clean bias-variance decomposition \citep{Domingos00aunified, James03varianceand}. However, it is natural to expect that bias and variance are useful indicators of the performance of models that are not assessed with squared error. In fact, we show the classification risk can be bounded as 4 times the regression risk in \cref{app:classification_regression_relation}. To empirically examine this connection, in all of our graphs that have ``test error'' or ``training error'' on some classification task, we plot the 0-1 classification error (see, e.g., \cref{fig:small_data_tuned_error}).

\subsection{Further decomposing variance into its sources}
\label{sec:decomposition}

In the set-up of \cref{sec:setup} 
the prediction is a random variable that depends on two sources of randomness:
the randomly drawn training set, $S$,
and any optimization randomness, $O$, encoded into the conditional $p(\cdot |S)$.
In certain regimes, one gets significantly different predictions when using a different initialization.
Similarly, the output of a learned predictor changes when we use a different training set.
How do we start disentangling variance caused by sampling from variance caused by optimization?  
There are few different ways; here we describe one of them.

Our goal is to measure prediction variance due to sampling,
while controlling for the effect of optimization randomness.
\begin{definition}[(Ensemble) Variance due to sampling]
We consider the variance of an ensemble of infinitely many predictors with different optimization randomness (e.g.\ random initializations):
$$
    \var_{S}\left(
        \E_O\left[
            h_\theta(x) | S
        \right]\right).
        $$
\end{definition}

A common practice to estimate variance due to optimization effects is to run multiple seeds on a fixed training set. 
\begin{definition}[(Mean) Variance due to optimization]
We consider the average (over training sets) variance over optimization randomness for a fixed training set: $$ \E_{S}\left[
        \var_O\left(
            h_\theta(x) | S
        \right)
    \right].
$$
\end{definition}

The law of total variance naturally decomposes variance into these very terms:
\begin{align}
    \label{eqn:total-variance}
    \Var(h_\theta(x))
    = 
    &\E_{S}\left[
        \var_O\left(
            h_\theta(x) | S
        \right)
    \right]
        +
    \var_{S}\left(
        \E_O\left[
            h_\theta(x) | S
        \right]
    \right) 
\end{align}
We use this decomposition to get a finer understanding of our observations (\cref{fig:all_variances}).


\begin{figure*}[t]
    \centering
    \begin{subfigure}[t]{0.32\textwidth}
        \centering
         \includegraphics[width=\textwidth]{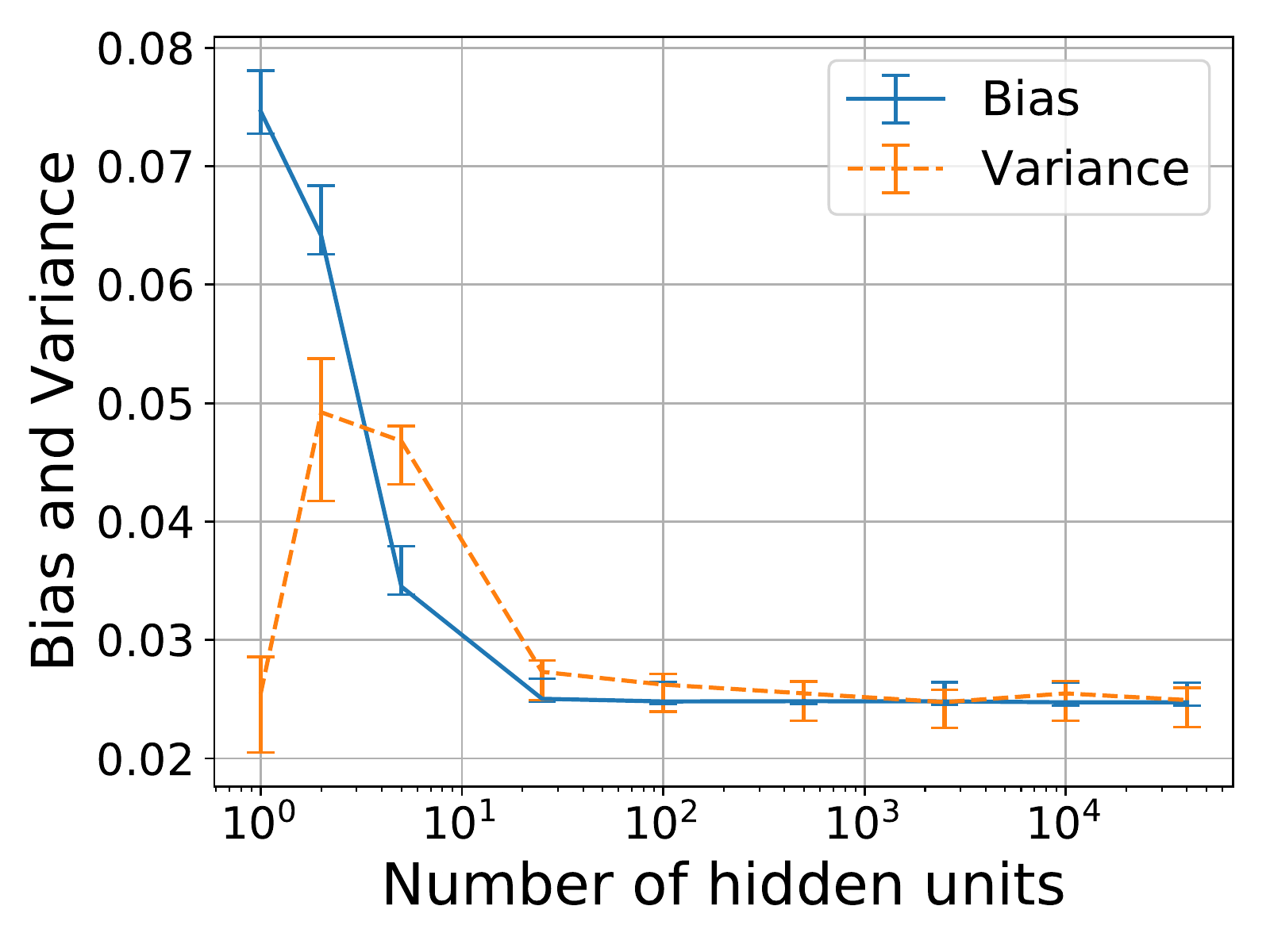}
         \caption{Variance decreases with width, even in the small MNIST setting.}
        \label{fig:small_data_tuned_bv}
    \end{subfigure}
    \hfill
    \begin{subfigure}[t]{0.32\textwidth}
        \centering
         \includegraphics[width=\textwidth]{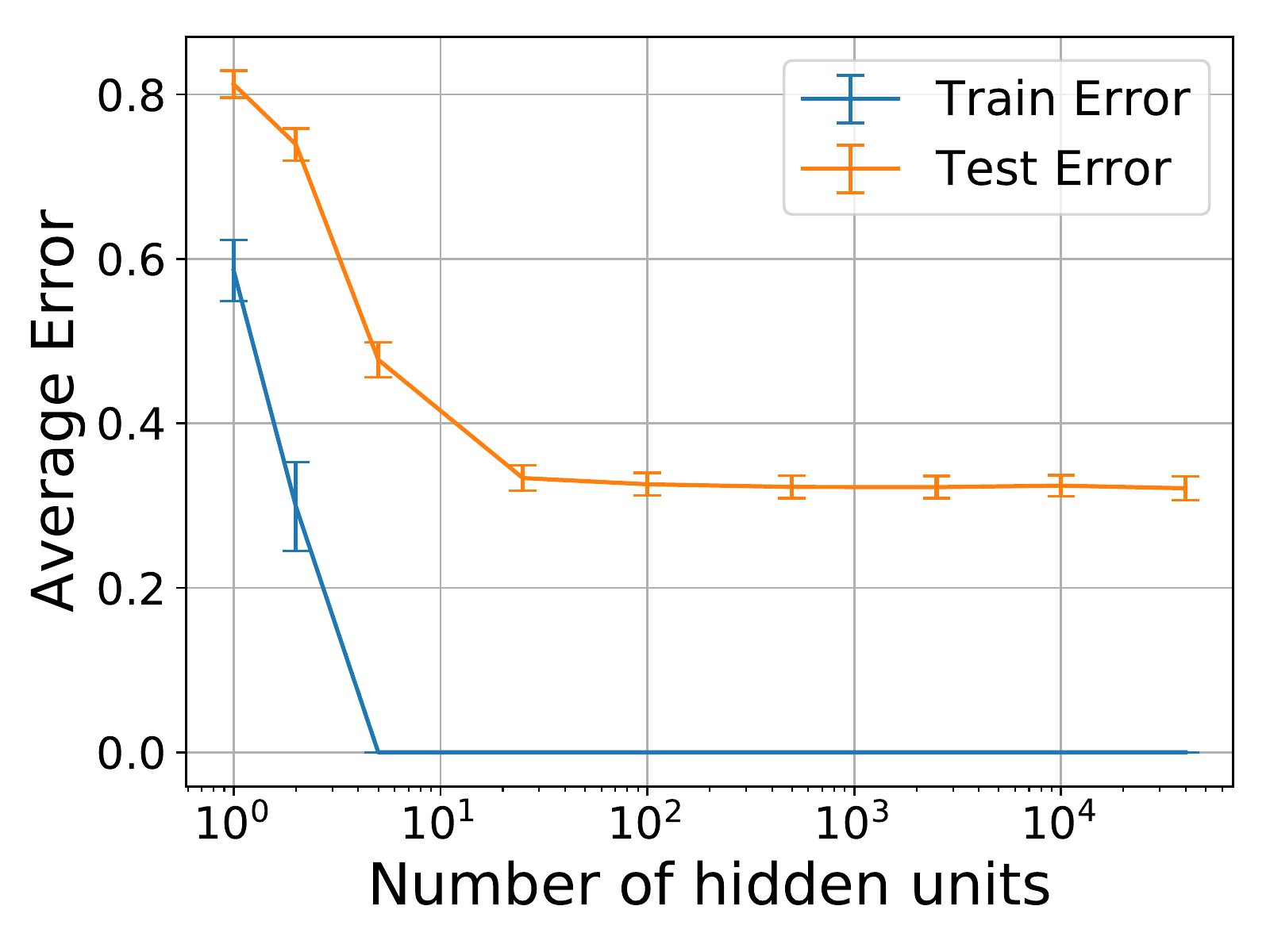}
     \caption{Test error trend is same as bias-variance trend (small MNIST).}
        \label{fig:small_data_tuned_error}
    \end{subfigure}
    \hfill
    \begin{subfigure}[t]{0.32\textwidth}
        \centering
        \includegraphics[width=\textwidth]{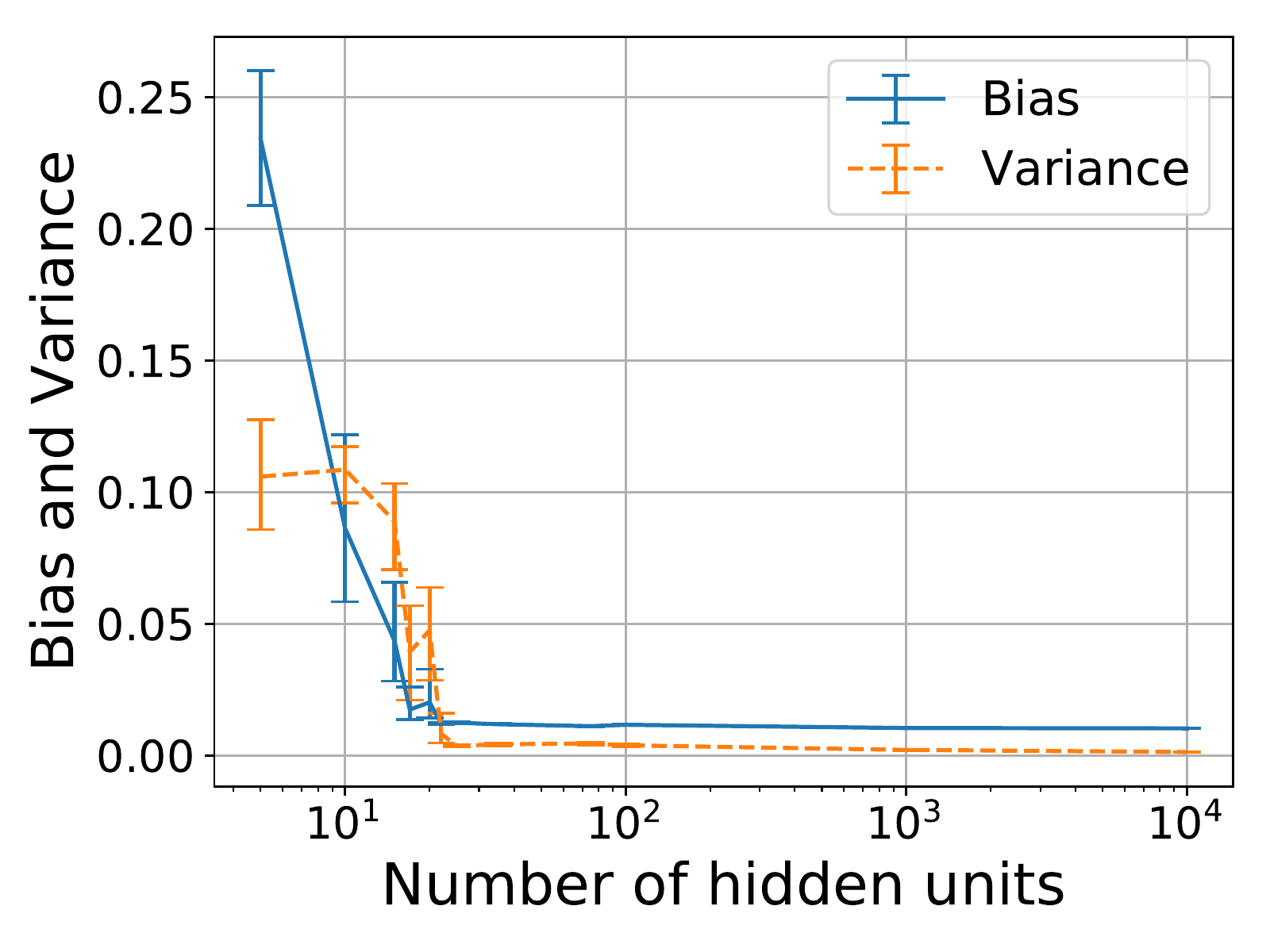}
         \caption{Similar bias-variance trends on sinusoid regression task.}
         \label{fig:sinusoid_bv}
    \end{subfigure}
    \caption{We see the same bias-variance trends in small data settings: small MNIST (left) and a regression setting (right).}
\end{figure*}

\section{EXPERIMENTS}
\label{sec:width}
In this section, we study how variance of fully connected single hidden layer networks varies with width. We provide evidence against \citet{geman}'s important claim about neural networks: \begin{quote}
    ``The basic trend is what we expect: bias falls and variance increases with the number of hidden units.''
\end{quote}
Our main finding is that, for all tasks that we study, bias and variance both decrease as we scale network width.
We also provide a meaningful decomposition of prediction variance into a variance due to sampling term and a variance due to optimization term.

\subsection{Common experimental details} 

We run experiments on different datasets: MNIST, SVHN, CIFAR10, small MNIST, and a sinusoid regression task. Averages over data samples are performed by taking the training set $S$ and creating 50 bootstrap replicate training sets $S'$ by sampling with replacement from $S$. We train 50 different neural networks for each hidden layer size using these different training sets. Then, we estimate $\ebias$\footnote{Because we do not have access to $\bar{y}$, we use the labels $y$ to estimate $\ebias$. This is equivalent to assuming noiseless labels and is standard procedure for estimating bias \citep{Kohavi:1996, Domingos00aunified}.} and $\evar$ as in Section \ref{Sec:bv}, where the population expectation  $\E_x$ is estimated with an average over the test set. To estimate the two terms from the law of total variance (\autoref{eqn:total-variance}), we use 10 random seeds for the outer expectation and 10 for the inner expectation, resulting in a total of 100 neural networks for each hidden layer size. Furthermore, we compute 99\% confidence intervals for our bias and variance estimates using the bootstrap \citep{efron1979}.

The networks are initialized using PyTorch's default initialization, which scales the variance of the weight initialization distribution inversely proportional to the width \citep{LeCun:1998,xavier2010}. The networks are trained using SGD with momentum and generally run for long after 100\% training set accuracy is reached (e.g.\ 500 epochs for full data MNIST and 10000 epochs for small data MNIST). The overall trends we find are robust to how long the networks are trained after the training error converges. The step size hyperparameter is specified in each of the sections, and the momentum hyperparameter is always set to 0.9. To make our study as general as possible, we consider networks without regularization bells and whistles such as weight decay, dropout, or data augmentation, which \citet{zhang} found to not be necessary for good generalization.

\subsection{Decreasing variance in full data setting}

We find a clear decreasing trend in variance with width of the network in the full data MNIST setting (\cref{fig:main_common_intuition_wrong}). We also see the same trend with CIFAR10 (\cref{app:CIFAR10_width}) and SVHN (\cref{app:SVHN_width}). In these experiments, the same step size is used for all networks for a given dataset (0.1 for MNIST and 0.005 for CIFAR10 and SVHN). The trend is the same with or without early stopping, so early stopping is not necessary to see decreasing variance, similar to how it was not necessary to see better test set performance with width in \citet{DBLP:journals/corr/NeyshaburTS14}. Wider ResNets are known to achieve lower test error \citep{wide_resnet}; this likely translates to decreasing variance with width in convolutional networks as well. Much of the over-parameterization literature focuses on over-parameterization in width; interestingly, the variance trend is not the same when varying depth (\cref{app:depth}).

\begin{figure*}[t]
    \centering
    \begin{subfigure}[t]{0.32\textwidth}
        \centering
        \includegraphics[width=\textwidth]{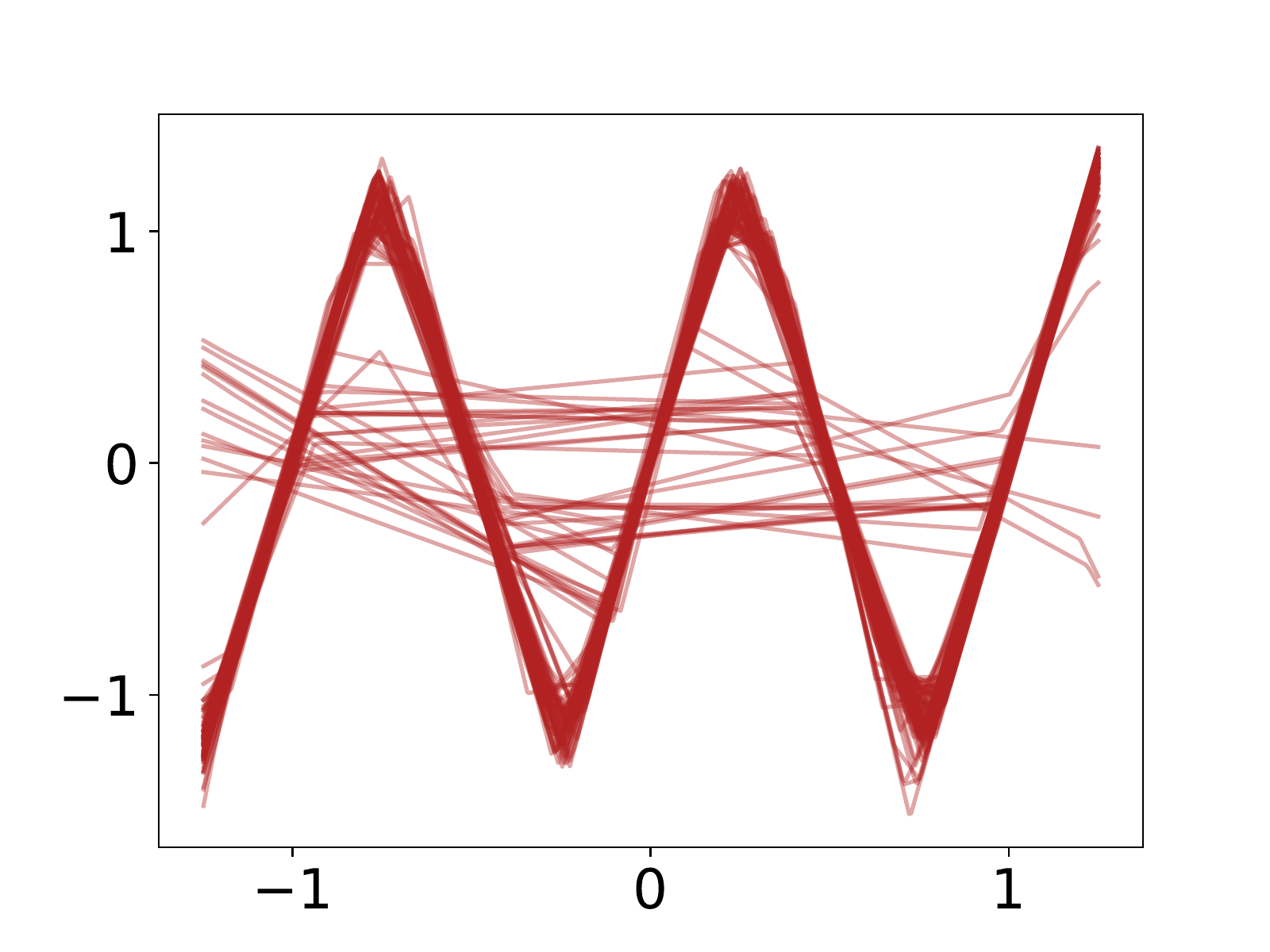}
    \end{subfigure}
    \hfill
    \begin{subfigure}[t]{0.32\textwidth}
        \centering
         \includegraphics[width=\textwidth]{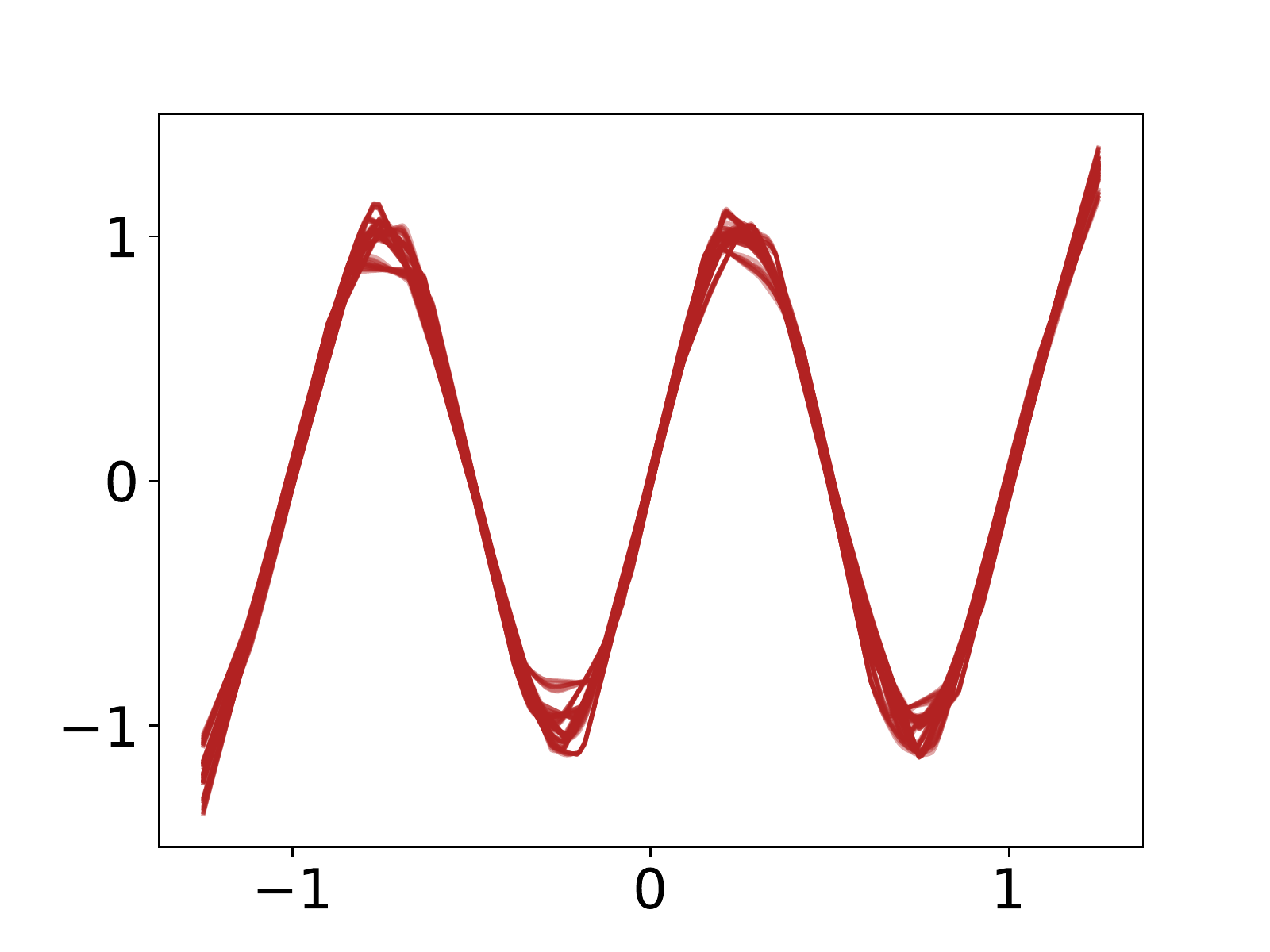}
    \end{subfigure}
    \hfill
    \begin{subfigure}[t]{0.32\textwidth}
        \centering
         \includegraphics[width=\textwidth]{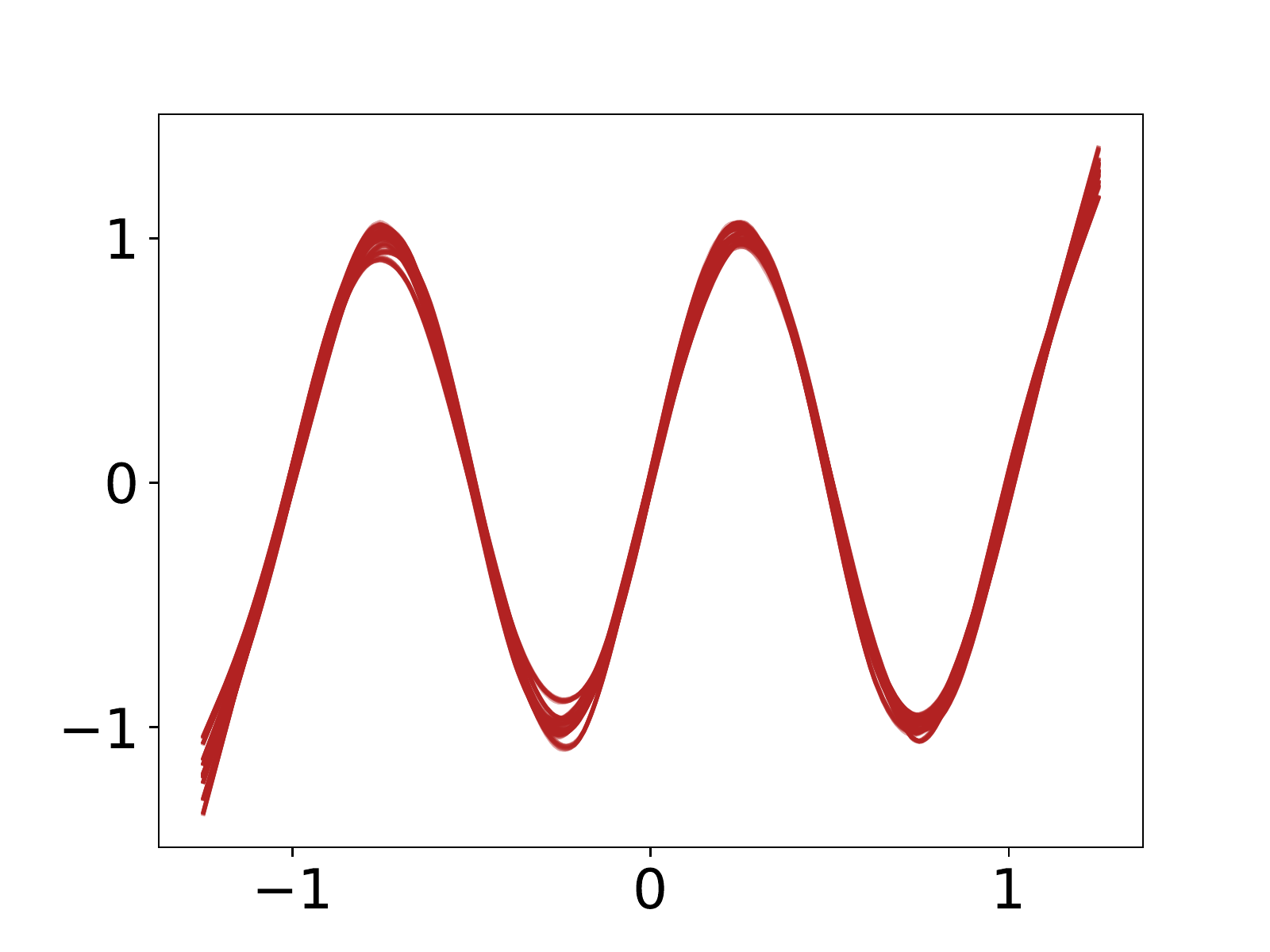}
    \end{subfigure}
    \caption{Visualization of the 100 different learned functions of single hidden layer neural networks of widths 15, 1000, and 10000 (from left to right) on the task of learning a sinusoid. The learned functions are increasingly similar with width, suggesting decreasing variance.
    More in \cref{app:sinusoid_regression}.}
    \label{fig:sinusoid}
\end{figure*}

\subsection{Testing the limits: decreasing variance in the small data setting}
\label{sec:small_data}

Decreasing the size of the dataset can only increase variance. To study the robustness of the above observation, we decrease the size of the training set to just 100 examples. In this small data setting, somewhat surprisingly, we still see that \emph{both} bias \emph{and} variance decrease with width (\cref{fig:small_data_tuned_bv}). The test error behaves similarly (\cref{fig:small_data_tuned_error}). Because performance is more sensitive to step size in the small data setting, the step size for each network size is tuned using a validation set (see \cref{app:tuned_lr} for step sizes). The training for tuning is stopped after 1000 epochs, whereas the training for the final models is stopped after 10000 epochs.  Note that because we see decreasing bias with width, effective capacity is, indeed, increasing while variance is decreasing.

One control that motivates the experimental design choice of optimal step size is that it leads to the conventional decreasing bias trend (\cref{fig:small_data_tuned_bv}) that indicates increasing effective capacity. In fact, in the corresponding experiment where step size is the same 0.01 for all network sizes, we do not see monotonically decreasing bias (\cref{app:fixed_lr}). 

This sensitivity to step size in the small data setting is evidence that we are testing the limits of our hypothesis. By looking at the small data setting, we are able to test our hypothesis when the ratio of size of network to dataset size is quite large, and we still find this decreasing trend in variance (\cref{fig:small_data_tuned_bv}). 

To see how dependent this phenomenon is on SGD, we also run these experiments using batch gradient descent and PyTorch's version of LBFGS. Interestingly, we find a decreasing variance trend with those optimizers as well. These experiments are included in \cref{app:other_optimizers}.

\subsection{Decoupling variance due to sampling from variance due to optimization}
\label{sec:width_var_decoupling}

In order to better understand this variance phenomenon in neural networks, we separate the variance due to sampling from the variance due to optimization, according to the law of total variance (\autoref{eqn:total-variance}). Contrary to what traditional bias-variance tradeoff intuition would suggest, we find variance due to sampling increases slowly and levels off, once sufficiently over-parameterized (\cref{fig:all_variances}). Furthermore, we find that variance due to optimization decreases with width, causing the total variance to decrease with width (\cref{fig:all_variances}).

A body of recent work has provided evidence that over-parameterization (in width) helps gradient descent optimize to global minima in neural networks \citep{globalminima_iclr2019, pmlr-v80-du18a, DBLP:journals/corr/SoltanolkotabiJ17, NIPS2014_5267, DBLP:journals/corr/abs-1801-02254}. Always reaching a global minimum implies low variance due to optimization on the \textit{training set}. Our observation of decreasing variance on the \textit{test set} shows that the over-parameterization (in width) effect on optimization seems to extend to generalization, on the data sets we consider.

\subsection{Visualization with regression on sinusoid}

We trained different width neural networks on a noisy sinusoidal distribution with 80 independent training examples. This sinusoid regression setting also exhibits the familiar bias-variance trends (\cref{fig:sinusoid_bv}) and trends of the two components of the variance and the test error (\cref{fig:sinusoid_curves_app} of \cref{app:sinusoid_regression}).

Because this setting is low-dimensional, we can visualize the learned functions. The classic caricature of high capacity models is that they fit the training data in a very erratic way (example in \cref{fig:high_var_caricature} of \cref{app:sinusoid_regression}). We find that wider networks learn sinusoidal functions that are much more similar than the functions learned by their narrower counterparts (\cref{fig:sinusoid}). We have analogous plots for all of the other widths and ones that visualize the variance similar to how it is commonly visualized for Gaussian processes in \cref{app:sinusoid_regression}.

\section{DISCUSSION AND THEORETICAL INSIGHTS}
\label{sec:theory}

Our empirical results demonstrate that in the practical setting, variance due to optimization decreases with network width while variance due to sampling increases slowly and levels off once sufficiently over-parameterized. In \cref{sec:linear_models}, we discuss the simple case of linear models and point out that non-increasing variance can already be seen in the over-parameterized setting. In \cref{sec:back-to-nn} we take inspiration from linear models to provide arguments for the behavior of variance in increasingly wide neural networks, and we discuss the assumptions we make.

\subsection{Insights from linear models}
\label{sec:linear_models}

In this section, we review the classic result that the variance of a linear model grows with the number of parameters \citep[Section 7.3]{hastie_09} and point out that variance behaves differently in the over-parameterized setting.

We consider least-squares linear regression in a standard setting which assumes a noisy linear mapping $y = \theta^T x + \epsilon$ between input feature vectors $x\in \R^N$ and real outputs, where $\epsilon$ denotes the noise random variable with $\E[\epsilon] = 0$ and $\Var(\epsilon) = \sigma_{\epsilon}^2$. In this context, the over-parameterized setting is when the dimension $N$ of the input space is larger than the number $m$ of examples.

Let $X$ denote the $m \times N$ design matrix whose $i$\textsuperscript{th} row is the training point $x_i^T$, let $Y$ denote the corresponding labels, and let $\Sigma = X^T X$ denote the empirical covariance matrix.  
We consider the fixed-design setting where $X$ is fixed, so all of the randomness due to data sampling comes solely from $\epsilon$. $\A$ learns weights $\hat{\theta}$ from $(X, Y)$, either by a closed-form solution or by gradient descent, using a standard initialization $\theta_0 \sim \mathcal{N}(0, \frac{1}{N} I)$. The predictor makes a prediction on $x \sim \D$: $h(x) = \hat{\theta}^T x$. Then, the quantity we care about is $\E_x \Var(h(x))$.

\subsubsection{Under-parameterized setting}

The case where $N \leq m$ is standard: if  $X$ has maximal rank, $\Sigma$ is invertible; the solution is independent of the initialization and given by $\hat{\theta} = \Sigma^{-1} X^T Y$. All of the variance is a result of randomness in the noise $\epsilon$.  For a fixed $x$,
\begin{equation}
    \Var(h(x)) = \sigma_\epsilon^2 \Tr(x x^T \Sigma^{-1}) \, .
\end{equation}
This grows with the number of parameters $N$. For example, taking the expected value over the empirical distribution, $\hat{p}$, of the sample, we recover that the variance grows with $N$:
\begin{equation}
    \E_{x \sim \hat{p}} [\Var(h(x))] = \frac{N}{m} \sigma_\epsilon^2 \, .
\end{equation}
We provide a reproduction of the proofs in \cref{app:linear_underparam}.

\subsubsection{Over-parameterized setting}

The over-parameterized case where $N > m$ is more interesting: even if $X$ has maximal rank,  $\Sigma$ is not invertible. This leads to a subspace of solutions, but gradient descent yields a unique solution from updates that belong to the span of the training points $x_i$ (row space of $X$) \citep{lecun1991}, which is of dimension $r = \rank(X) = \rank(\Sigma)$. Correspondingly, no learning occurs in the null space of $X$, which is of dimension $N - r$. 
Therefore, gradient descent yields the solution that is closest to initialization: $\hat{\theta} = P_\perp(\theta_0) + \Sigma^+ X^T Y$, where $P_{\perp}$ projects onto the null space of $X$ and $+$ denotes the Moore-Penrose inverse. 

The variance has two contributions: one due to initialization and one due to sampling (here, the noise $\epsilon$), as in \cref{eqn:total-variance}. These are made explicit in \cref{prop:linear_model}.  
\begin{prop}[Variance in over-parameterized linear models]
\label{prop:linear_model}
Consider the over-parameterized setting where $N > m$.  For a fixed $x$, the variance decomposition of \cref{eqn:total-variance} yields
\beq \label{eq:over_lin_variance}
\Var(h(x))=  \frac{1}{N}\|P_\perp(x)\|^2 + \sigma_\epsilon^2 \Tr(x x^T \Sigma^+) \, . \eeq 
\end{prop}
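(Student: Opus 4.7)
The plan is to apply the law of total variance from \cref{eqn:total-variance} directly to the explicit form $\hat{\theta} = P_\perp(\theta_0) + \Sigma^+ X^T Y$, treating $\theta_0$ as the optimization randomness $O$ and the label noise $\epsilon$ (inside $Y = X\theta + \epsilon$) as the sampling randomness $S$. Using the fact that $P_\perp$ is an orthogonal projection (symmetric idempotent) and $\Sigma^+$ is symmetric, I would first rewrite the prediction as
\beq
h(x) = P_\perp(x)^T \theta_0 + x^T \Sigma^+ X^T Y,
\eeq
which cleanly separates the two sources of randomness: the first summand depends only on $O$, the second only on $S$.

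For the inner conditional variance $\var_O(h(x)\mid S)$, fixing $\epsilon$ (and hence $Y$) leaves only $\theta_0 \sim \mathcal{N}(0, \tfrac{1}{N} I)$ random. The prediction is an affine function of $\theta_0$ with linear coefficient $P_\perp(x)$, so $\var_O(h(x)\mid S) = \tfrac{1}{N}\|P_\perp(x)\|^2$. This quantity does not depend on $S$, so $\E_S \var_O(h(x)\mid S) = \tfrac{1}{N}\|P_\perp(x)\|^2$, producing the first term of \cref{eq:over_lin_variance}.

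For the outer variance $\var_S(\E_O[h(x)\mid S])$, the expectation over $\theta_0$ kills the first summand (mean zero), leaving the $O$-deterministic quantity $x^T \Sigma^+ X^T Y$. Substituting $Y = X\theta + \epsilon$ and discarding the deterministic piece, this variance becomes $\var_\epsilon(x^T \Sigma^+ X^T \epsilon)$, which under the standard assumption $\Cov(\epsilon) = \sigma_\epsilon^2 I$ evaluates to $\sigma_\epsilon^2\, x^T \Sigma^+ X^T X \Sigma^+ x = \sigma_\epsilon^2\, x^T \Sigma^+ \Sigma \Sigma^+ x$. Invoking the Moore–Penrose identity $\Sigma^+ \Sigma \Sigma^+ = \Sigma^+$ (valid since $\Sigma = X^T X$ is symmetric positive semidefinite) collapses this to $\sigma_\epsilon^2\, x^T \Sigma^+ x = \sigma_\epsilon^2 \Tr(x x^T \Sigma^+)$, recovering the second term.

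The only step that truly needs care is the pseudoinverse simplification $\Sigma^+ \Sigma \Sigma^+ = \Sigma^+$; it is standard but worth justifying once via the SVD of $X$. Otherwise there is no real obstacle, because \cref{eqn:total-variance} is tailor-made for this setting: $\theta_0$ and $\epsilon$ are independent, the dependence of $h(x)$ on $\theta_0$ lives entirely in the null-space direction $P_\perp(x)$, and the dependence on $\epsilon$ lives entirely in the row-space through $\Sigma^+ X^T$, so the two contributions never mix.
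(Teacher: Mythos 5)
Your proof is correct and follows essentially the same route as the paper's: both apply the law of total variance with $O=\theta_0$, $S=\epsilon$ to $\hat\theta=P_\perp(\theta_0)+\Sigma^+X^TY$, read off $\tfrac{1}{N}\|P_\perp(x)\|^2$ from the Gaussian initialization, and compute the noise term by the same $\Sigma^{+}\Sigma\Sigma^{+}=\Sigma^{+}$ reduction that replaces $\Sigma^{-1}$ in the under-parameterized calculation. Your write-up is marginally cleaner in separating $h(x)=P_\perp(x)^T\theta_0+x^T\Sigma^+X^TY$ up front so the two randomness sources never mix, but the substance is identical.
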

This does not grow with the number of parameters $N$. In fact,  because $\Sigma^{-1}$ is replaced with $\Sigma^+$, the variance \emph{scales as the dimension of 
the data} (i.e the rank of $X$), as opposed to the number of parameters. For example, taking the expected value over the empirical distribution, $\hat{p}$, of the sample, we obtain
\begin{equation}
    \E_{x \sim \hat{p}} [\Var(h(x))] = \frac{r}{m} \sigma_\epsilon^2 \, ,
\end{equation}
where $r = \rank(X)$.
We provide the proofs for over-parameterized linear models in \cref{app:linear_overparam}.

\subsection{A more general result} 

\label{sec:back-to-nn}

We will illustrate our arguments in the following simplified setting, where $\mathcal{M}$, $\mathcal{M}^\perp$, and $d(N)$ are the more general analogs of $\rowspace(X)$, $\nullspace(X)$, and $r$ (respectively):

\textbf{Setting. }
Let $N$ be the dimension of the parameter space. 
The prediction for a fixed example $x$, given by a trained network parameterized by $\theta$ depends on:

(i) a subspace of the parameter space, $\mathcal{M} \in \mathbb{R}^N$ with relatively small dimension, $d(N)$,  which depends only on the learning task. 

(ii) parameter components corresponding to directions orthogonal to $\mathcal{M}$. The orthogonal $\mathcal{M}^\perp$ of $\mathcal{M}$ has dimension, $N-d(N)$, and is essentially irrelevant to the learning task.

We can write the parameter vector as a sum of these two components $    \theta = \theta_\mathcal{M} + \theta_{\mathcal{M}^\perp}
$. We will further make the following assumptions.
\vspace{-0.1cm}
\begin{enumerate}[label=\textbf{Assumption \arabic*}, labelindent=0pt, wide, labelwidth=!]
\item
\label{assum:opt_invariant}
The optimization of the loss function is invariant with respect to $\theta_{\mathcal{M}\perp}$.
\item
Regardless of initialization, the optimization method consistently yields a solution with the same $\theta_\mathcal{M}$ component (i.e.\ the same vector when projected onto $\mathcal{M}$).
\end{enumerate}

\subsubsection{Variance due to initialization}

\label{sec:variance-from-optimization}

Given the above assumptions, the following result shows that the variance from initialization\footnote{Among the different sources of optimization randomness, we focus on randomness from initialization and do not focus on randomness from stochastic mini-batching because we found the phenomenon of decreasing variance with width persists when using \textit{batch} gradient descent (\cref{sec:small_data}, \cref{app:other_optimizers}).} vanishes as we increase $N$. The full proof, which builds on concentration results for Gaussians (based on Levy's lemma \citep{ledoux2001concentration}), is given in \cref{app:more_general_setting}.

\newcommand{\thmp}{\theta_{\mathcal{M}^\perp}}

\begin{theorem}[Decay of variance due to initialization] \label{thm:init-var-decay}
Consider the setting of Section~\ref{sec:back-to-nn} 
Let $\theta$ denote the parameters at the end of the learning process.
Then, for a fixed data set and parameters  initialized as $\theta_0 \sim 
    \mathcal{N}(0, \frac{1}{N} I)$, the variance of the prediction satisfies the inequality, 
 \beq 
  \mbox{Var}_{\theta_0}(h_\theta(x)) \leq C \frac{2 L^2}{N}
 \eeq
where $L$ is the Lipschitz constant of the prediction with respect to $\theta$, and for some universal constant $C >O$. 
\end{theorem}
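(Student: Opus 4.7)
My plan is to use Assumptions 1 and 2 to collapse all randomness in the final prediction down to a Gaussian vector living in the irrelevant subspace $\mathcal{M}^\perp$, and then to apply a Lipschitz-concentration inequality whose scale is controlled by $1/N$.

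\textbf{Reducing the randomness.} Write $\theta_0 = \theta_{0,\mathcal{M}} + \theta_{0,\mathcal{M}^\perp}$ and let $\theta$ denote the parameters at the end of training. Assumption~\ref{assum:opt_invariant} says that optimization does not act on the $\mathcal{M}^\perp$ component, so $\theta_{\mathcal{M}^\perp} = \theta_{0,\mathcal{M}^\perp}$. The second assumption says $\theta_\mathcal{M}$ equals a deterministic vector $\theta^\star_\mathcal{M}$, independent of $\theta_0$. Hence $\theta = \theta^\star_\mathcal{M} + \theta_{0,\mathcal{M}^\perp}$, and all the randomness in $h_\theta(x)$ comes from the Gaussian $Z := \theta_{0,\mathcal{M}^\perp} \sim \mathcal{N}(0, \tfrac{1}{N} I_{\mathcal{M}^\perp})$ living in the $(N-d(N))$-dimensional subspace $\mathcal{M}^\perp$. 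Define $f(Z) := h_{\theta^\star_\mathcal{M} + Z}(x)$; since $h_\theta(x)$ is $L$-Lipschitz in $\theta$ and the inclusion $\mathcal{M}^\perp \hookrightarrow \mathbb{R}^N$ does not expand distances, $f$ is $L$-Lipschitz on $\mathcal{M}^\perp$.

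\textbf{Applying concentration.} The task is now to bound $\Var(f(Z))$ for $Z$ Gaussian with per-coordinate variance $1/N$ and $f$ $L$-Lipschitz. The Gaussian Poincar\'e inequality already gives $\Var(f(Z)) \leq L^2/N$; to land on the stated form with the universal constant $C$ and factor of $2$, I would instead route through Levy's lemma as the hint suggests. Condition on the radius $R := \|Z\|$: given $R = r$, $Z/r$ is uniform on the unit sphere of $\mathcal{M}^\perp$, so Levy's lemma makes $f(Z)$ sub-Gaussian conditionally on $R$ with variance proxy bounded by a universal constant times $r^2 L^2 / (N-d(N))$. Since $\E[R^2] = (N-d(N))/N \le 1$ and $R^2$ is sharply concentrated around its mean, combining this conditional bound with the law of total variance (to absorb the small fluctuations of $R$ itself) produces
\[
\Var_{\theta_0}(h_\theta(x)) \;=\; \Var(f(Z)) \;\leq\; C \cdot \frac{2 L^2}{N}
\]
for a universal constant $C$, matching the statement.

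\textbf{Main obstacle.} The two assumptions do most of the conceptual work: they reduce a question about optimization-induced variance of a trained network to a question about a Lipschitz function of a narrow Gaussian. The remaining difficulty is bookkeeping with the Levy-lemma route: carefully tracking the radius of the relevant sphere as $N - d(N)$ rather than $N$, and handling the extra contribution from the (small) variance of $R$, so that the final constants collapse into the form $C \cdot 2L^2/N$. If one is willing to state the slightly cleaner bound $L^2/N$, the Gaussian Poincar\'e inequality bypasses this bookkeeping entirely.
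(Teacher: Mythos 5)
Your proposal follows essentially the same route as the paper's proof: use Assumptions~1 and~2 to write $\theta = \theta_\mathcal{M}^* + \theta_{0,\mathcal{M}^\perp}$ so that all randomness in $h_\theta(x)$ collapses to a Gaussian on $\mathcal{M}^\perp$ with per-coordinate variance $1/N$, then invoke dimension-free Lipschitz--Gaussian concentration and convert the tail bound into a variance bound. The one place you diverge is in how the Gaussian concentration inequality itself is obtained: the paper simply cites it as a known consequence of Levy's lemma, whereas you sketch a conditioning-on-the-radius derivation; that step is more work and is the only part of your argument that remains a rough sketch, but it is not load-bearing since the Gaussian version is a standard packaged result. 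Your aside that the Gaussian Poincar\'e inequality gives $\Var(f(Z)) \le L^2/N$ directly is a genuine simplification worth flagging -- it bypasses the tail-integration corollary entirely and yields a cleaner, constant-free bound of the same $O(L^2/N)$ order, which is all the theorem's conclusion (variance vanishes when $L = o(\sqrt{N})$) actually needs.
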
 
This result guarantees that the variance decreases to zero as $N$ increases, provided 
the Lipschitz constant $L$ grows more slowly than the square root of dimension,
$L=o(\sqrt{N})$.

\subsubsection{Variance due to sampling}

Under the above assumptions,  the parameters at the end of learning take the form $\theta = \theta_\mathcal{M}^* + \theta_{0 \mathcal{M}^\perp}$.  For fixed initialization, the only source of variance of the prediction is the randomness of   $\theta_\mathcal{M}^*$ on the learning manifold. The variance depends on the parameter dimensionality only through 
\mbox{$\dim \mathcal{M} = d(N)$}, and hence remains constant if $d(N)$ does (see \citet{Li18IntDim}'s ``intrinsic dimension'').

\paragraph{Discussion on assumptions}
\label{sec:discussion_deep_net_assumptions}

We made strong assumptions,  but there is some support for them in the literature. The existence of a subspace $\mathcal{M}_\perp$ in which no learning occurs was also conjectured by \citet{Advani2017HighdimensionalDO} and  shown to hold in linear neural networks under a simplifying assumption that decouples the dynamics of the weights in different layers. \citet{Li18IntDim} 
empirically showed the existence of a critical number $d(N) = d$ of relevant parameters for a given learning task, independent of the  size of the model. 
\citet{Sagun17} showed that the spectrum of the Hessian for over-parameterized networks splits into $(i)$ a bulk centered near zero and $(ii)$ a small number of large eigenvalues; and  \citet{Gur-Ari2018} recently gave evidence that the small subspace spanned by the Hessian's top eigenvectors is preserved over long periods of training. These results suggest that learning occurs mainly in a small number of directions. 


\section{CONCLUSION AND FUTURE WORK}


We provide evidence against \citet{geman}'s claim that ``the price to pay for achieving low bias is high variance,'' finding that \emph{both} bias \emph{and} variance decrease with network width. \citet{geman}'s claim is found throughout machine learning and is meant to generally apply to all of machine learning (\cref{app:intuitions}), and it is correct in many cases (e.g.\ kNN, kernel regression, splines). Is this lack of a tradeoff specific to neural networks or is it present in other models as well such as decision trees?

We propose a new decomposition of the variance, finding variance due to sampling (analog of regular variance in simple settings) does not appear to be dependent on width, once sufficiently over-parameterized, and that variance due to optimization decreases with width. 
By taking inspiration from linear models, we perform a theoretical analysis of the variance that is consistent with our empirical observations.

We view future work that uses the bias-variance lens as promising. For example, a probabilistic notion of effective capacity of a model is natural when studying generalization through this lens (\cref{app:prob_capacity}).  We did not study how bias and variance change over the course of training; that would make an interesting direction for future work. We also see further theoretical treatment of variance as a fruitful direction for better understanding complexity and generalization abilities of neural networks.

\bibliography{bibliography}
\bibliographystyle{custom_icml2019}

\onecolumn



\newpage

\begin{appendices}
\crefalias{section}{appsec}
\crefalias{subsection}{appsec}

\section{Probabilistic notion of effective capacity}
\label{app:prob_capacity}

The problem with classical complexity measures is that they do not take into account optimization and have no notion of what will actually be learned. \citet[Section 1]{arpit17} define a notion of an \emph{effective} hypothesis class to take into account what functions are possible to be learned by the learning algorithm.

However, this still has the problem of not taking into account what hypotheses are \emph{likely} to be learned. To take into account the probabilistic nature of learning, we define the $\epsilon$-\textit{hypothesis class} for a data distribution $\D$ and learning algorithm $\A$, that contains the hypotheses which are at least $\epsilon$-likely for some $\epsilon > 0$:
\begin{equation}
    \mathcal{H}_{\D}(\A) = \{h : p(h(\A, S)) \geq \epsilon\},
    \label{eqn:def-eps-hypothesis-class}
\end{equation}

where $S$ is a training set drawn from $\D^m$, $h(\A, S)$ is a random variable drawn from the distribution over learned functions induced by $\D$ and the randomness in $\A$; $p$ is the corresponding density. Thinking about a model's $\epsilon$-hypothesis class can lead to drastically different intuitions for the complexity of a model and its variance (\cref{fig:var_spectrum}). This is at the core of the intuition for why the traditional view of bias-variance as a tradeoff does not hold in all cases.

\begin{figure}[h]
 \centering
 \includegraphics[width=.9\textwidth]{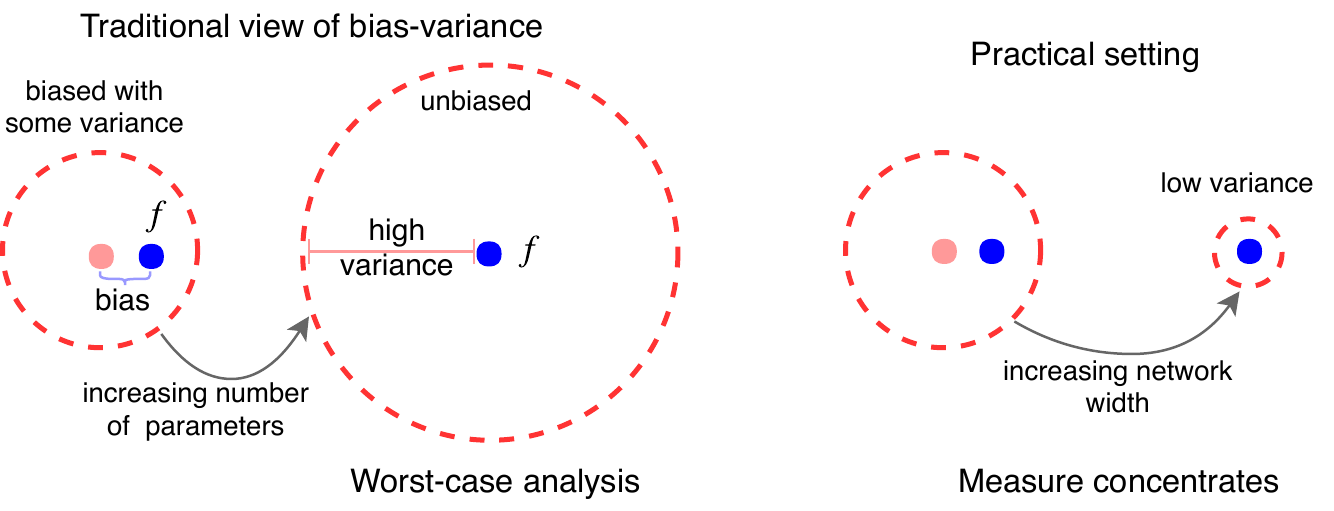}
 \caption{The dotted red circle depicts a cartoon version of the $\epsilon$-hypothesis class of the learner. The left side reflects common intuition, as informed by the bias-variance tradeoff and worst-case analysis from statistical learning theory. The right side reflects our view that variance can decrease with network width.}
 \label{fig:var_spectrum}
\end{figure}

\newpage
\section{Additional empirical results and discussion}
\label{app:empirical}





\subsection{CIFAR10}
\label{app:CIFAR10_width}

\begin{figure}[H]
    \centering
    \begin{subfigure}[t]{0.48\textwidth}
        \centering
        \includegraphics[width=\textwidth]{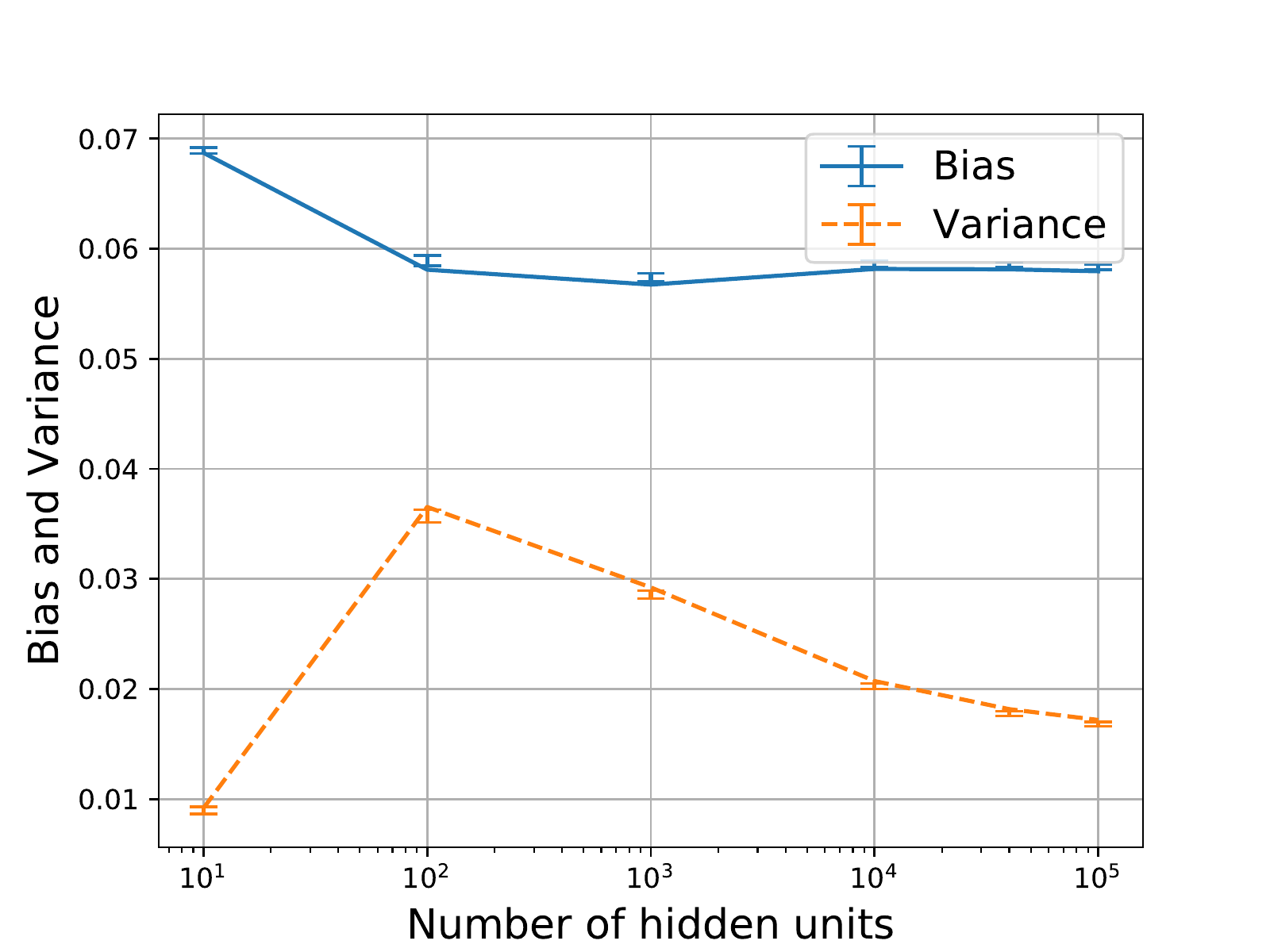}
    \end{subfigure}
    \hfill
    \begin{subfigure}[t]{0.48\textwidth}
        \centering
        \includegraphics[width=\textwidth]{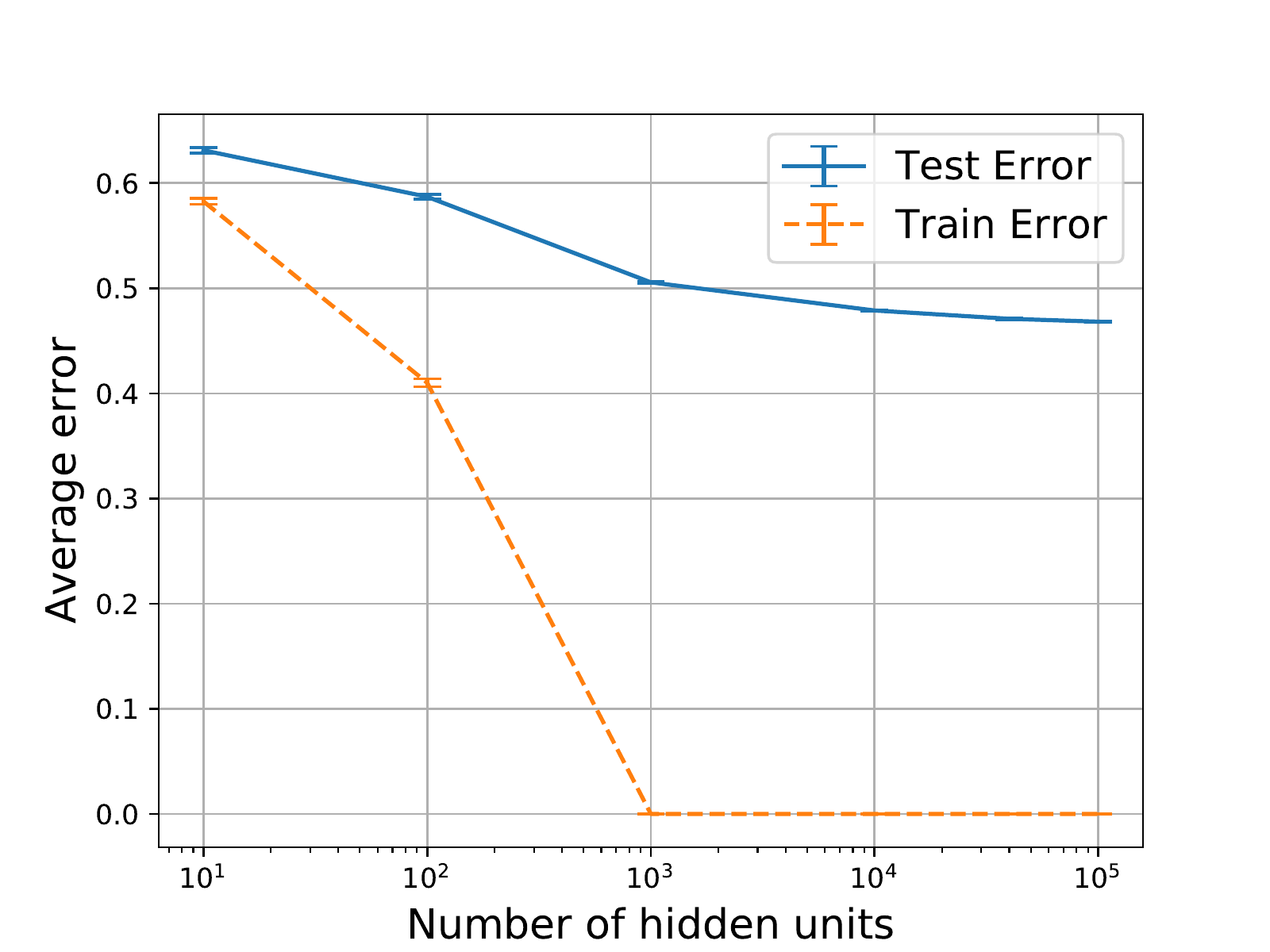}
    \end{subfigure}
    \caption{Bias-variance plot (left) and corresponding train and test error (right) for CIFAR10 after training for 150 epochs with step size 0.005 for all networks.}
\end{figure}

\begin{figure}[H]
    \centering
    \begin{subfigure}[t]{0.48\textwidth}
        \centering
        \includegraphics[width=\textwidth]{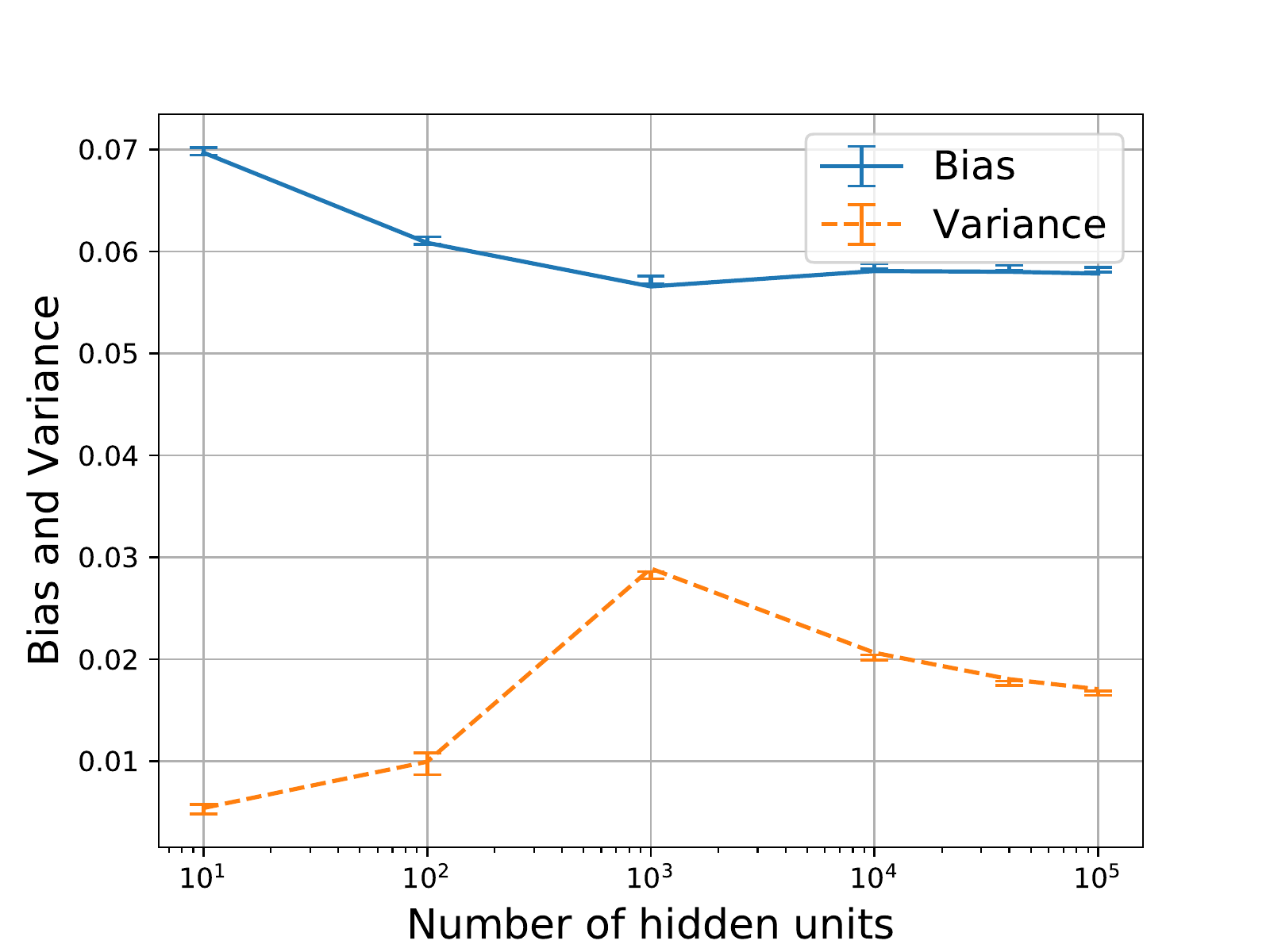}
    \end{subfigure}
    \hfill
    \begin{subfigure}[t]{0.48\textwidth}
        \centering
        \includegraphics[width=\textwidth]{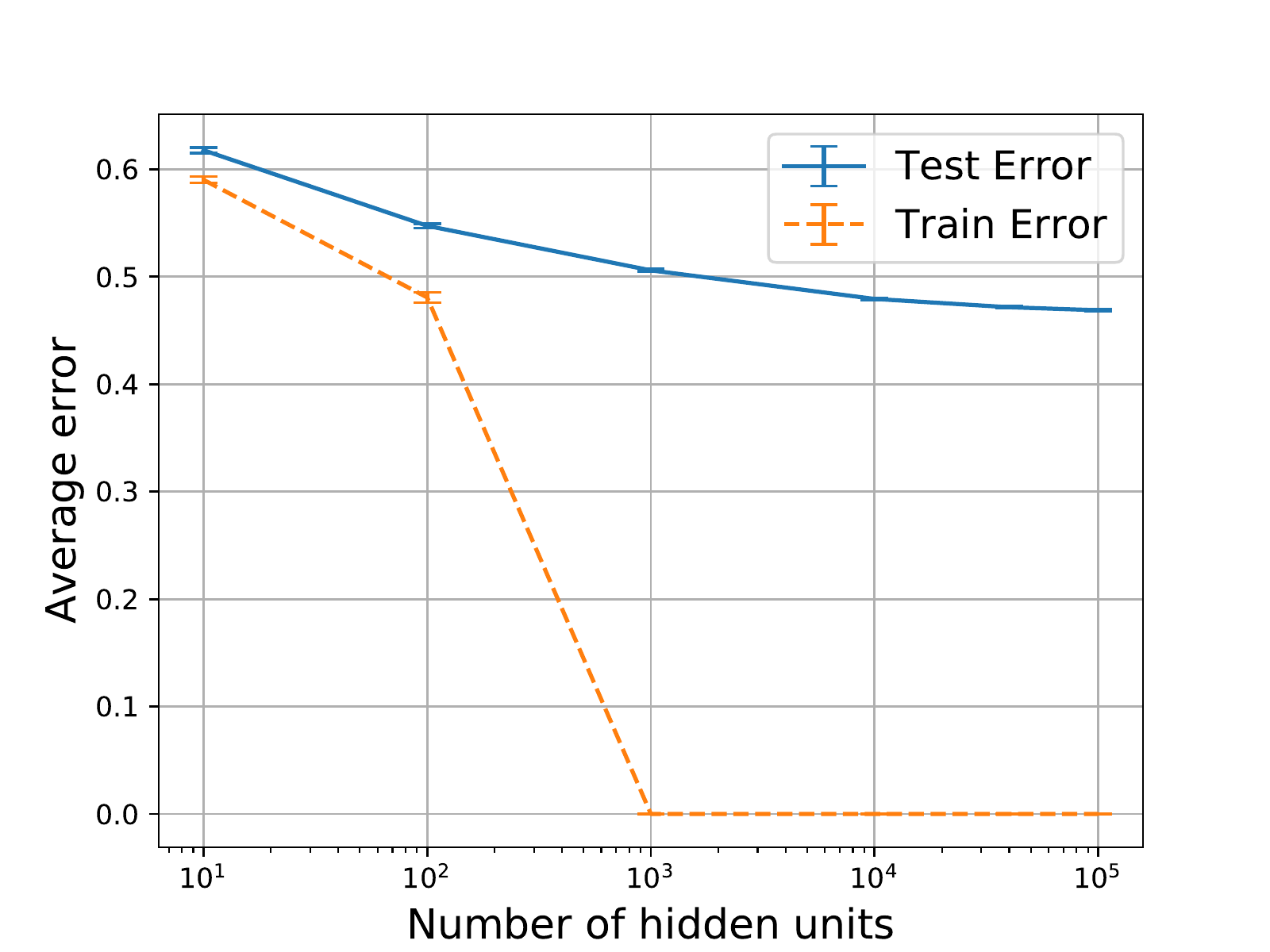}
    \end{subfigure}
    \caption{Bias-variance plot (left) and corresponding train and test error (right) for CIFAR10 after training for using \emph{early stopping} with step size 0.005 for all networks.}
\end{figure}

\subsection{SVHN}
\label{app:SVHN_width}

\begin{figure}[H]
    \centering
    \begin{subfigure}[t]{0.48\textwidth}
        \centering
        \includegraphics[width=\textwidth]{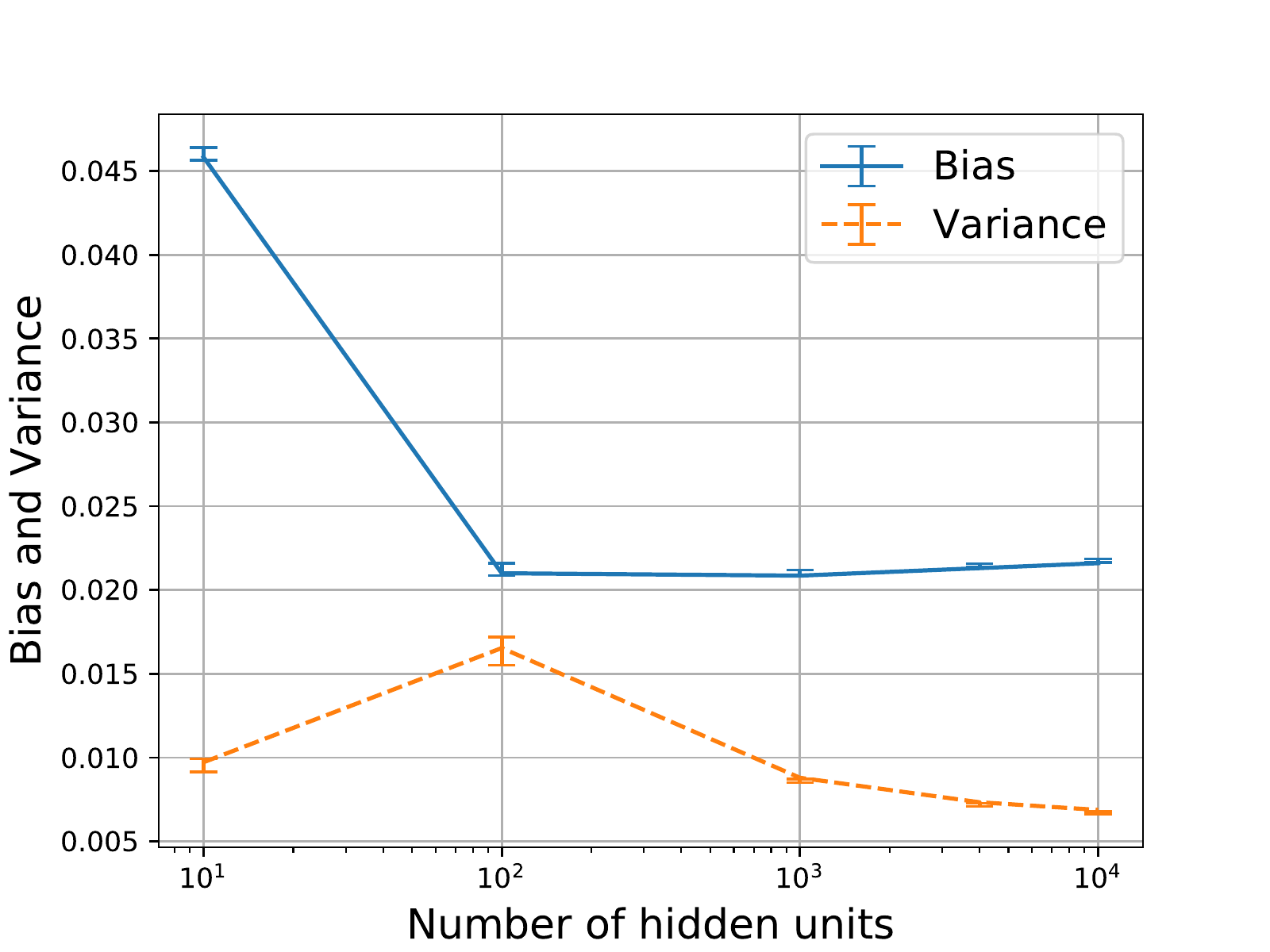}
    \end{subfigure}
    \hfill
    \begin{subfigure}[t]{0.48\textwidth}
        \centering
        \includegraphics[width=\textwidth]{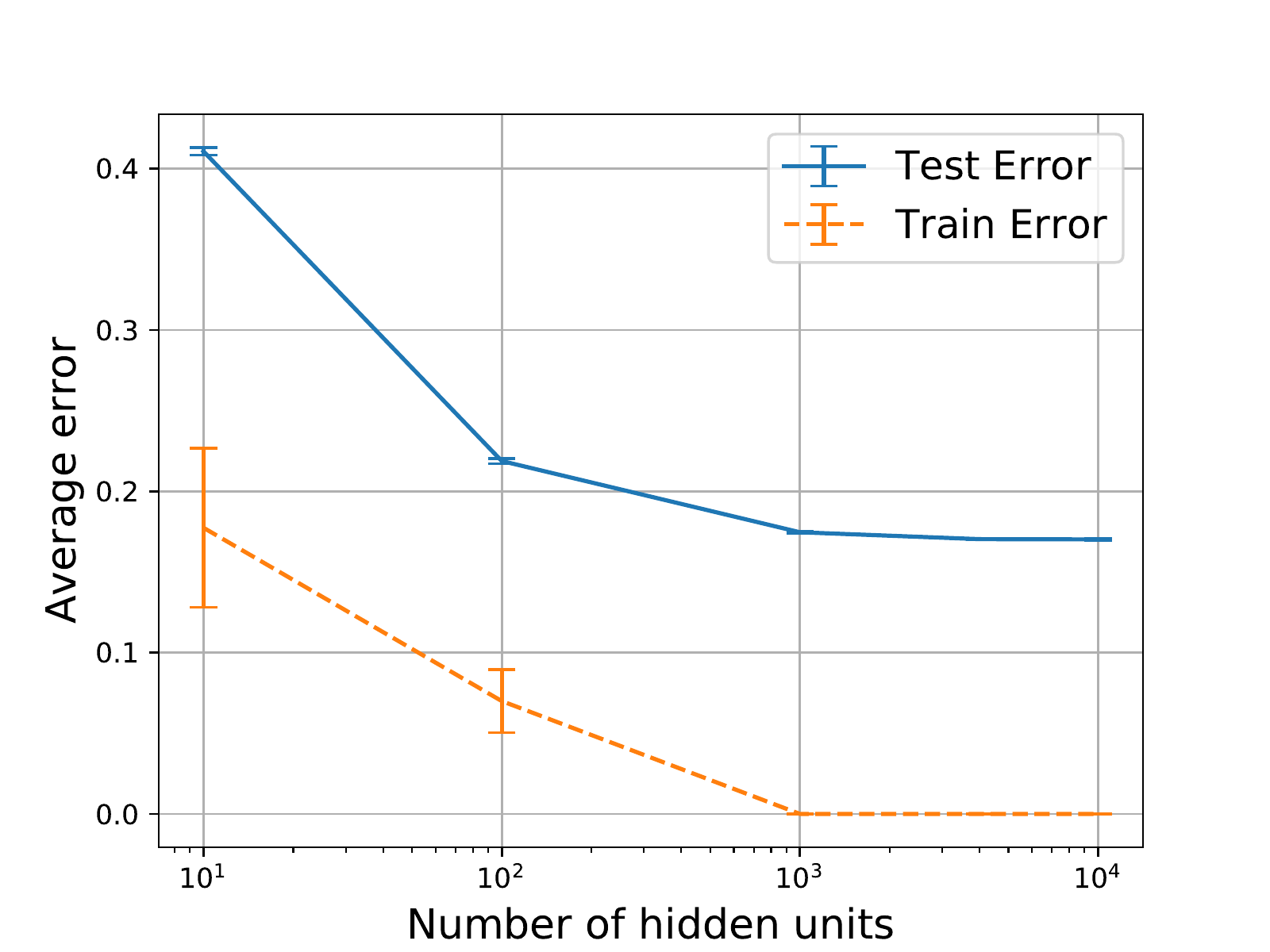}
    \end{subfigure}
    \caption{Bias-variance plot (left) and corresponding train and test error (right) for SVHN after training for 150 epochs with step size 0.005 for all networks.}
\end{figure}

\subsection{MNIST}
\label{app:MNIST_width}

\begin{figure}[H]
    \centering
    \begin{subfigure}[t]{0.48\textwidth}
        \centering
        \includegraphics[width=\textwidth]{figures/full_data_width/MNIST/bias-variance_long}
    \end{subfigure}
    \begin{subfigure}[t]{0.48\textwidth}
        \centering
         \includegraphics[width=\textwidth]{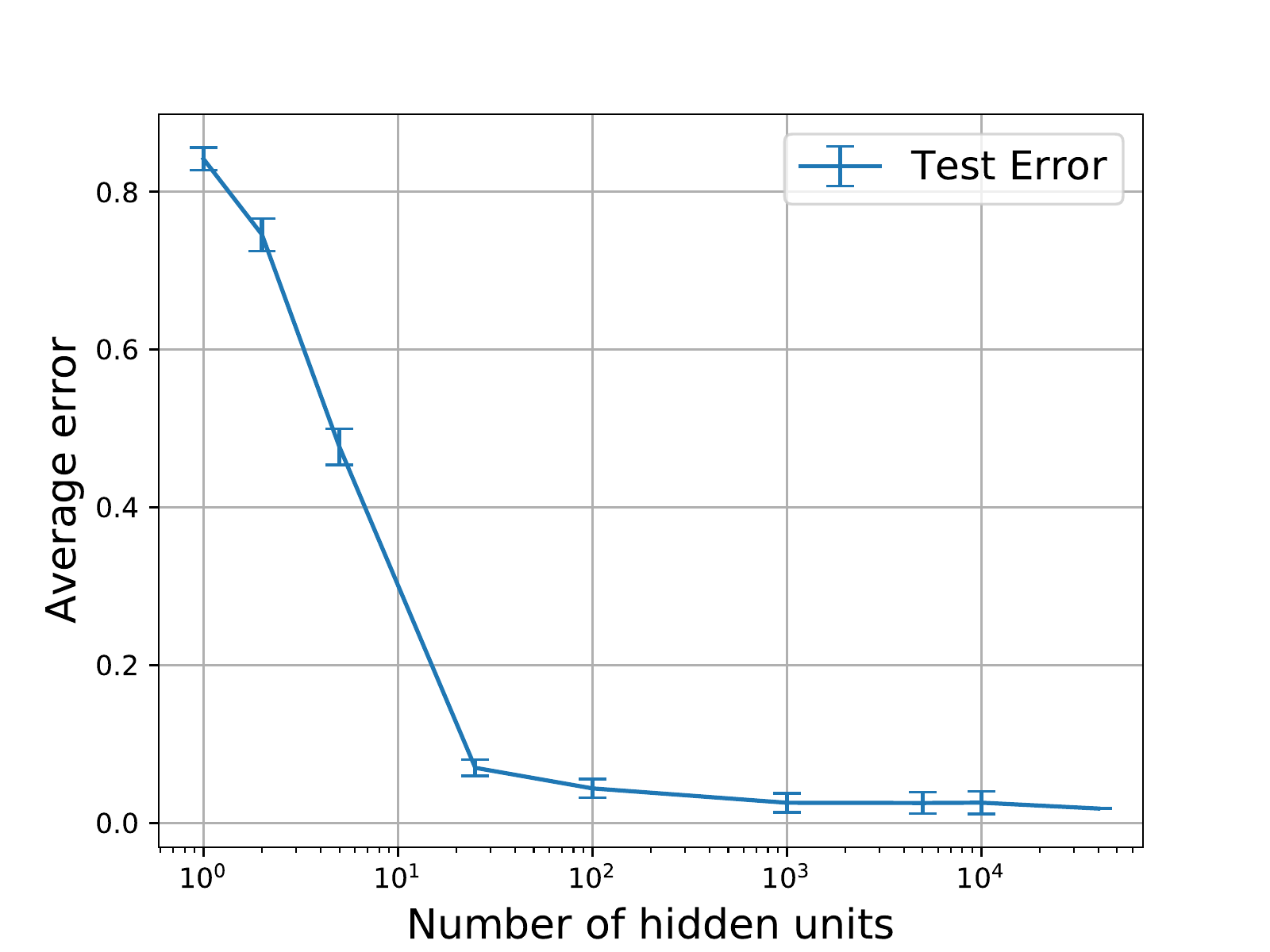}
    \end{subfigure}
    \caption{MNIST bias-variance plot from main paper (left) next to the corresponding test error (right)}
\end{figure}

\begin{figure}[H]
    \centering
    \includegraphics[width=.48\textwidth]{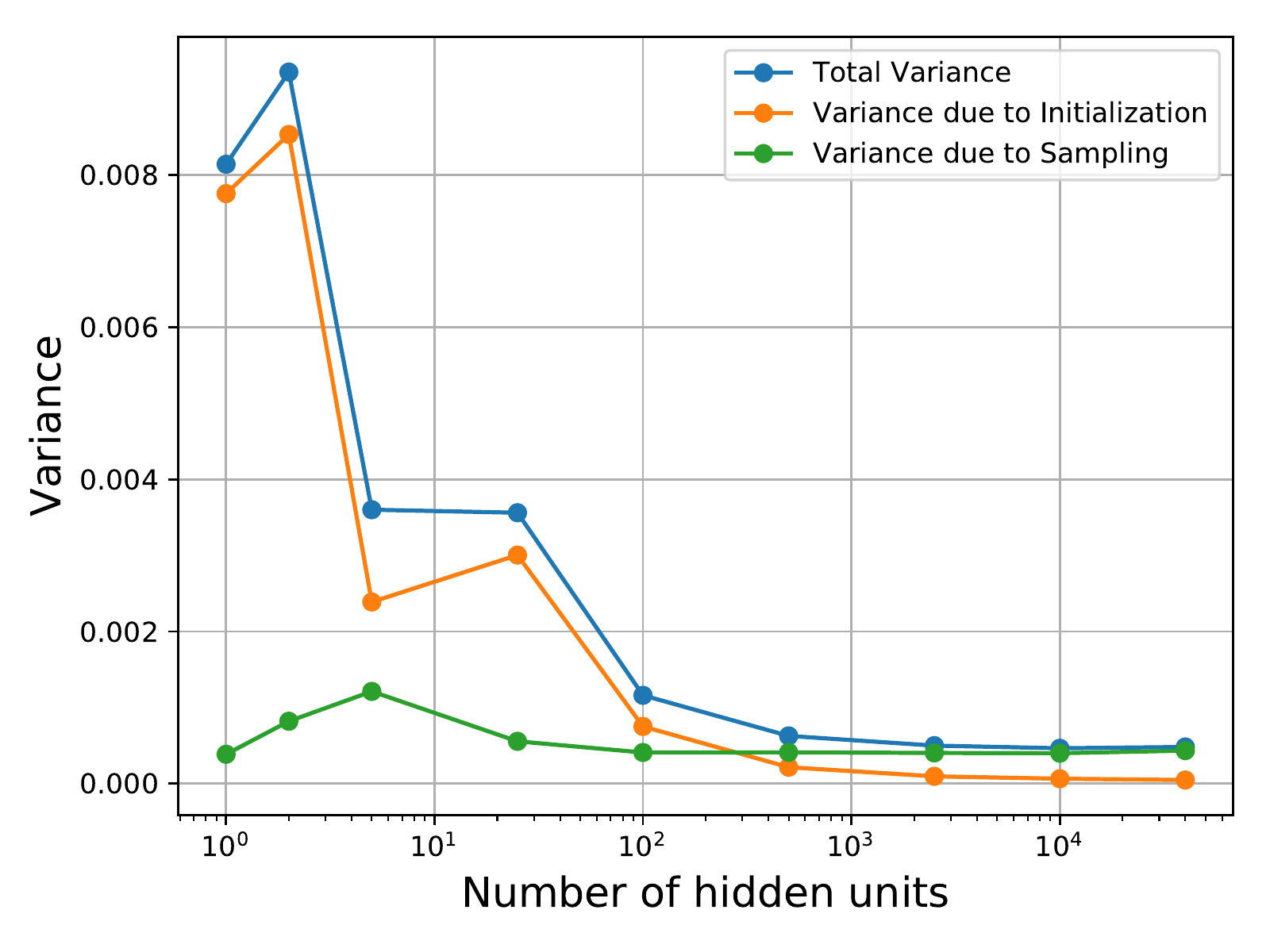}
    \caption{Decomposed variance on MNIST}
\end{figure}

\subsection{Tuned learning rates for SGD}
\label{app:tuned_lr}

\begin{figure}[H]
    \centering
    \begin{subfigure}[t]{0.48\textwidth}
        \centering
        \includegraphics[width=\textwidth]{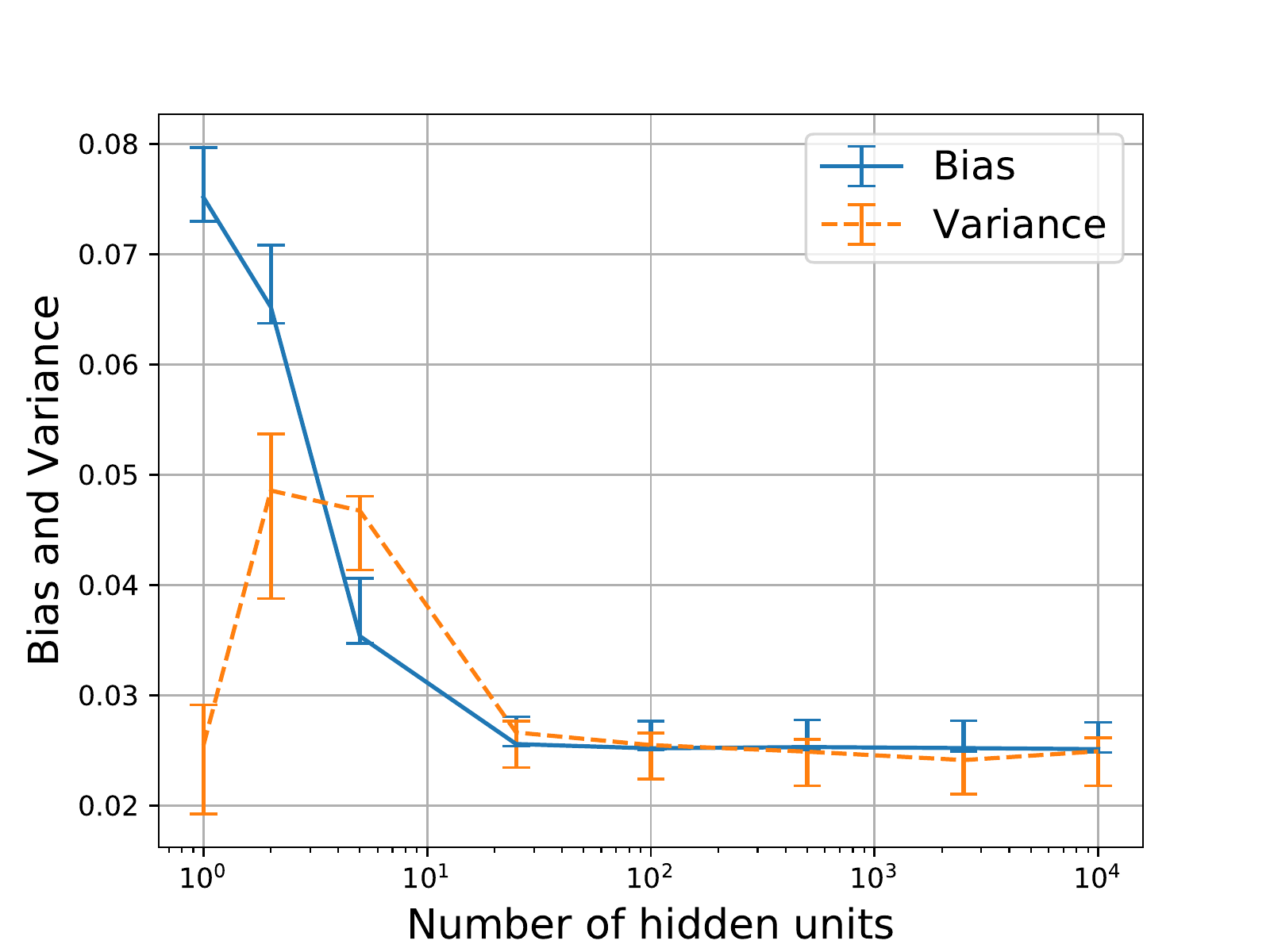}
        \caption{Variance decreases with width, even in the small data setting (SGD). This figure is in the main paper, but we include it here to compare with the corresponding step sizes used.}
    \end{subfigure}
    \hfill
    \begin{subfigure}[t]{0.48\textwidth}
        \centering
        \includegraphics[width=\textwidth]{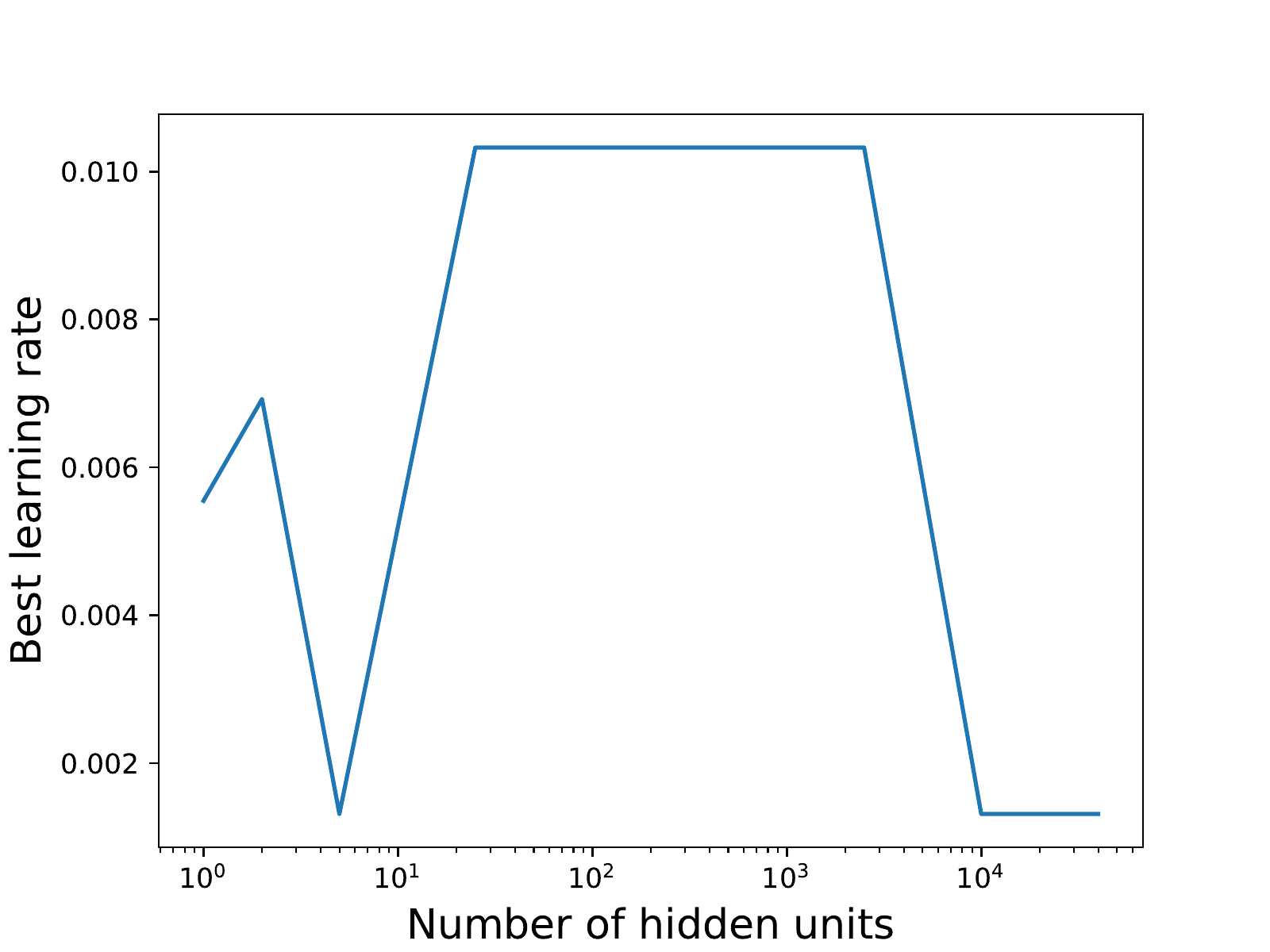}
        \caption{Corresponding optimal learning rates found, by random search, and used.}
    \end{subfigure}
\end{figure}

\subsection{Fixed learning rate results for small data MNIST}
\label{app:fixed_lr}

\begin{figure}[H]
    \centering
    \begin{subfigure}[t]{0.48\textwidth}
        \centering
        \includegraphics[width=\textwidth]{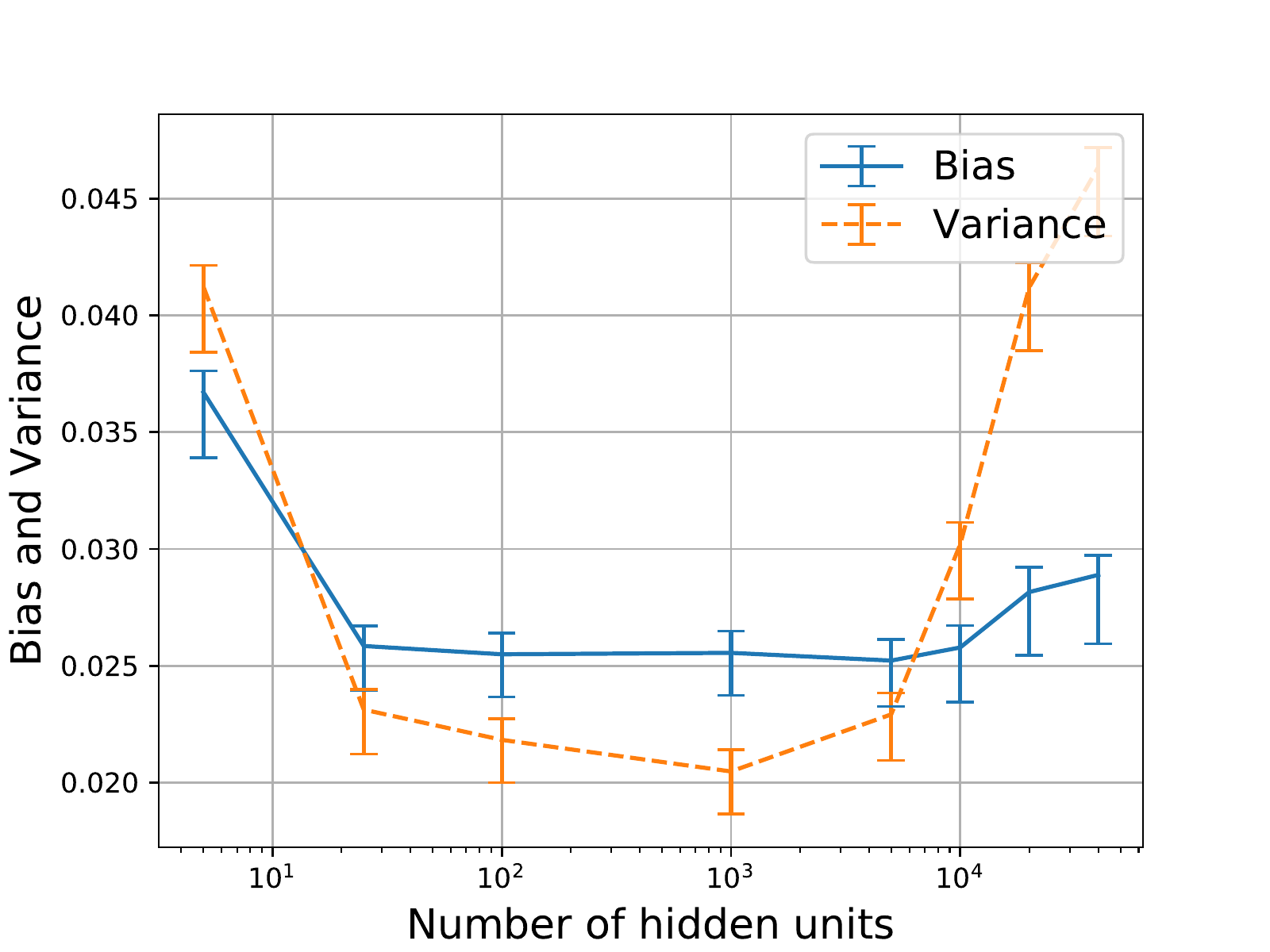}
    \end{subfigure}
    \hfill
    \begin{subfigure}[t]{0.48\textwidth}
        \centering
        \includegraphics[width=\textwidth]{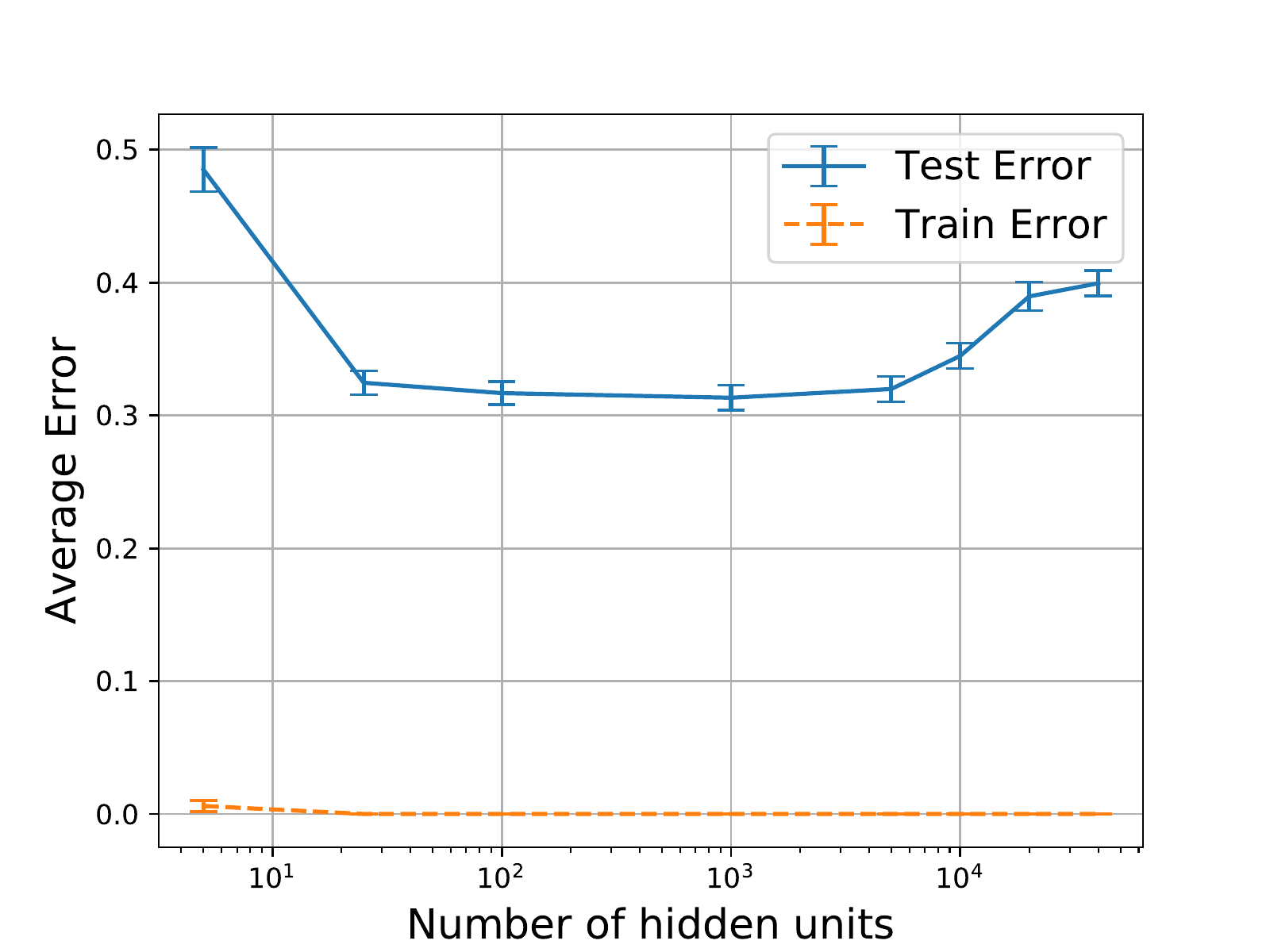}
    \end{subfigure}
    \caption{Variance on small data with a fixed learning rate of 0.01 for all networks.}
    \label{fig:fixed_step_size}
\end{figure}

Note that the U curve shown in \cref{fig:fixed_step_size} when we do not tune the step size is explained by the fact that the constant step chosen is a ``good'' step size for some networks and ``bad'' for others. Results from \citet{Keskar2017}, \citet{smith2018}, and \citet{DBLP:journals/corr/abs-1711-04623} show that a step size that corresponds well to the noise structure in SGD is important for achieving good test set accuracy. Because our networks are different sizes, their stochastic optimization process will have a different landscape and noise structure. By tuning the step size, we are making the experimental design choice to keep \emph{optimality of step size} constant across networks, rather than keeping step size constant across networks. To us, choosing this control makes much more sense than choosing to control for step size. Note that \citet{park2019} show that, as long as the network size is not too big, controlling for ``optimality of step size'' and ``keeping step size constant'' with increasing network width actually correspond to the same thing, as long as the networks are not too big, which fits well with the fact that we used the same step size across network widths for almost all of our experiments.

\subsection{Other optimizers for width experiment on small data MNIST}
\label{app:other_optimizers}

\begin{figure}[H]
    \centering
    \begin{subfigure}[t]{0.48\textwidth}
        \centering
        \includegraphics[width=\textwidth]{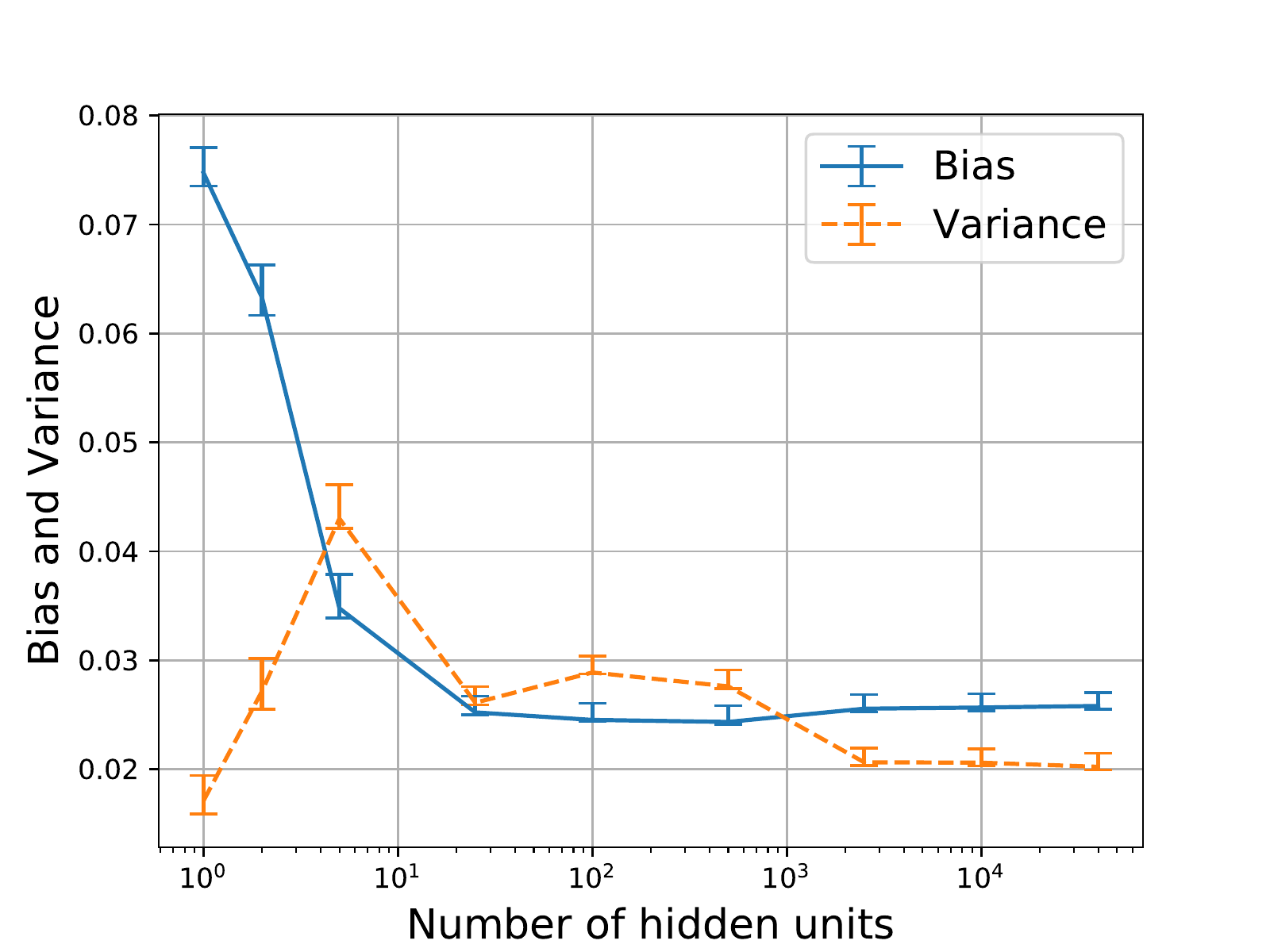}
    \end{subfigure}
    \hfill
    \begin{subfigure}[t]{0.48\textwidth}
        \centering
        \includegraphics[width=\textwidth]{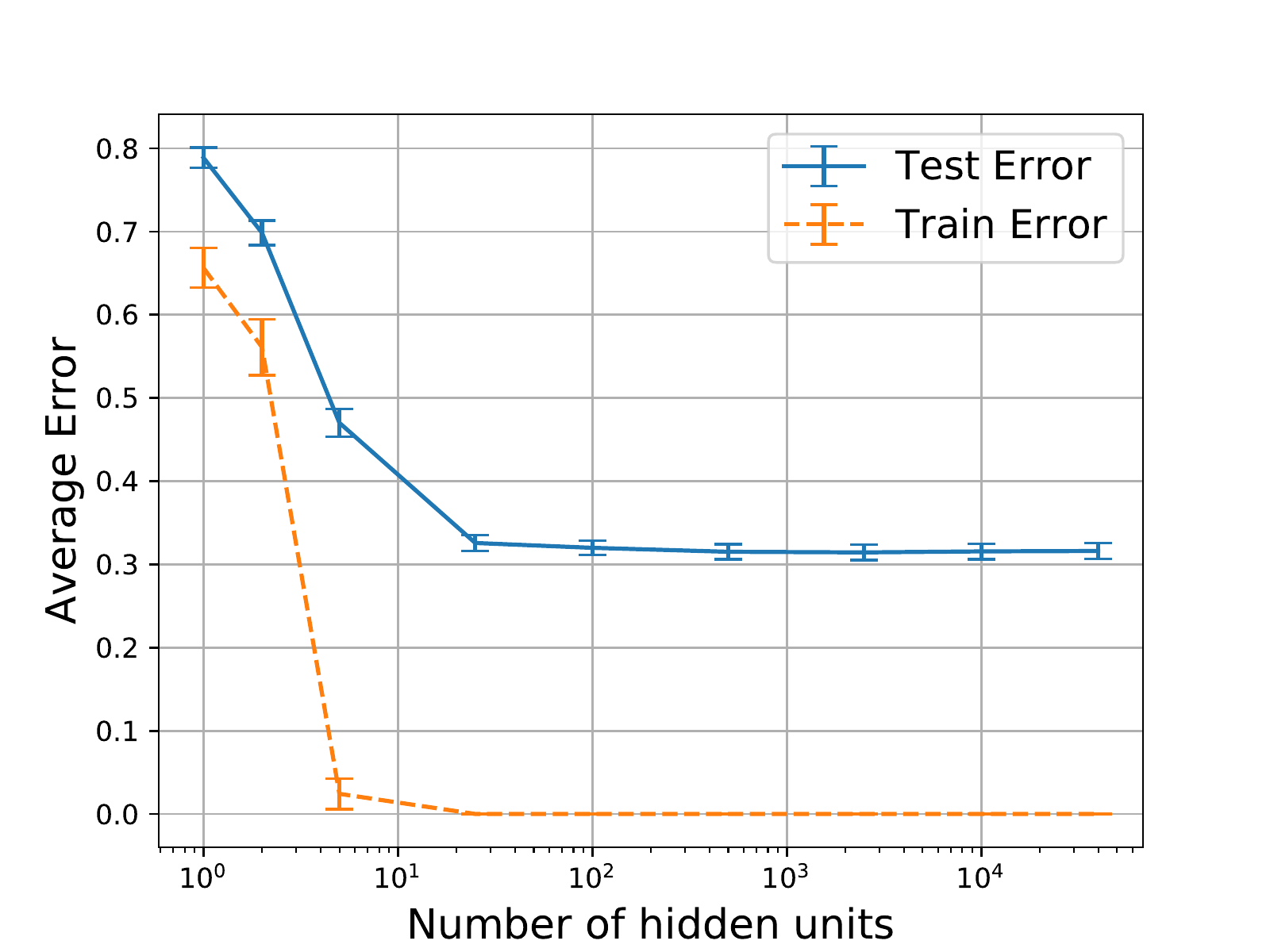}
    \end{subfigure}
    \caption{Variance decreases with width in the small data setting, even when using batch gradient descent.}
\end{figure}

\begin{figure}[H]
    \centering
    \begin{subfigure}[t]{0.48\textwidth}
        \centering
        \includegraphics[width=\textwidth]{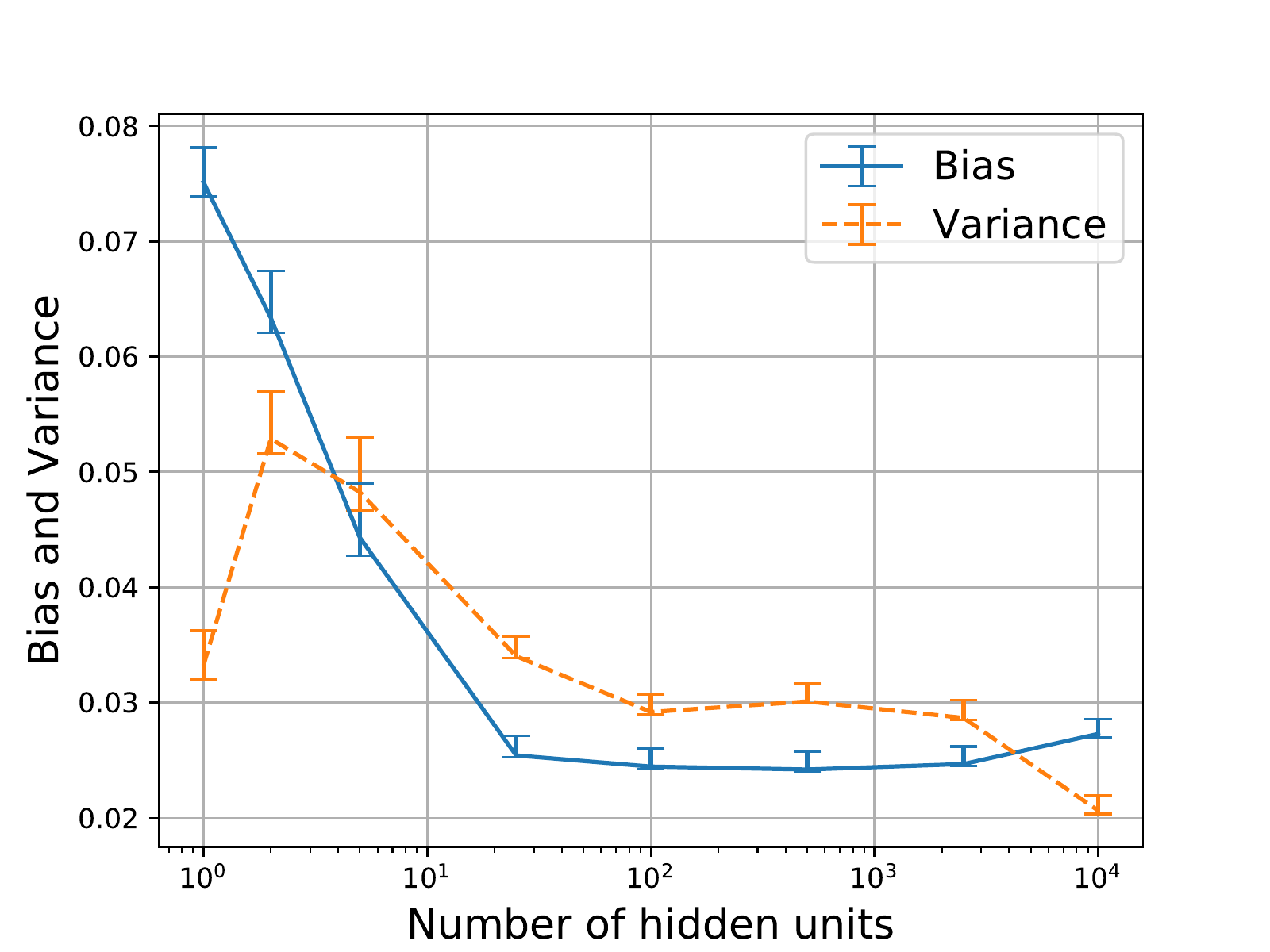}
    \end{subfigure}
    \hfill
    \begin{subfigure}[t]{0.48\textwidth}
        \centering
        \includegraphics[width=\textwidth]{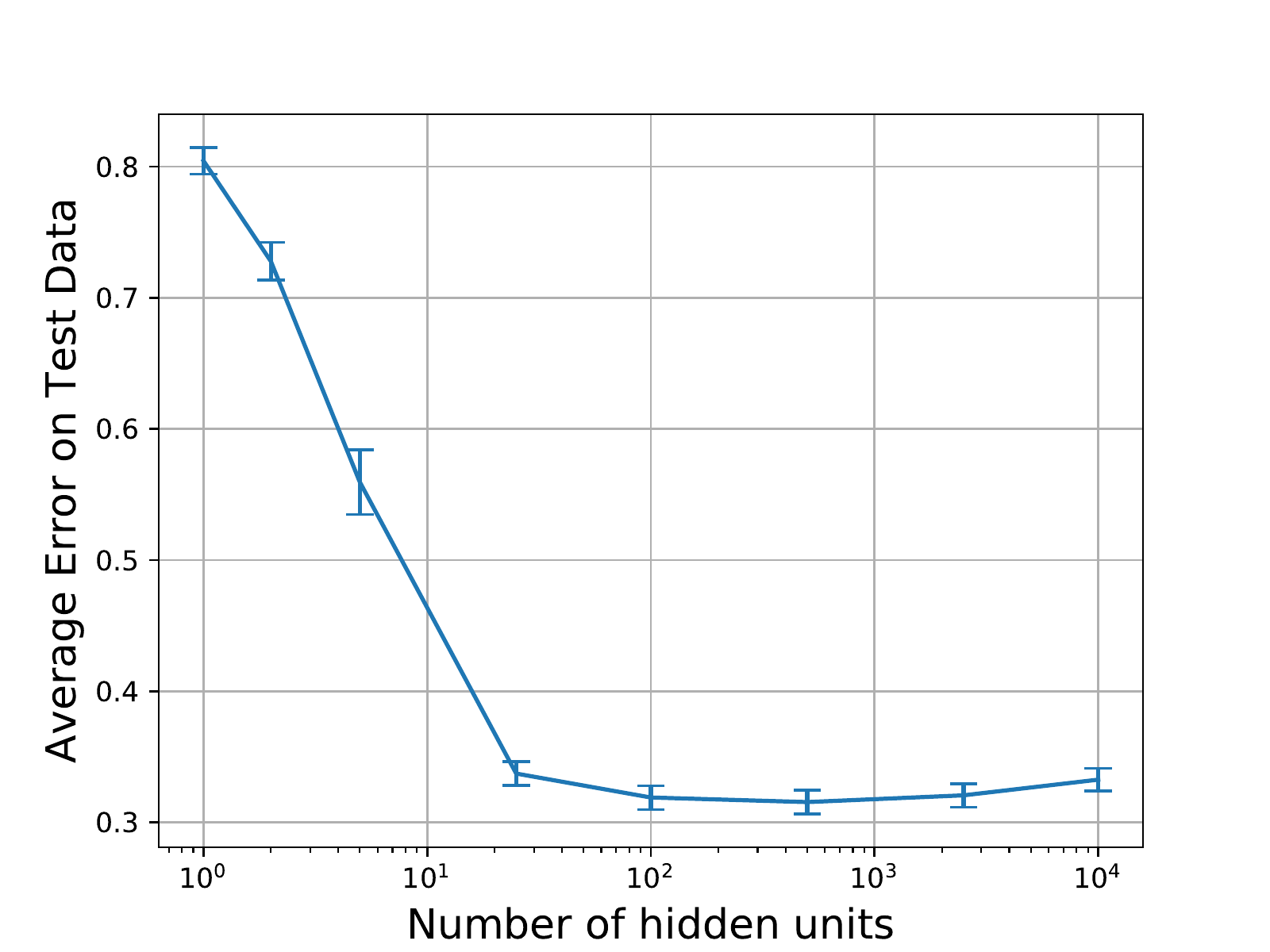}
    \end{subfigure}
    \caption{Variance decreases with width in the small data setting, even when using a strong optimizer, such as PyTorch's LBFGS, as the optimizer.}
\end{figure}

\newpage
\subsection{Sinusoid regression experiments}
\label{app:sinusoid_regression}

\begin{figure}[H]
    \centering
    \begin{subfigure}[t]{0.48\textwidth}
        \centering
        \includegraphics[width=\textwidth]{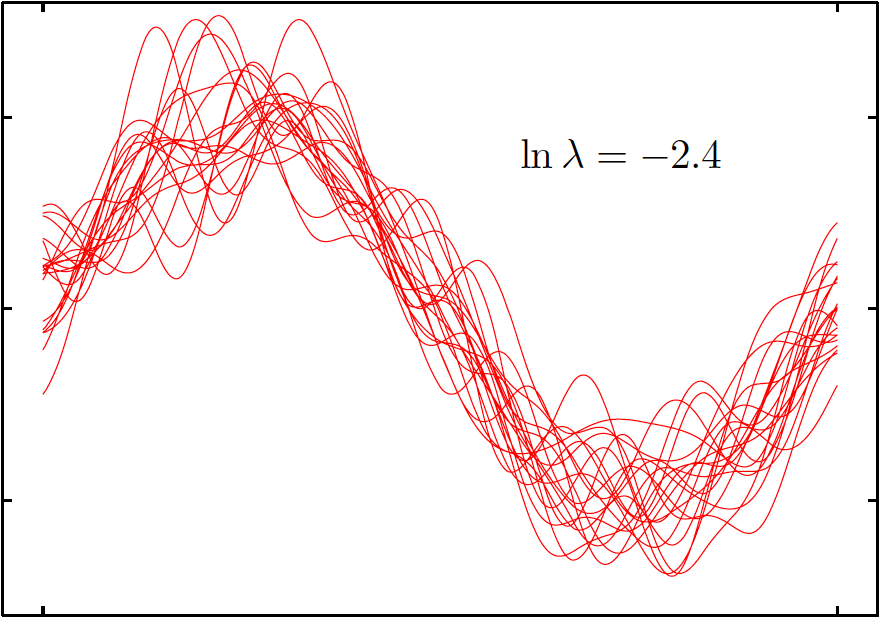}
        \caption{Example of the many different functions learned by a high variance learner \citep[Section 3.2]{Bishop:2006}}
    \end{subfigure}
    \hfill
    \begin{subfigure}[t]{0.48\textwidth}
        \centering
        \includegraphics[width=\textwidth]{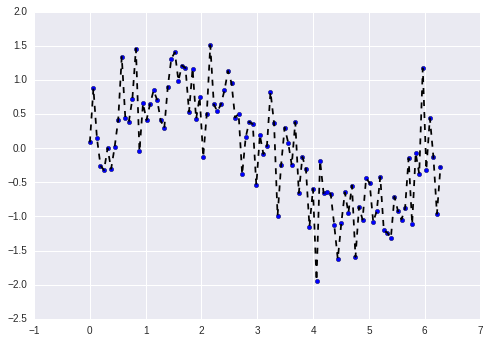}
        \caption{Caricature of a single function learned by a high variance learner \citep{wtf_is_bv}}
    \end{subfigure}
    \caption{Caricature examples of high variance learners on sinusoid task. Below, we find that this does not happen with increasingly wide neural networks (\cref{fig:app_all_learned_sinusoids} and \cref{fig:app_mean_and_var_vis}).}
    \label{fig:high_var_caricature}
\end{figure}

\begin{figure}[H]
    \centering
    \includegraphics[width=.55\textwidth]{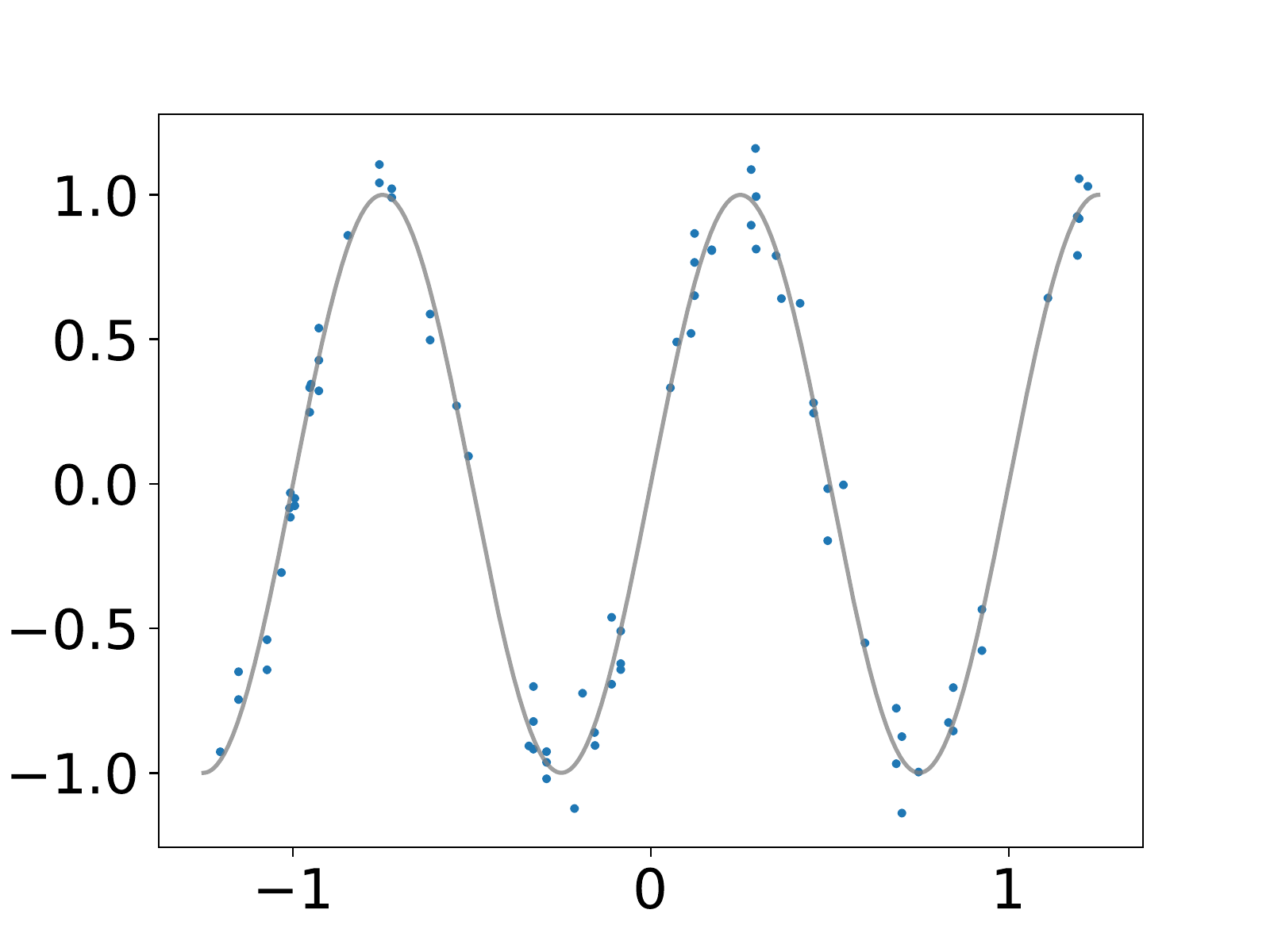}
    \caption{Target function of the noisy sinusoid regression task (in gray) and an example of a training set (80 data points) sampled from the noisy distribution.}
\end{figure}

\begin{figure}[H]
    \centering
    \begin{subfigure}[t]{0.32\textwidth}
        \centering
        \includegraphics[width=\textwidth]{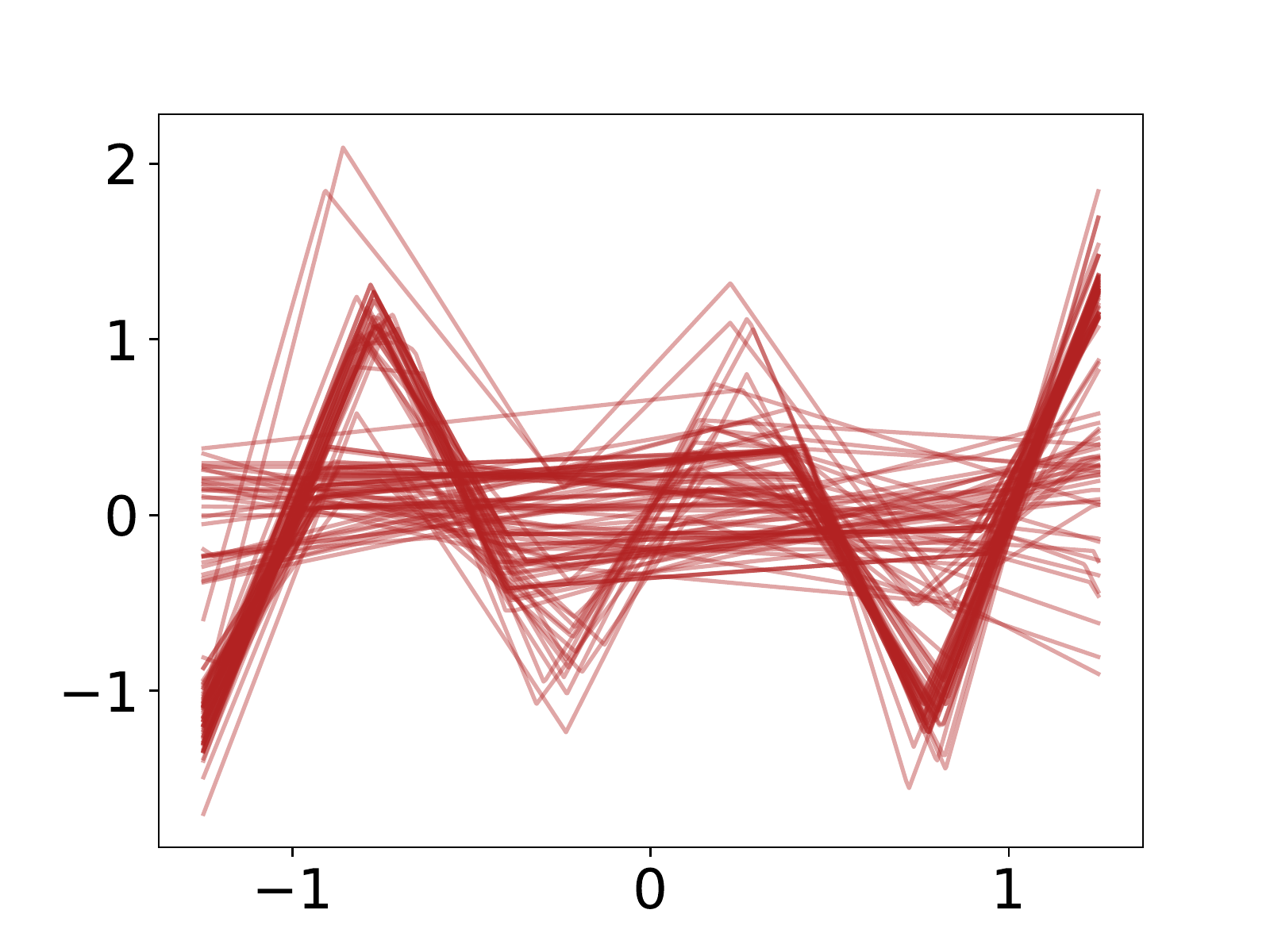}
    \end{subfigure}
    \hfill
    \begin{subfigure}[t]{0.32\textwidth}
        \centering
        \includegraphics[width=\textwidth]{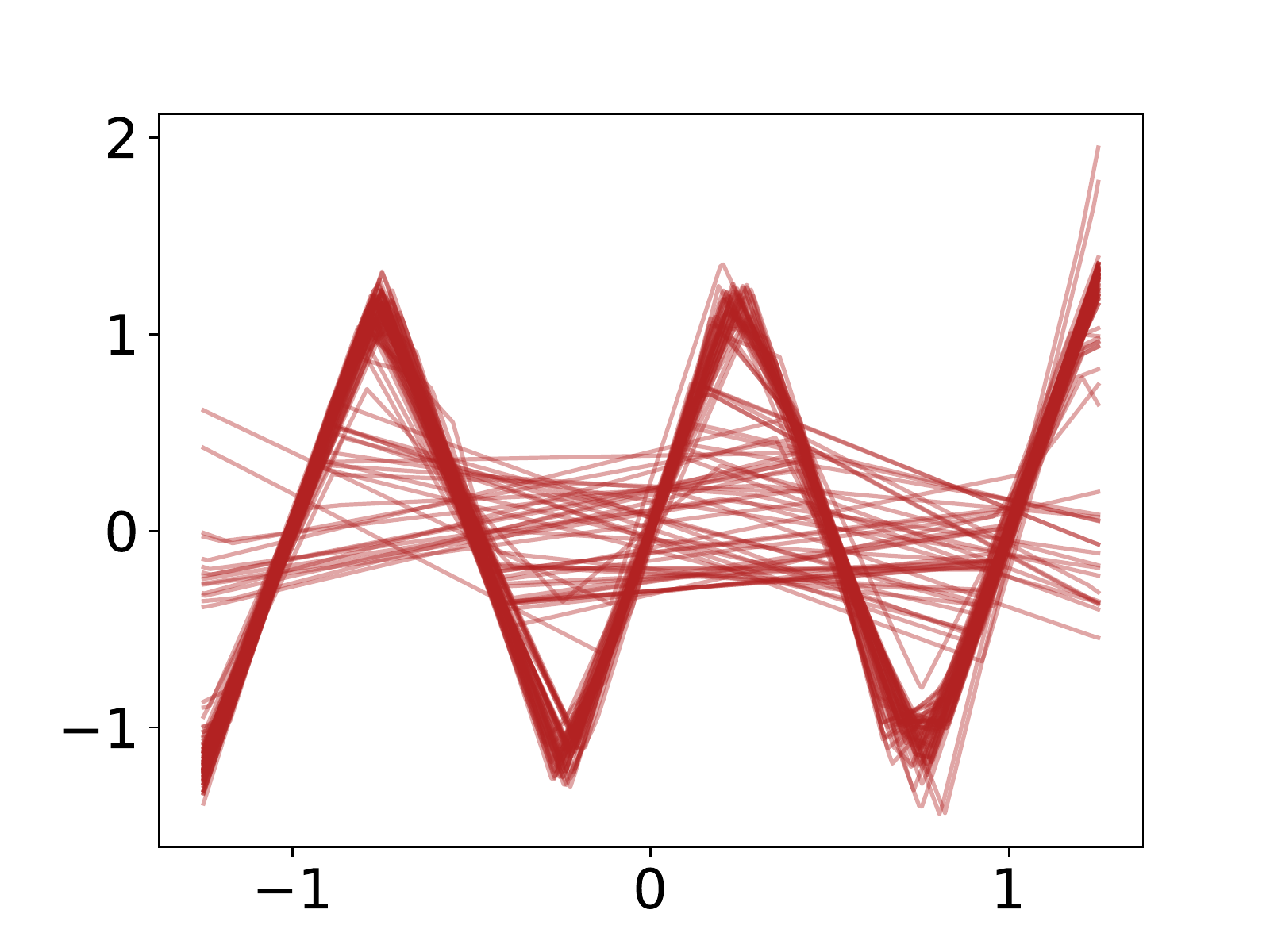}
    \end{subfigure}
    \hfill
    \begin{subfigure}[t]{0.32\textwidth}
        \centering
        \includegraphics[width=\textwidth]{figures/sinusoids/all_functions_vis/width15}
    \end{subfigure}
    
    \begin{subfigure}[t]{0.32\textwidth}
        \centering
        \includegraphics[width=\textwidth]{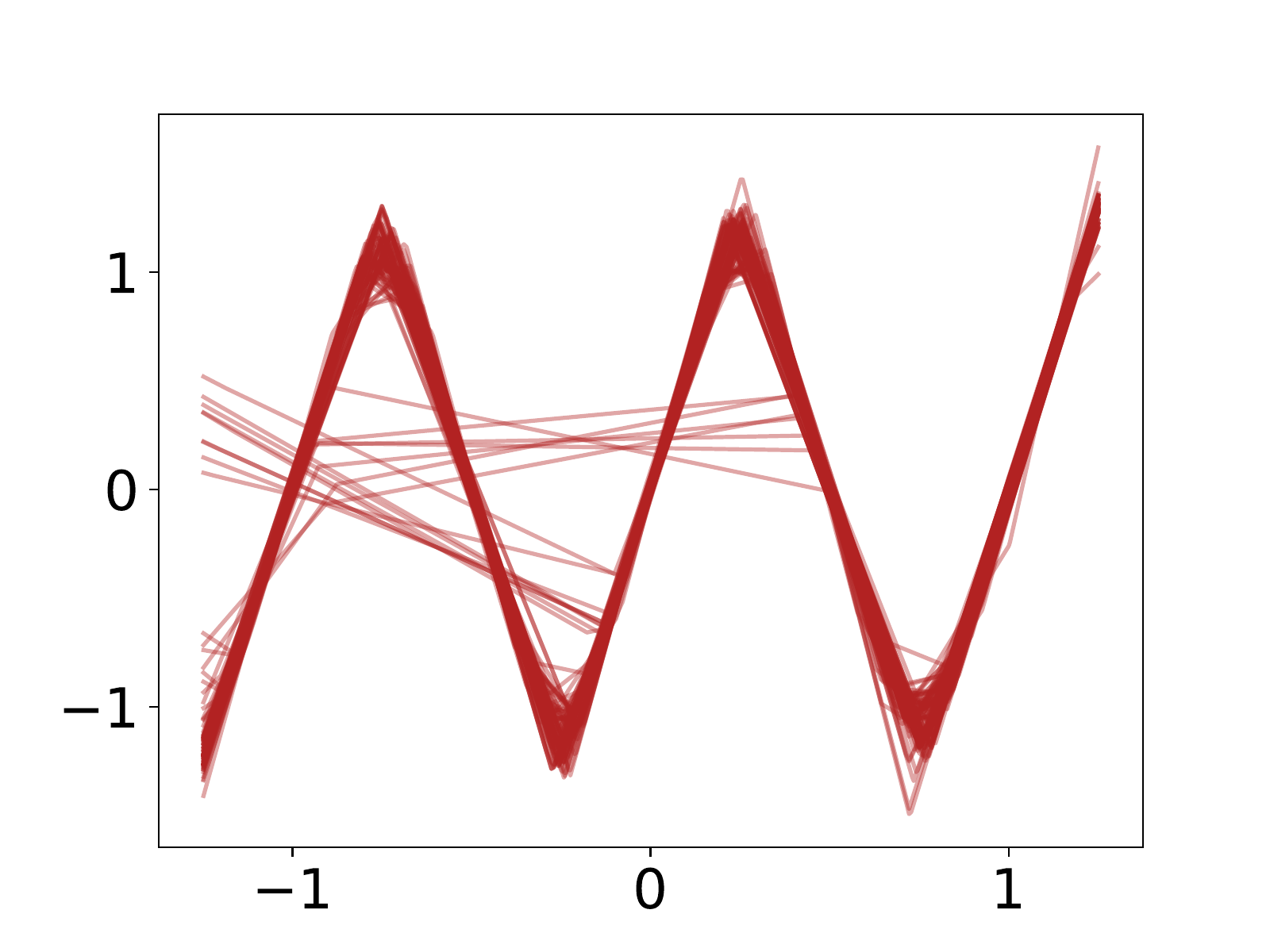}
    \end{subfigure}
    \hfill
    \begin{subfigure}[t]{0.32\textwidth}
        \centering
        \includegraphics[width=\textwidth]{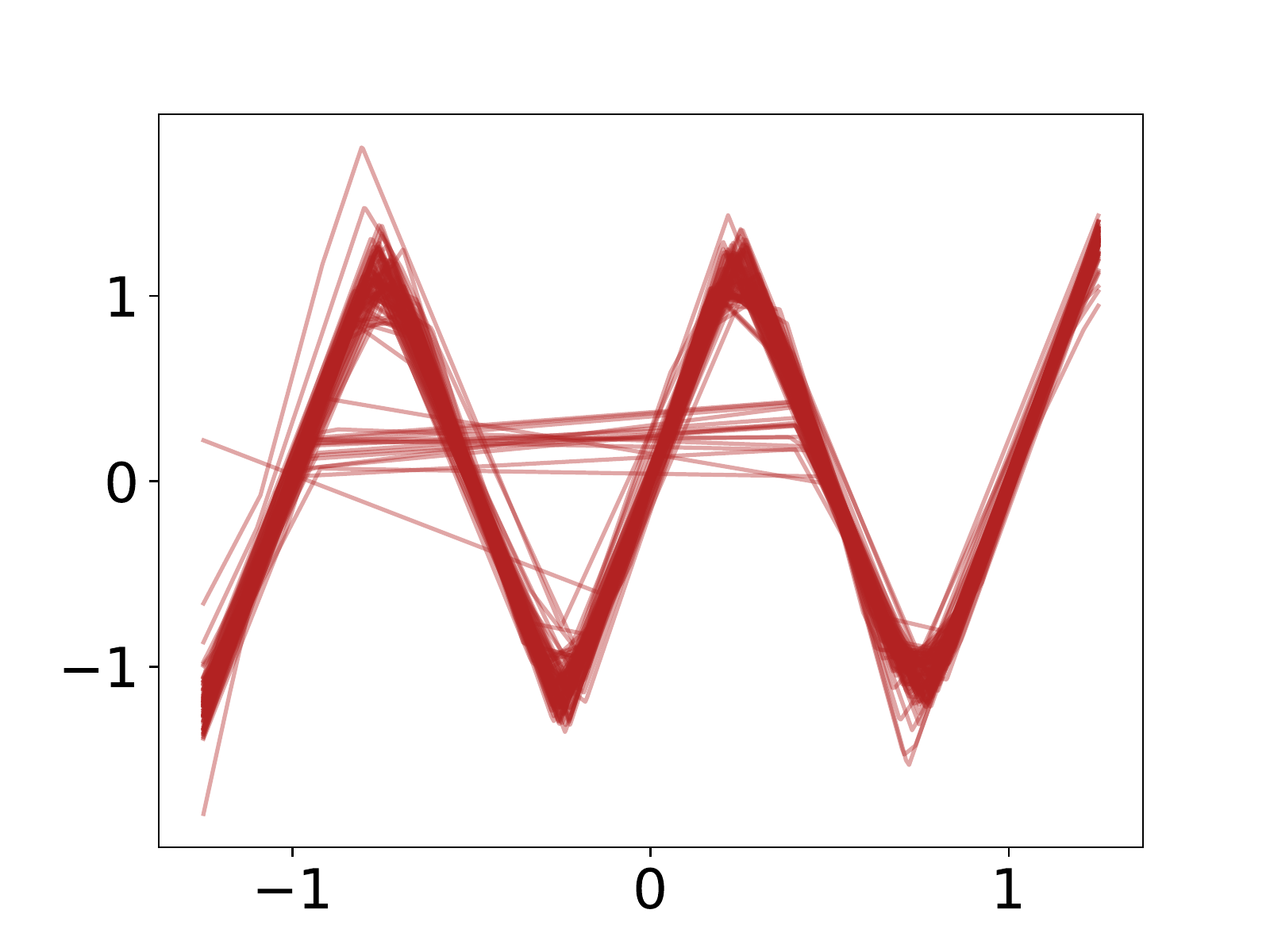}
    \end{subfigure}
    \hfill
    \begin{subfigure}[t]{0.32\textwidth}
        \centering
        \includegraphics[width=\textwidth]{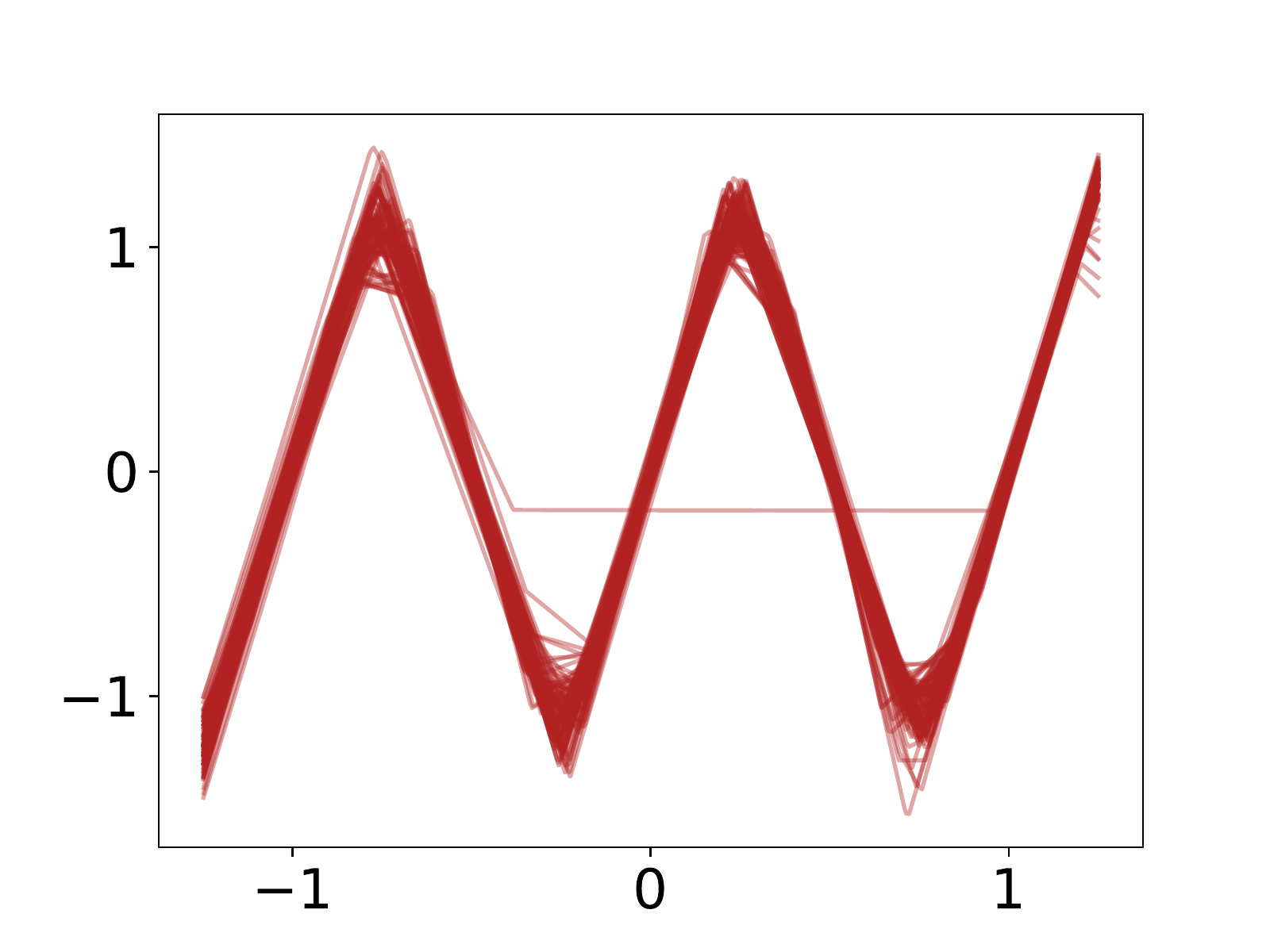}
    \end{subfigure}
    
    \begin{subfigure}[t]{0.32\textwidth}
        \centering
        \includegraphics[width=\textwidth]{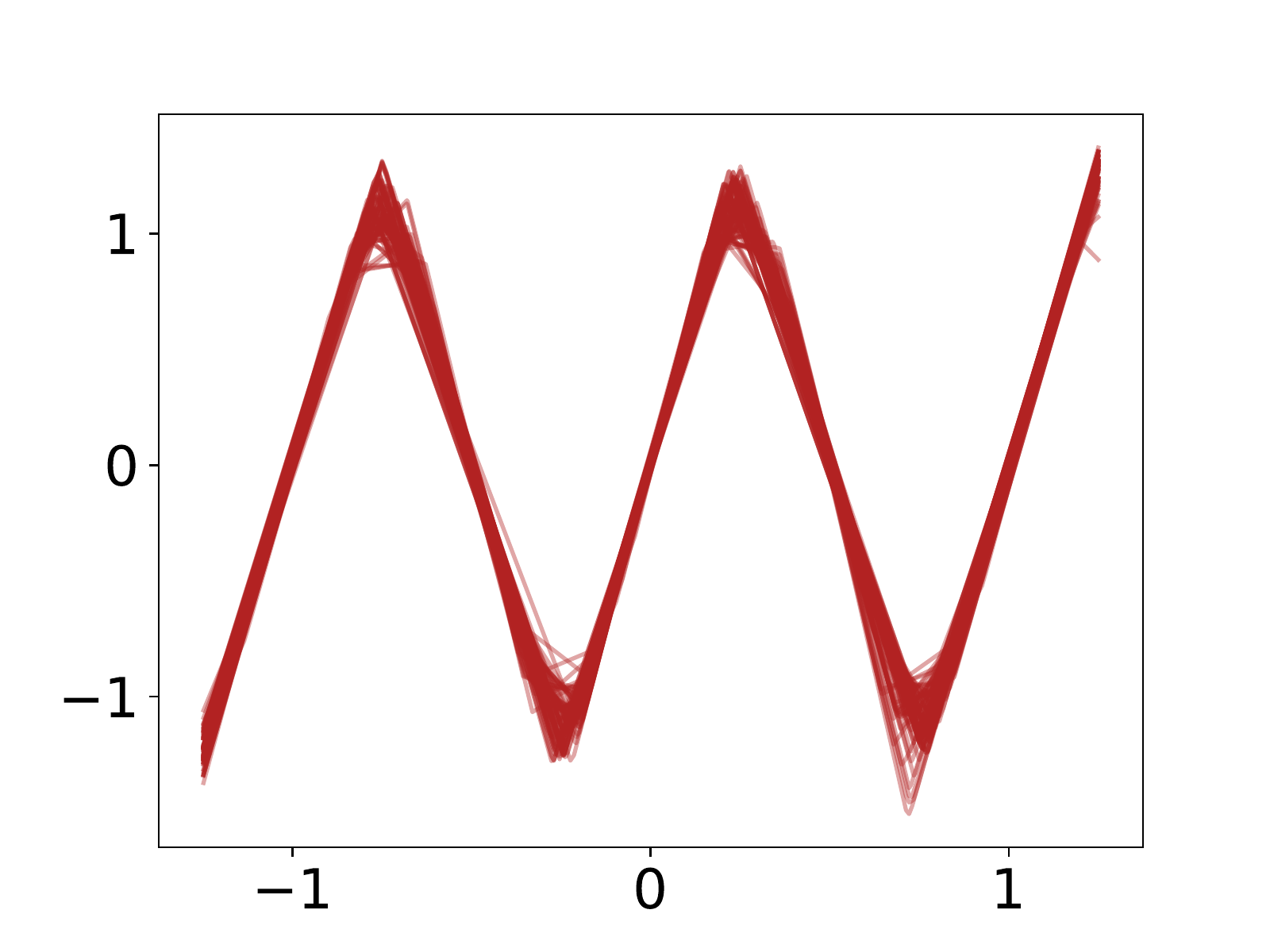}
    \end{subfigure}
    \hfill
    \begin{subfigure}[t]{0.32\textwidth}
        \centering
        \includegraphics[width=\textwidth]{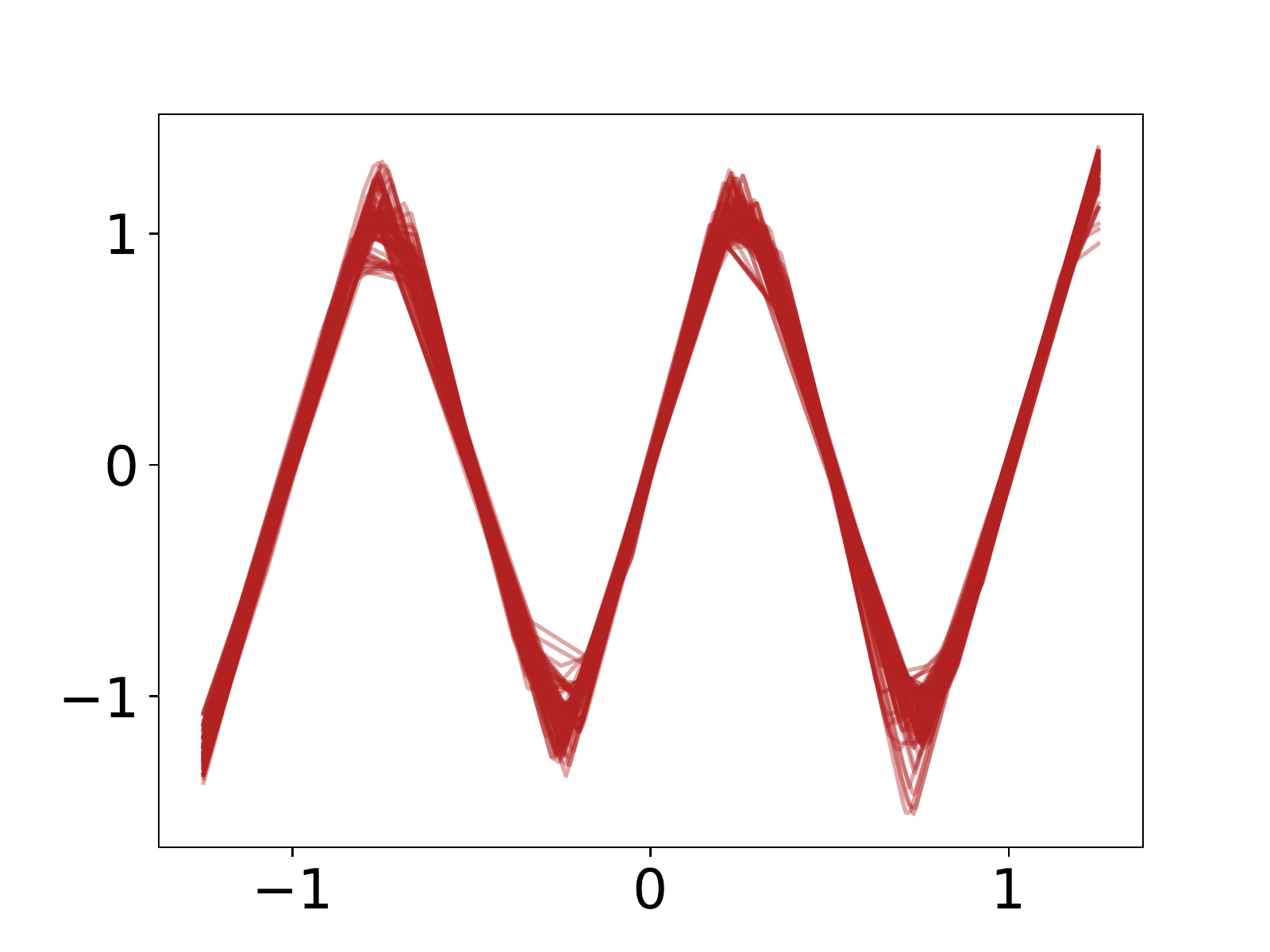}
    \end{subfigure}
    \hfill
    \begin{subfigure}[t]{0.32\textwidth}
        \centering
        \includegraphics[width=\textwidth]{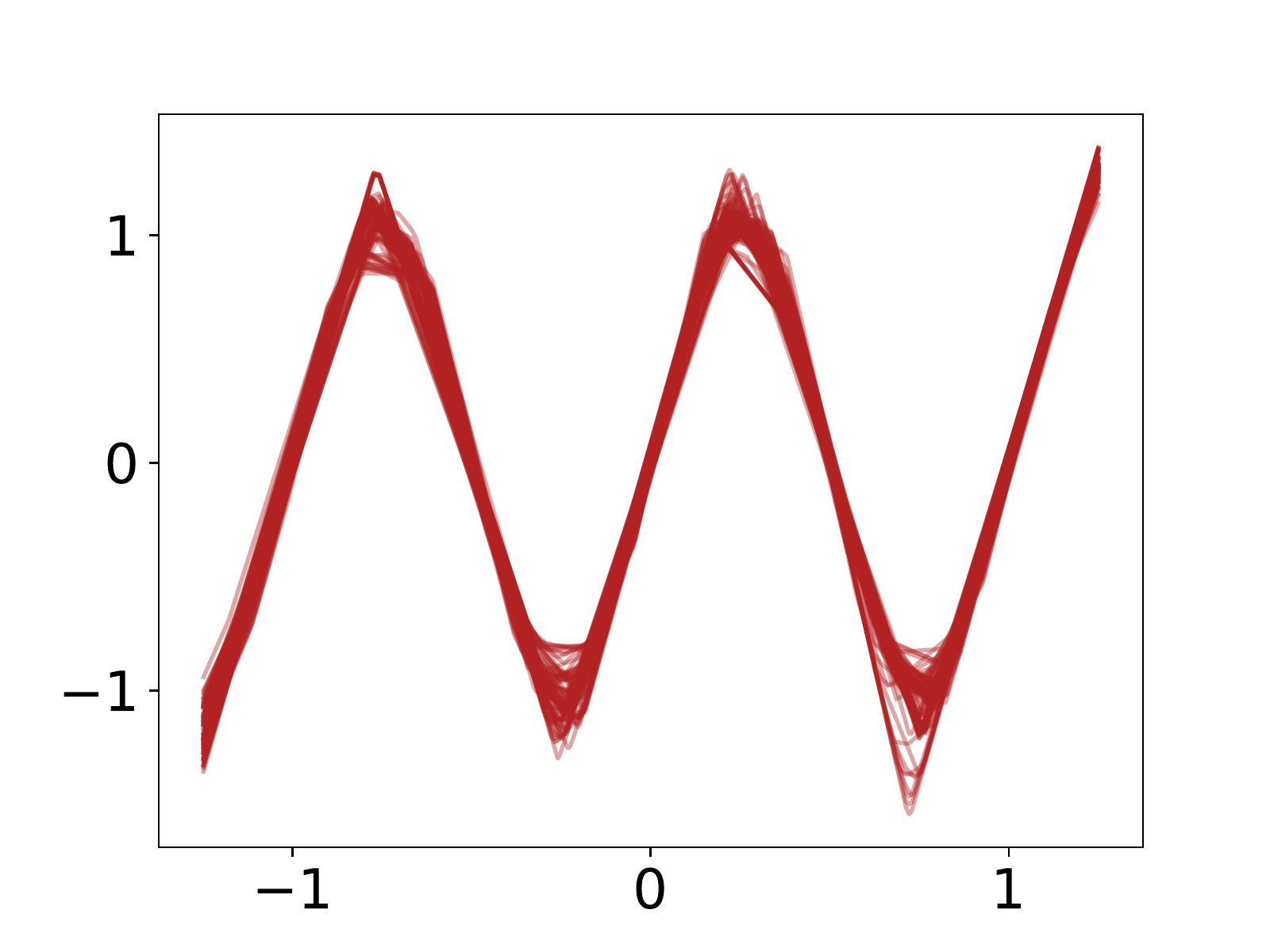}
    \end{subfigure}
    
    \begin{subfigure}[t]{0.32\textwidth}
        \centering
        \includegraphics[width=\textwidth]{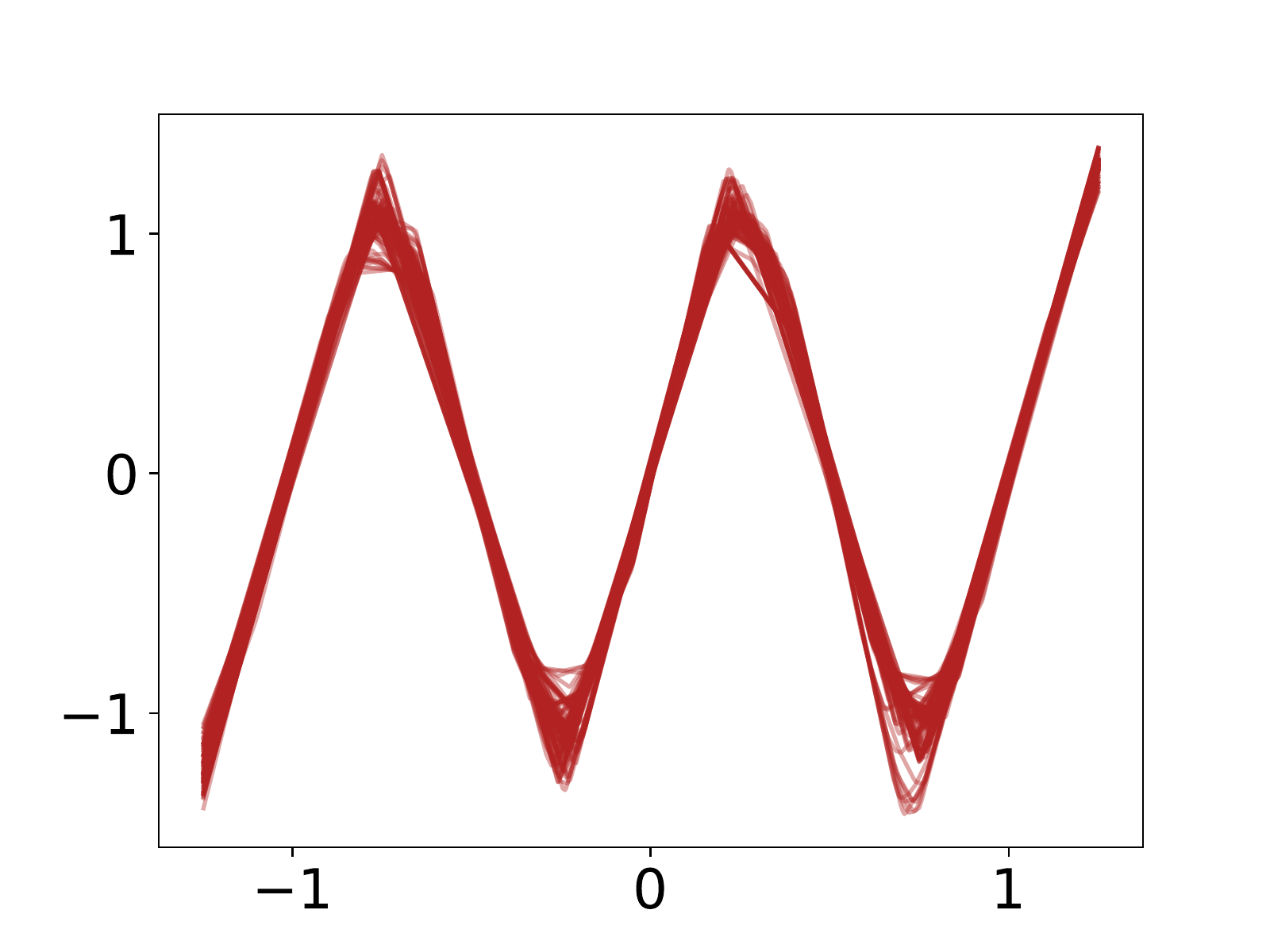}
    \end{subfigure}
    \hfill
    \begin{subfigure}[t]{0.32\textwidth}
        \centering
        \includegraphics[width=\textwidth]{figures/sinusoids/all_functions_vis/width1000}
    \end{subfigure}
    \hfill
    \begin{subfigure}[t]{0.32\textwidth}
        \centering
        \includegraphics[width=\textwidth]{figures/sinusoids/all_functions_vis/width10000}
    \end{subfigure}
    
    \caption{Visualization of 100 different functions learned by the different width neural networks. Darker color indicates higher density of different functions. Widths in increasing order from left to right and top to bottom: 5, 10, 15, 17, 20, 22, 25, 35, 75, 100, 1000, 10000. We do \textit{not} observe the caricature from \cref{fig:high_var_caricature} as width is increased.}
    \label{fig:app_all_learned_sinusoids}
\end{figure}

\begin{figure}[H]
    \centering
    \begin{subfigure}[t]{0.32\textwidth}
        \centering
        \includegraphics[width=\textwidth]{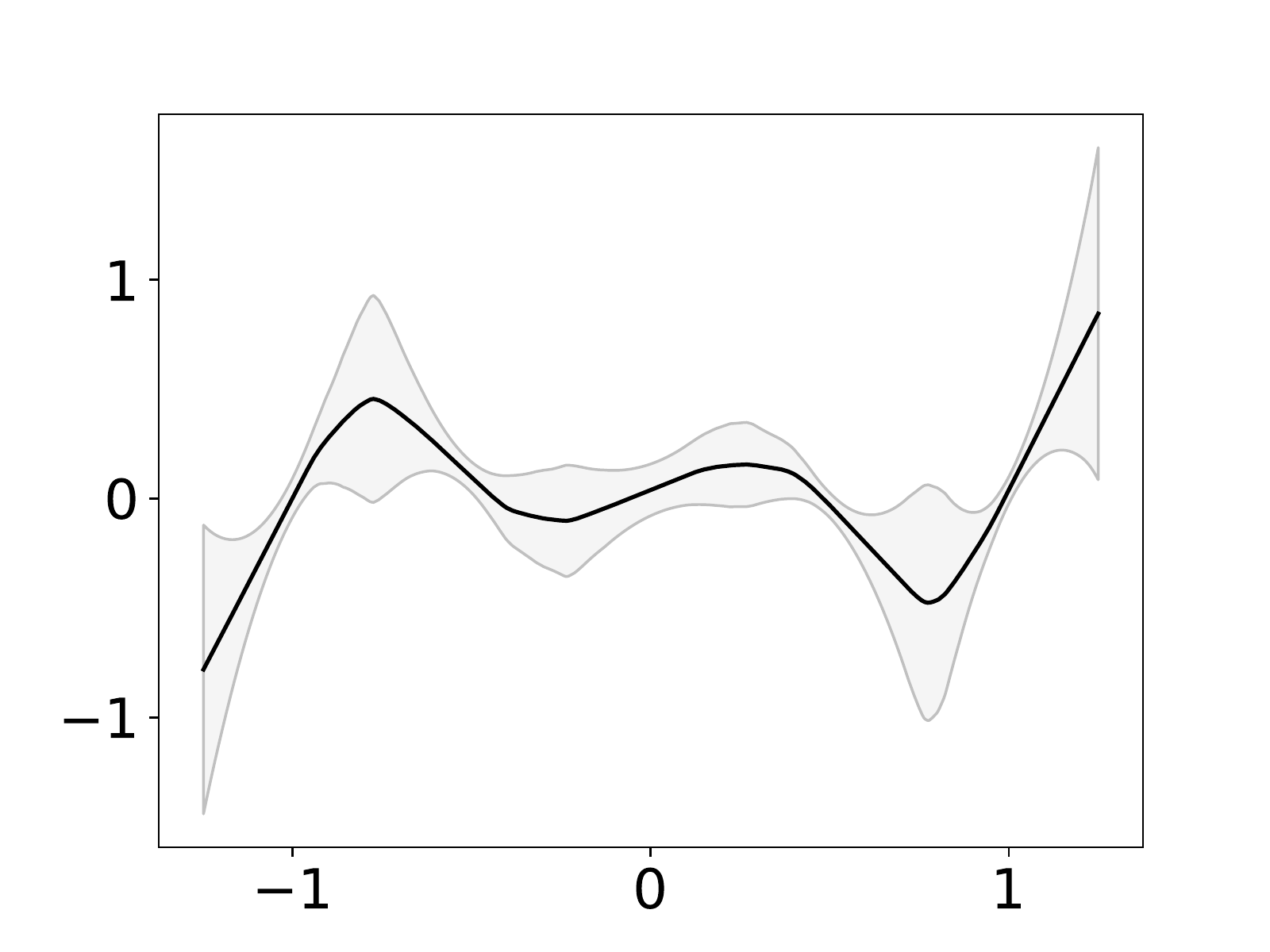}
    \end{subfigure}
    \hfill
    \begin{subfigure}[t]{0.32\textwidth}
        \centering
        \includegraphics[width=\textwidth]{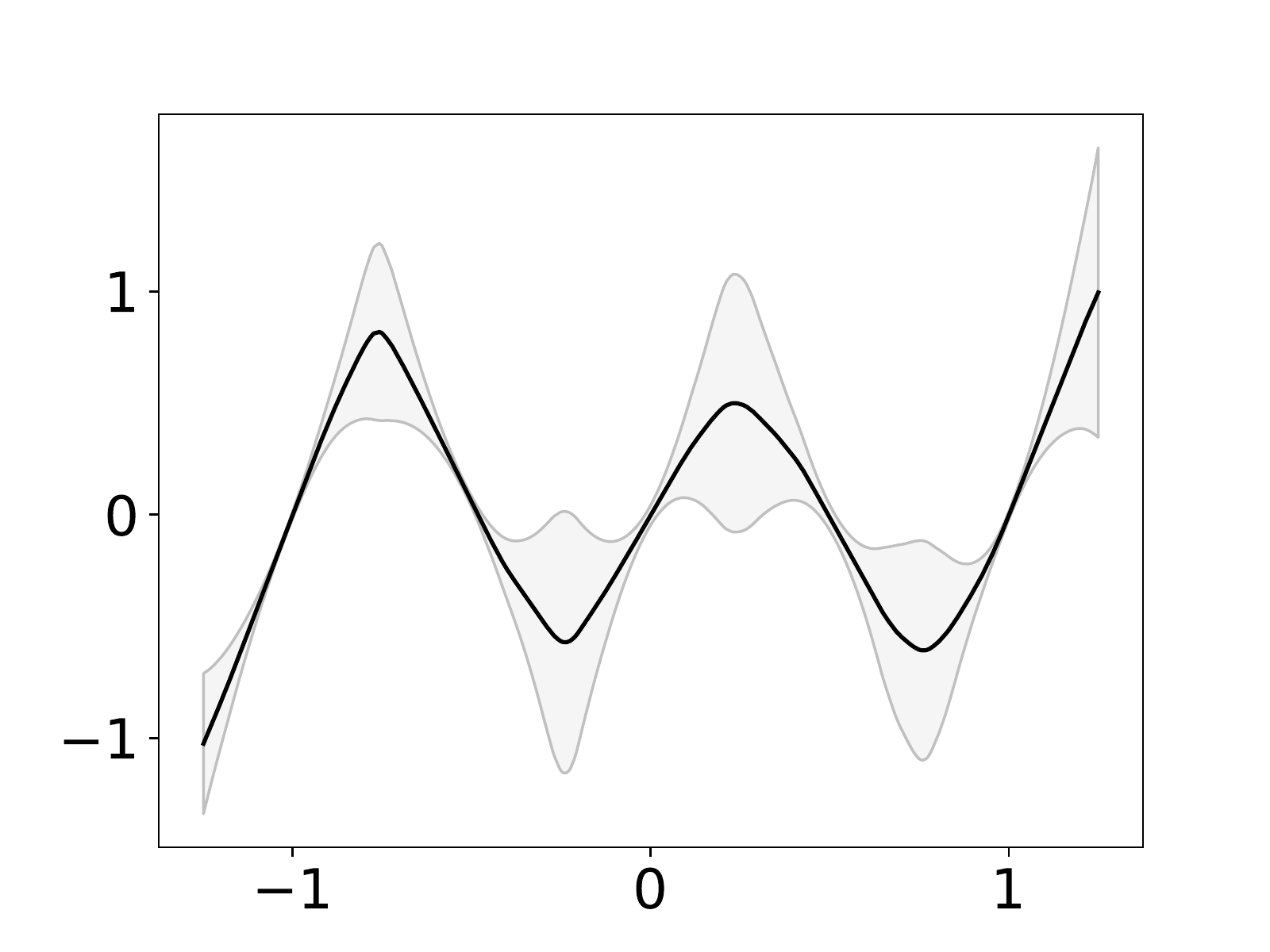}
    \end{subfigure}
    \hfill
    \begin{subfigure}[t]{0.32\textwidth}
        \centering
        \includegraphics[width=\textwidth]{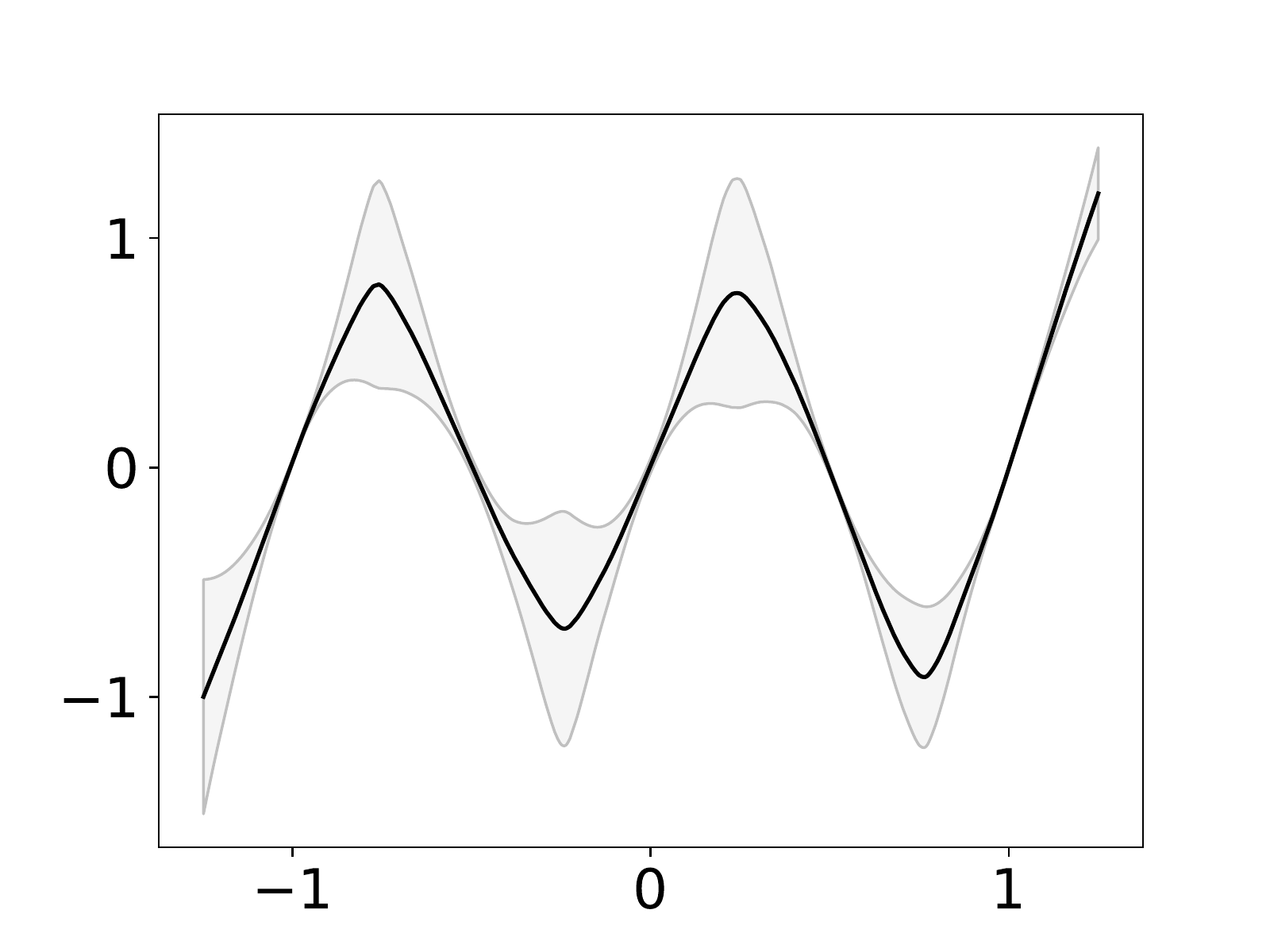}
    \end{subfigure}
    
    \begin{subfigure}[t]{0.32\textwidth}
        \centering
        \includegraphics[width=\textwidth]{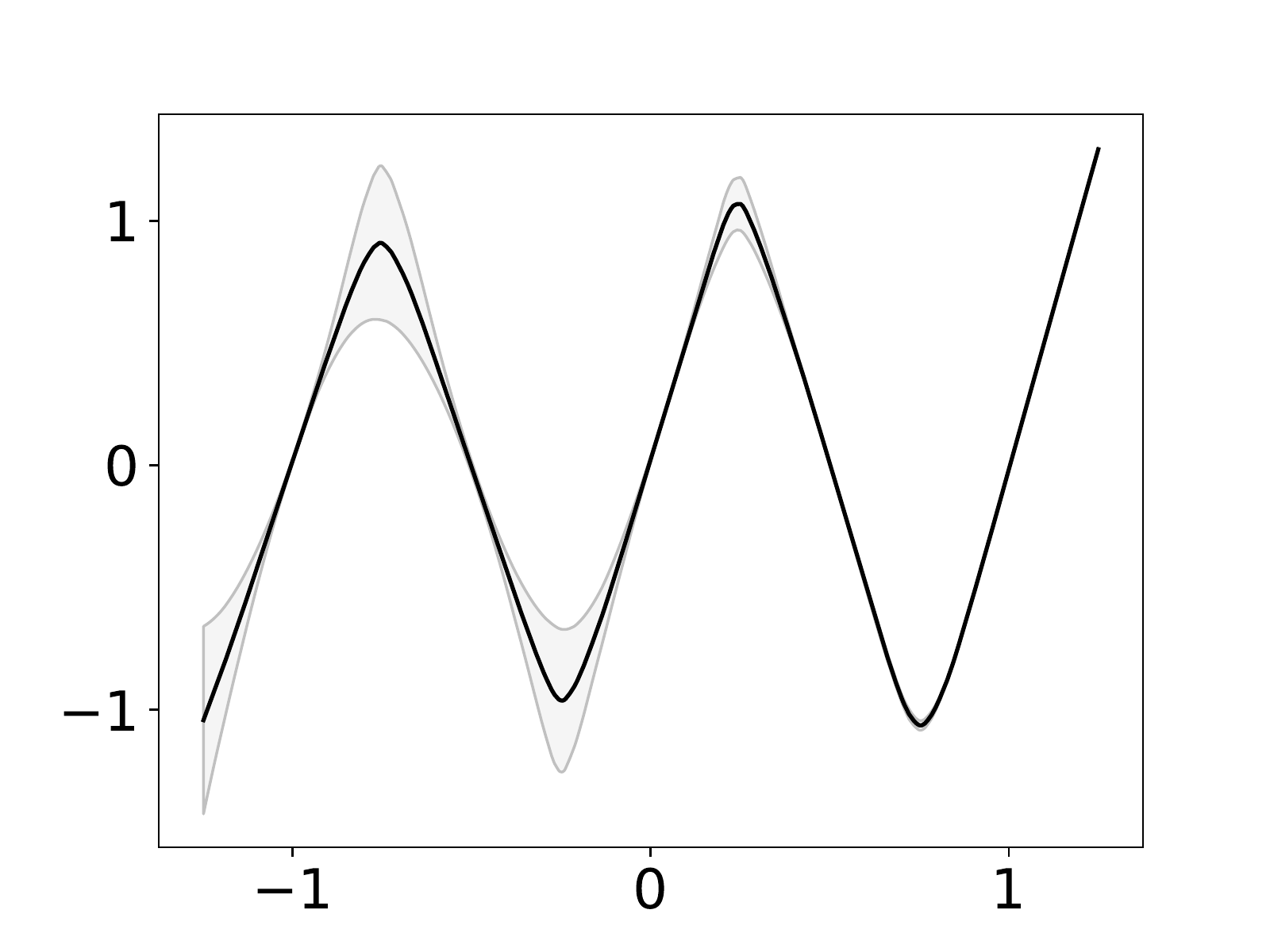}
    \end{subfigure}
    \hfill
    \begin{subfigure}[t]{0.32\textwidth}
        \centering
        \includegraphics[width=\textwidth]{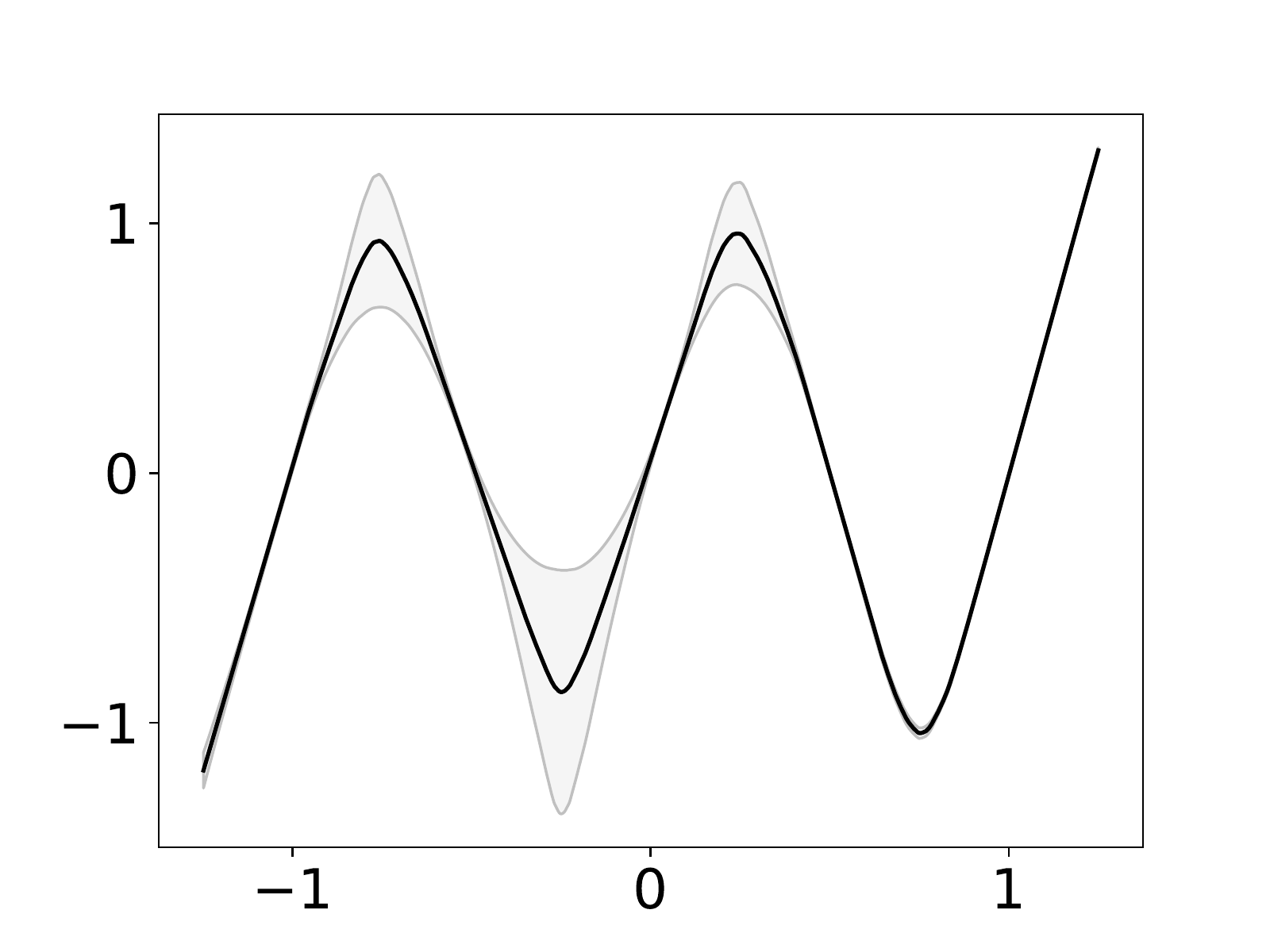}
    \end{subfigure}
    \hfill
    \begin{subfigure}[t]{0.32\textwidth}
        \centering
        \includegraphics[width=\textwidth]{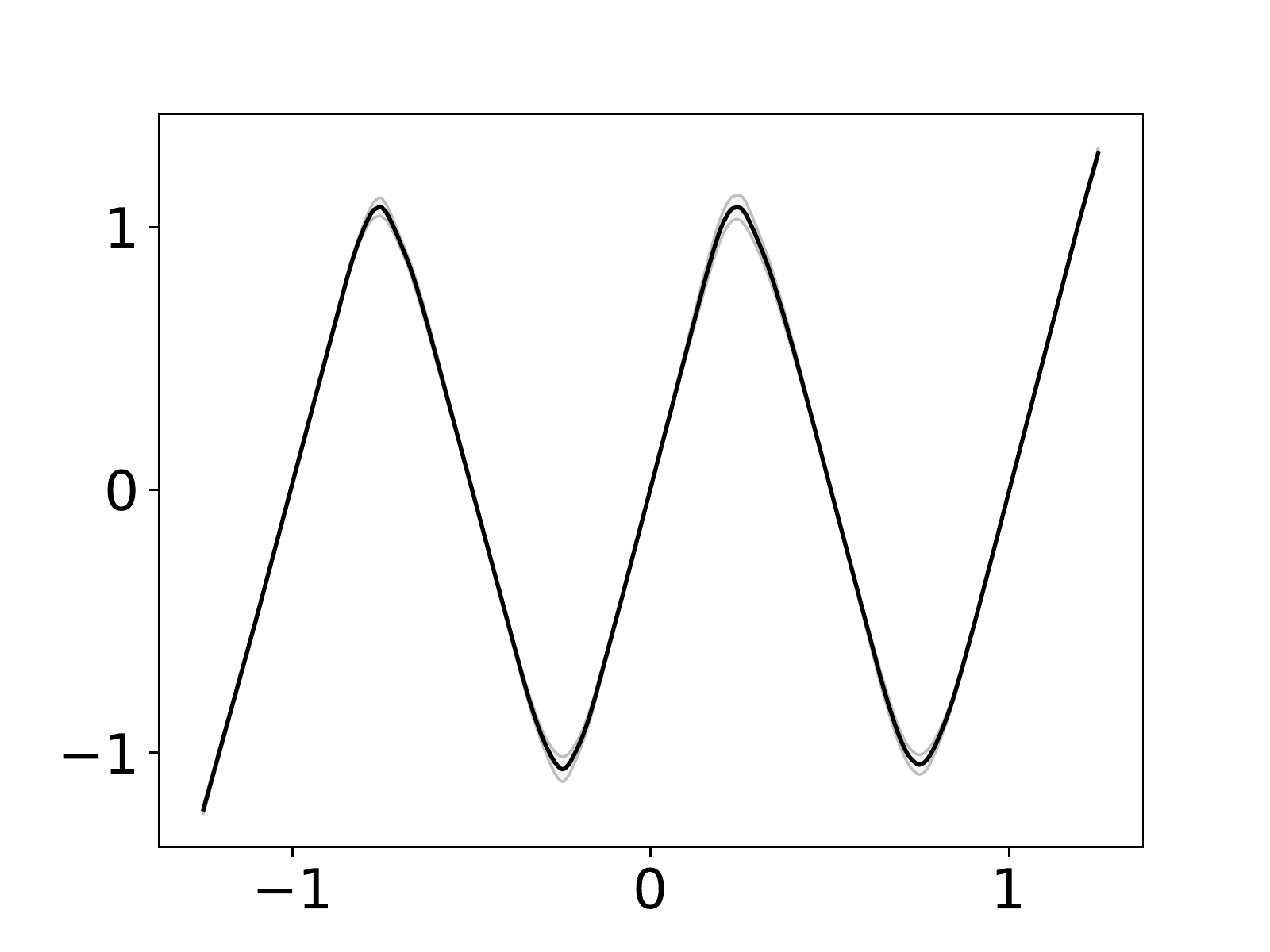}
    \end{subfigure}
    
    \begin{subfigure}[t]{0.32\textwidth}
        \centering
        \includegraphics[width=\textwidth]{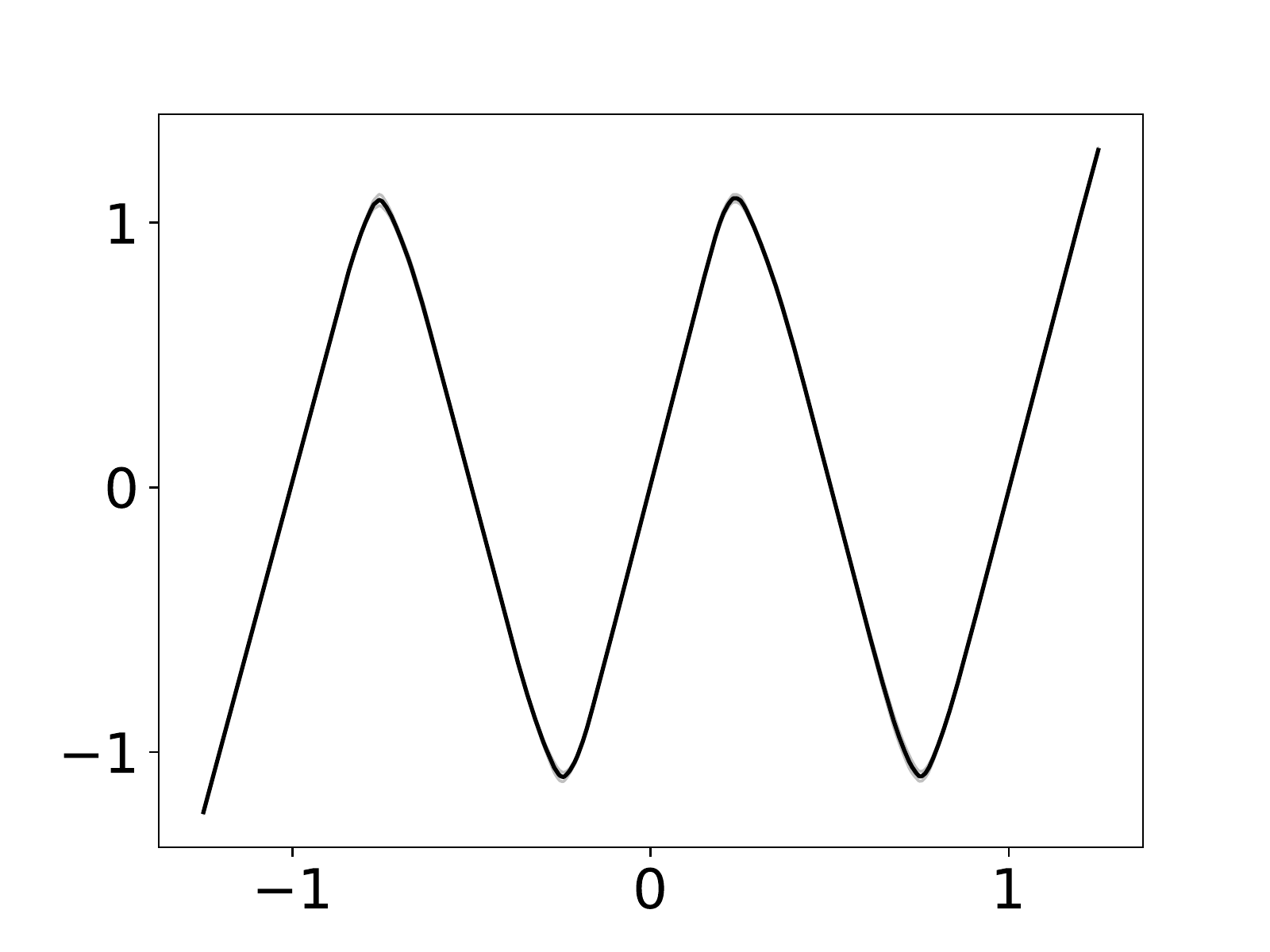}
    \end{subfigure}
    \hfill
    \begin{subfigure}[t]{0.32\textwidth}
        \centering
        \includegraphics[width=\textwidth]{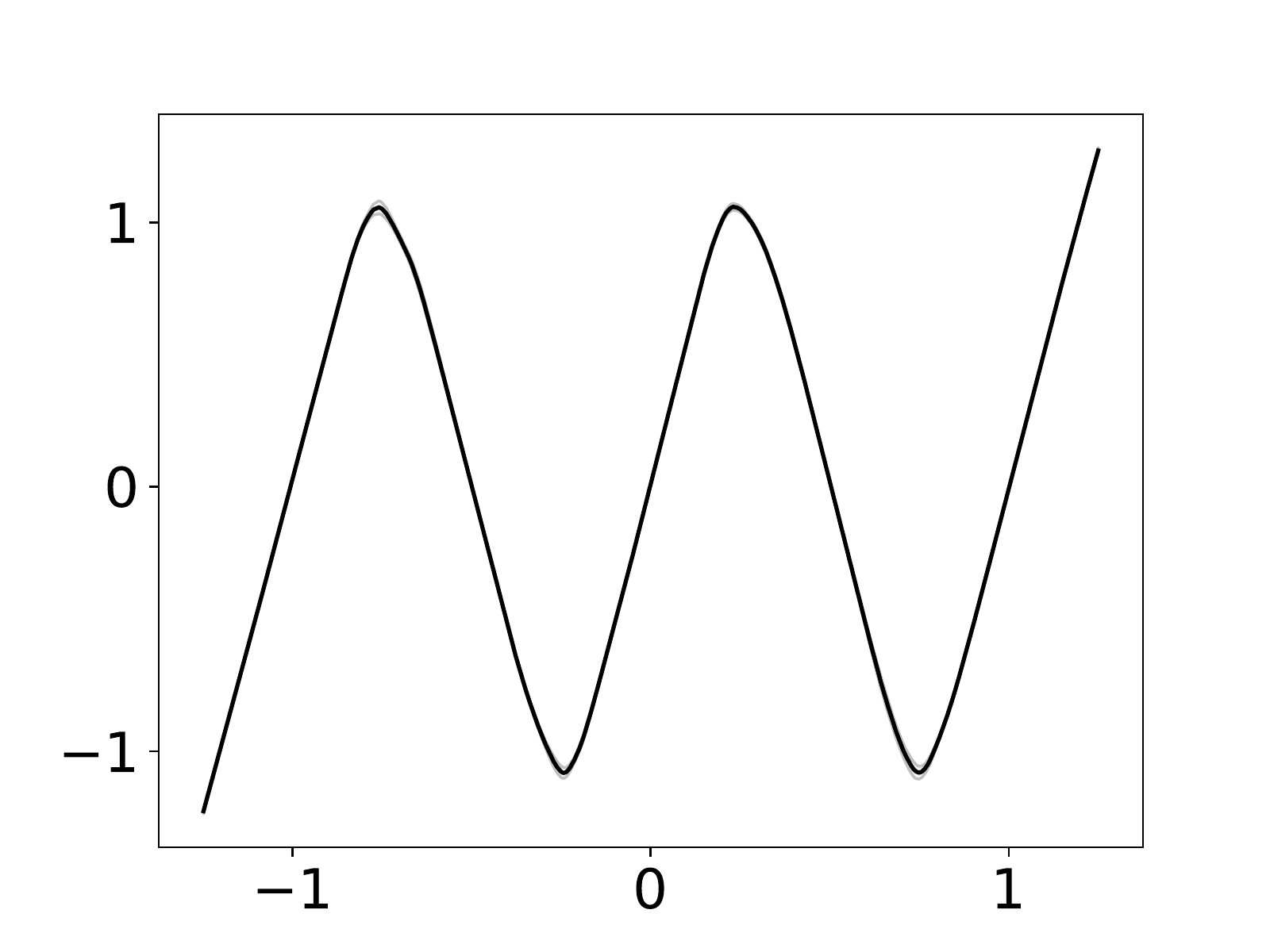}
    \end{subfigure}
    \hfill
    \begin{subfigure}[t]{0.32\textwidth}
        \centering
        \includegraphics[width=\textwidth]{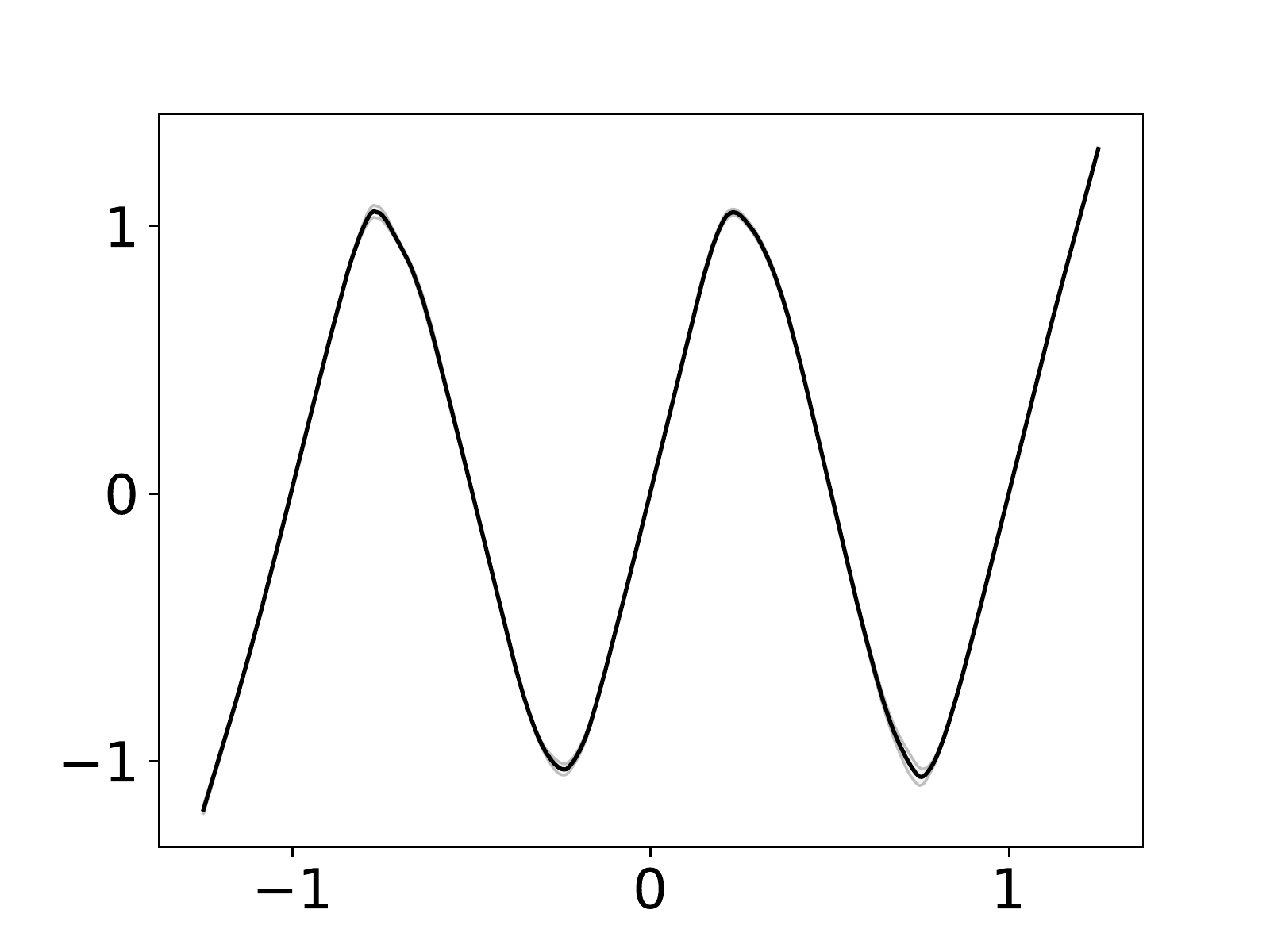}
    \end{subfigure}
    
    \begin{subfigure}[t]{0.32\textwidth}
        \centering
        \includegraphics[width=\textwidth]{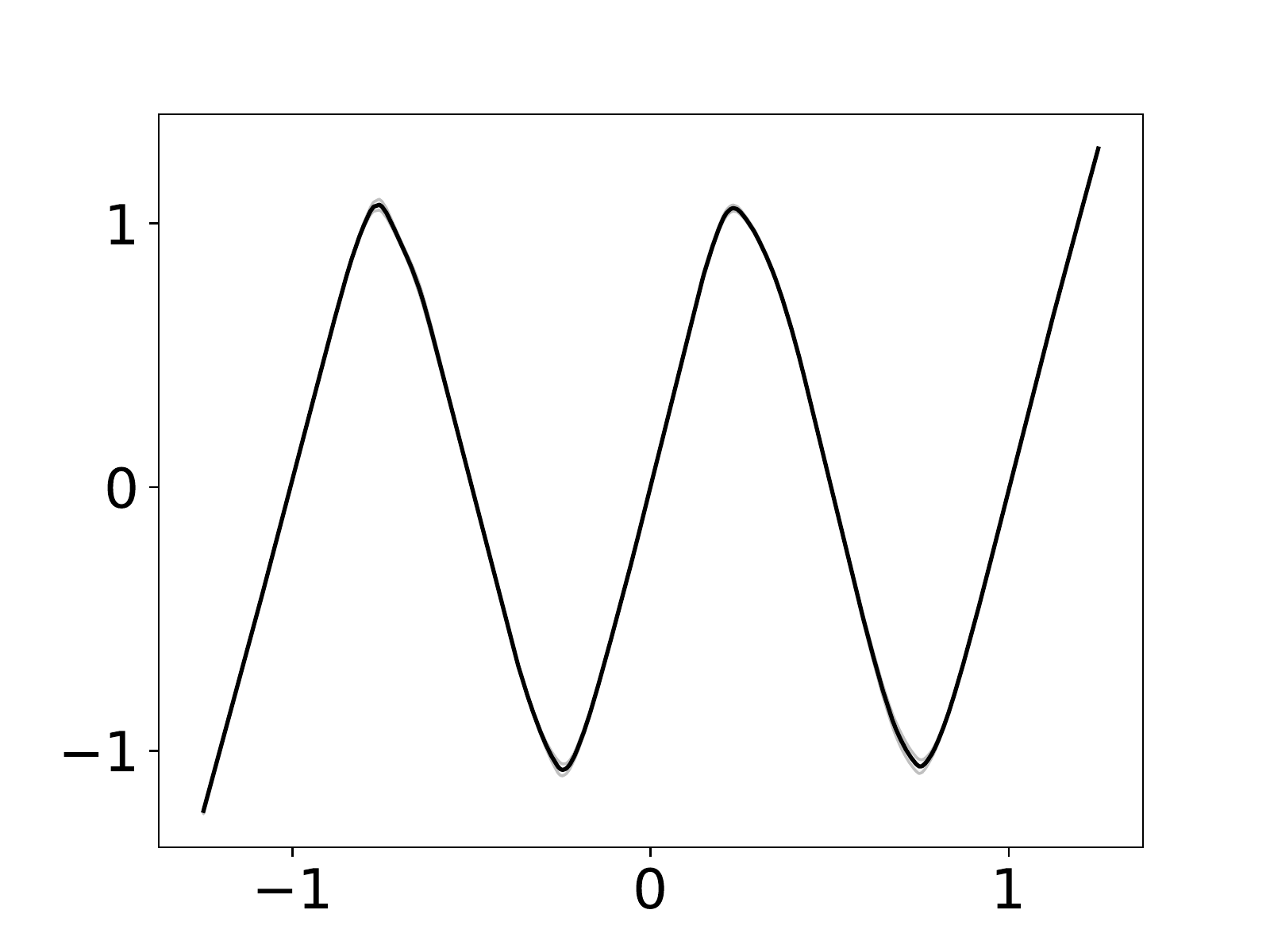}
    \end{subfigure}
    \hfill
    \begin{subfigure}[t]{0.32\textwidth}
        \centering
        \includegraphics[width=\textwidth]{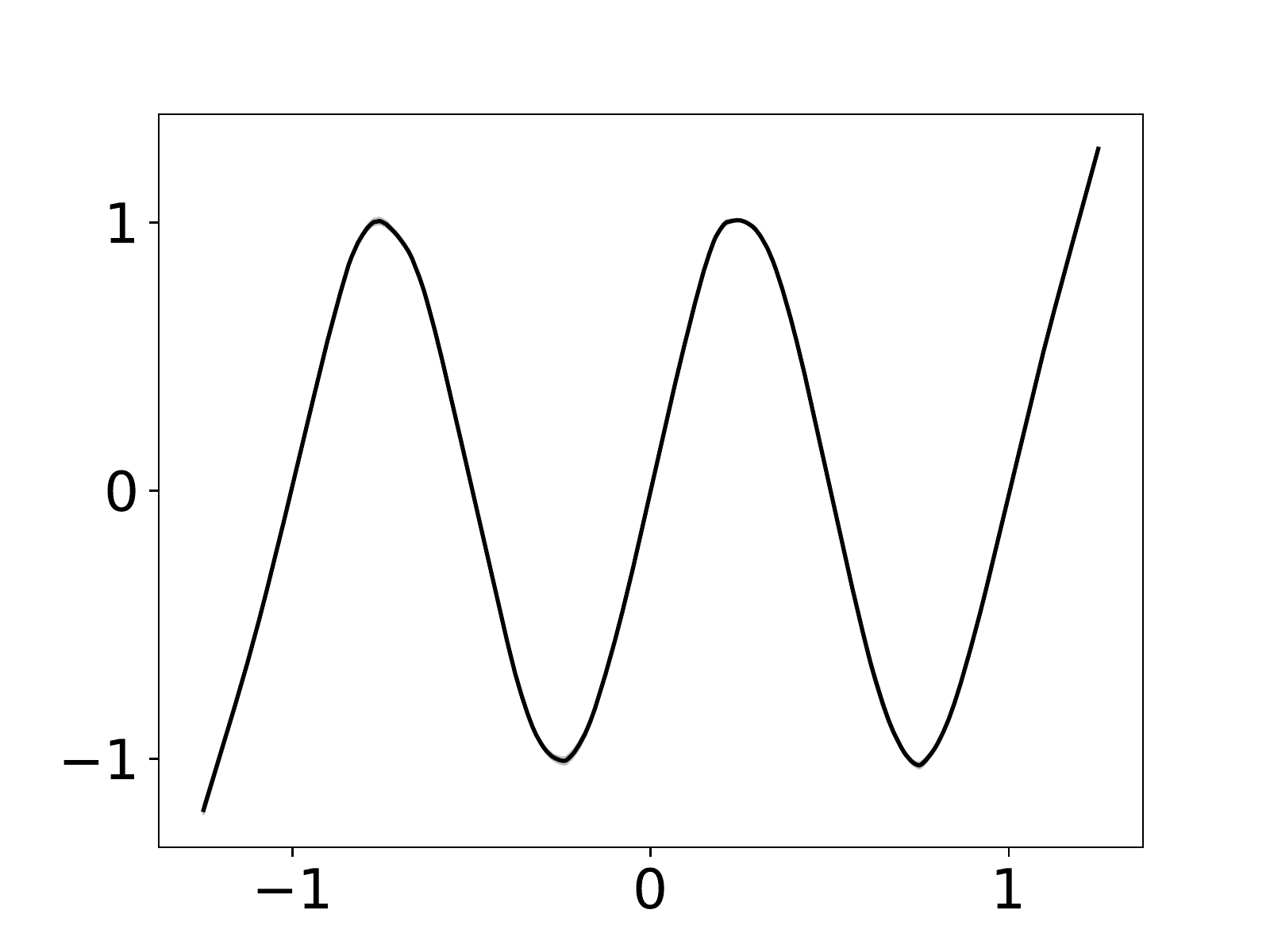}
    \end{subfigure}
    \hfill
    \begin{subfigure}[t]{0.32\textwidth}
        \centering
        \includegraphics[width=\textwidth]{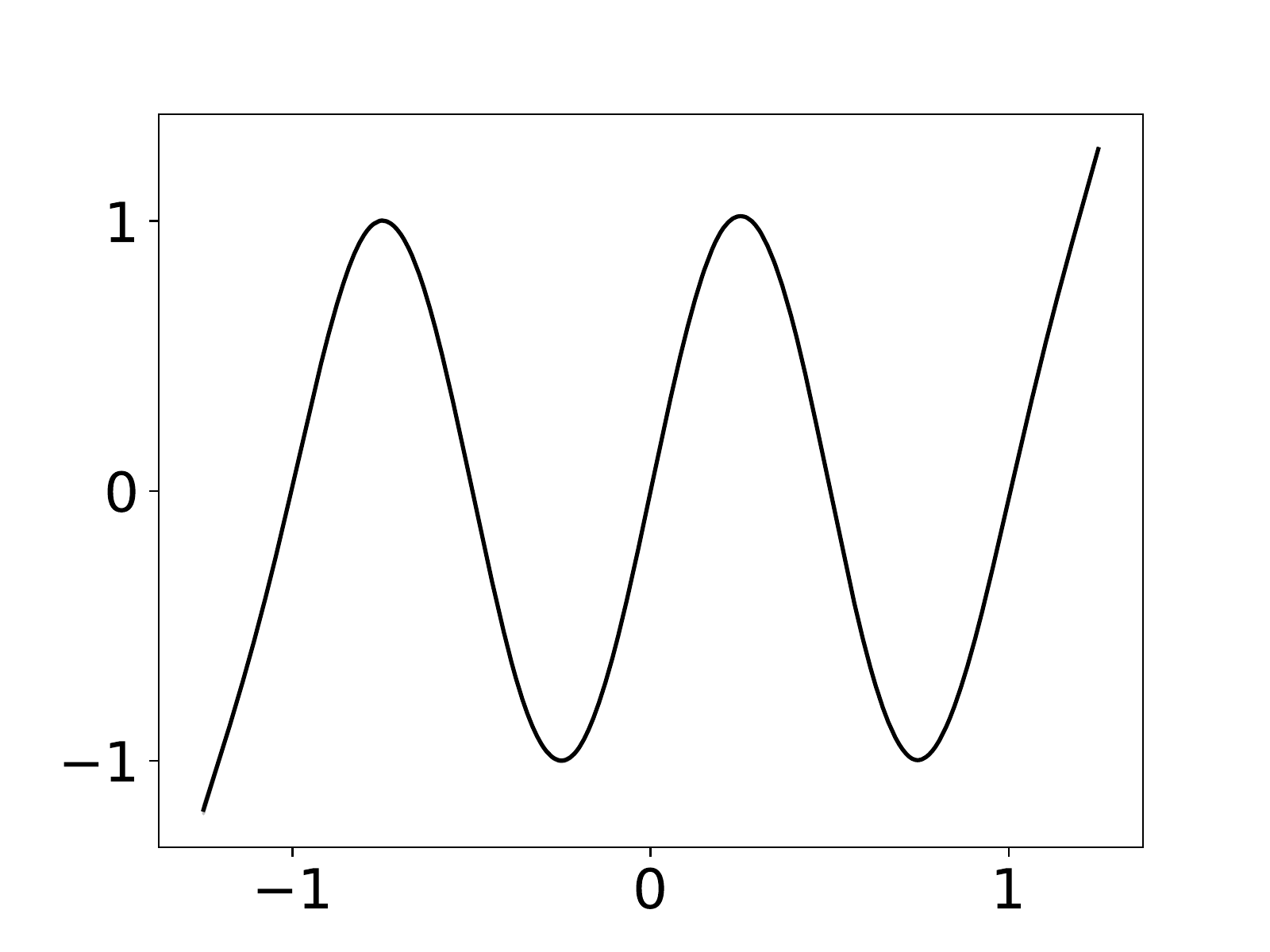}
    \end{subfigure}
    
    \caption{Visualization of the mean prediction and variance of the different width neural networks. Widths in increasing order from left to right and top to bottom: 5, 10, 15, 17, 20, 22, 25, 35, 75, 100, 1000, 10000.}
    \label{fig:app_mean_and_var_vis}
\end{figure}

\begin{figure}[H]
    \centering
    \begin{subfigure}[t]{0.48\textwidth}
        \centering
        \includegraphics[width=\textwidth]{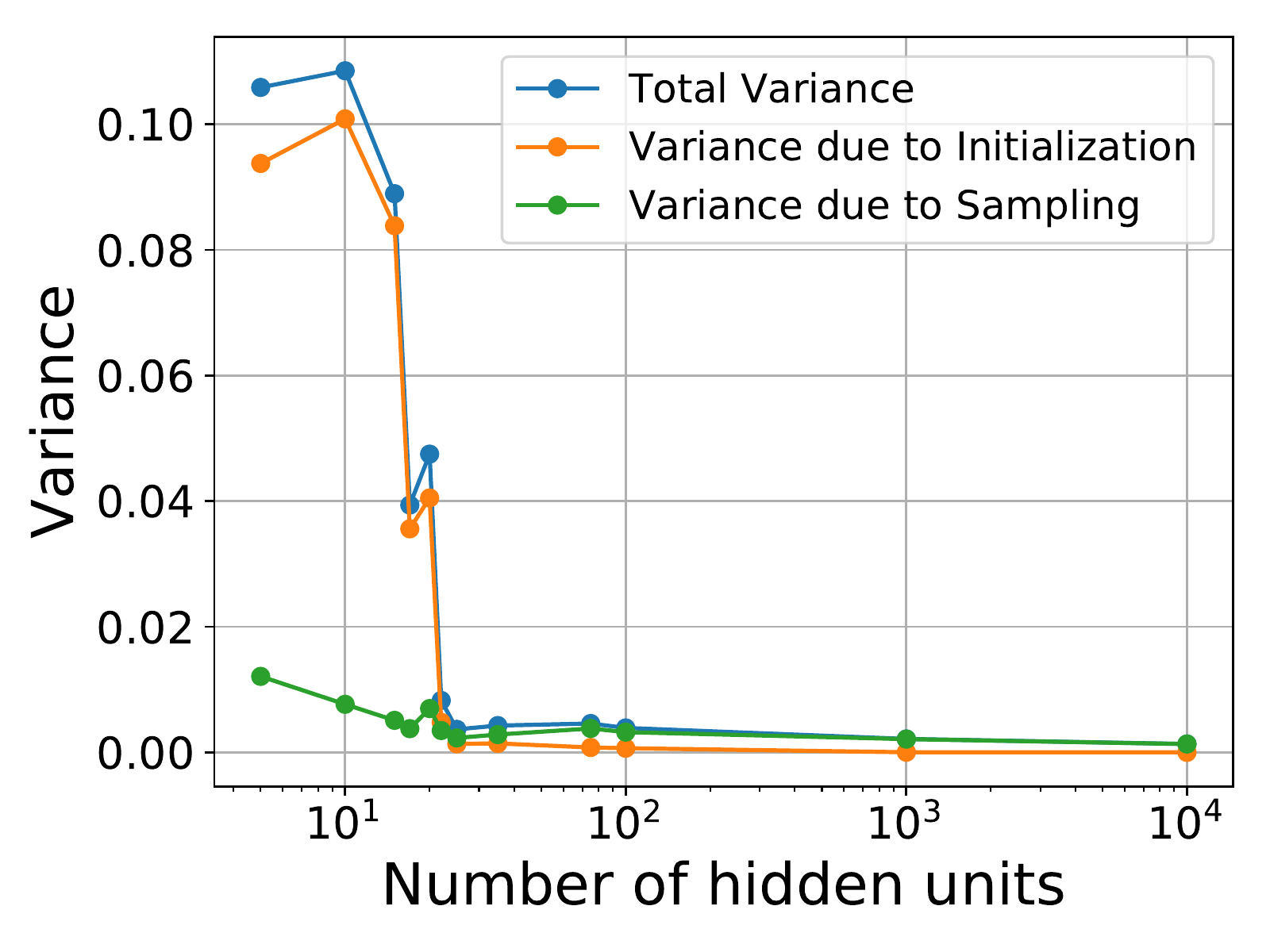}
    \end{subfigure}
    \hfill
    \begin{subfigure}[t]{0.48\textwidth}
        \centering
        \includegraphics[width=\textwidth]{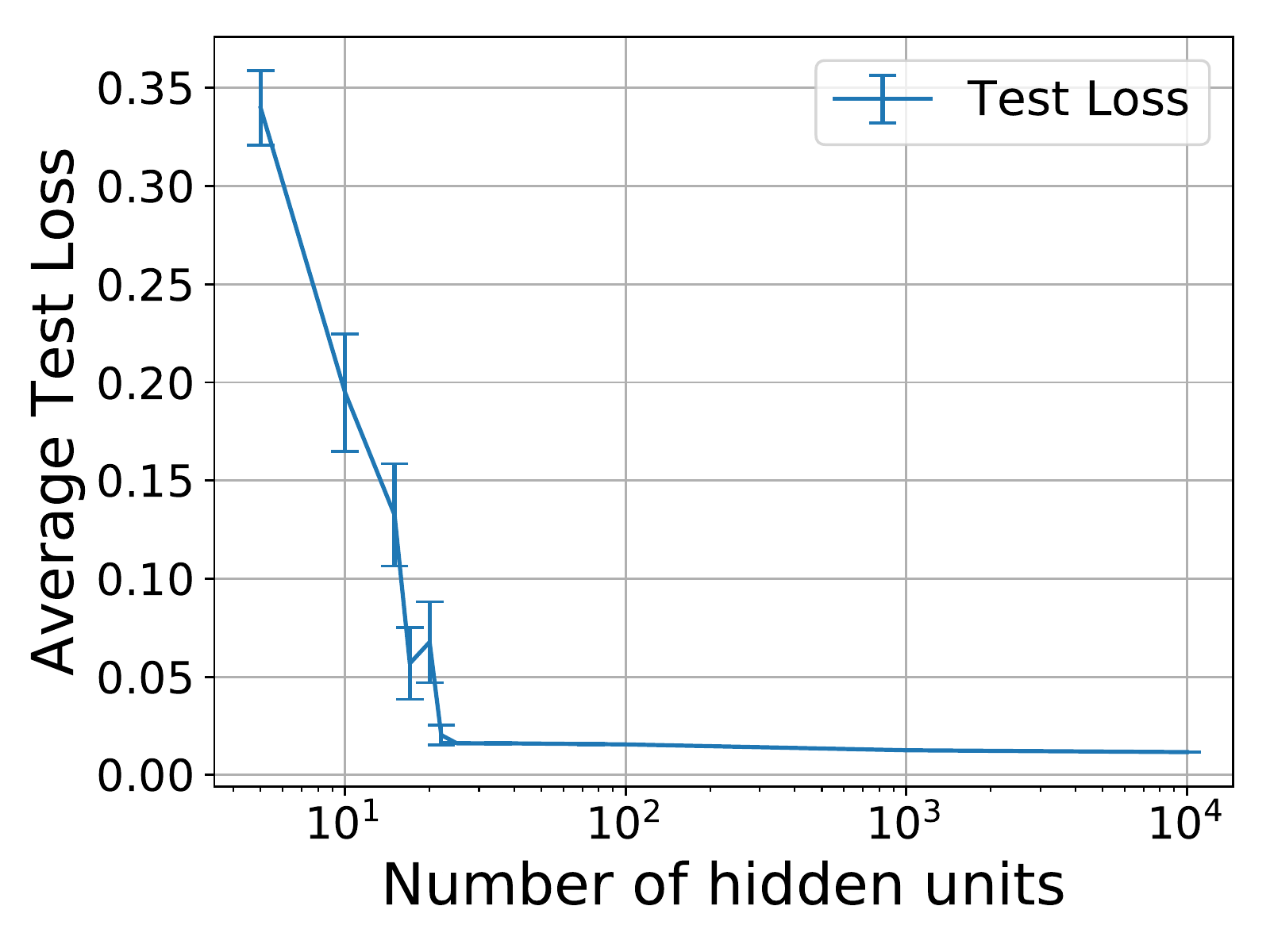}
    \end{subfigure}
    \caption{We observe the same trends of decomposed variance (left) and test error (right) in the sinusoid regression setting.}
    \label{fig:sinusoid_curves_app}
\end{figure}

\section{Depth and variance}
\label{app:depth}

\subsection{Main graphs}

\begin{figure}[H]
    \centering
    \begin{subfigure}[t]{0.48\textwidth}
        \centering
        \includegraphics[width=\textwidth]{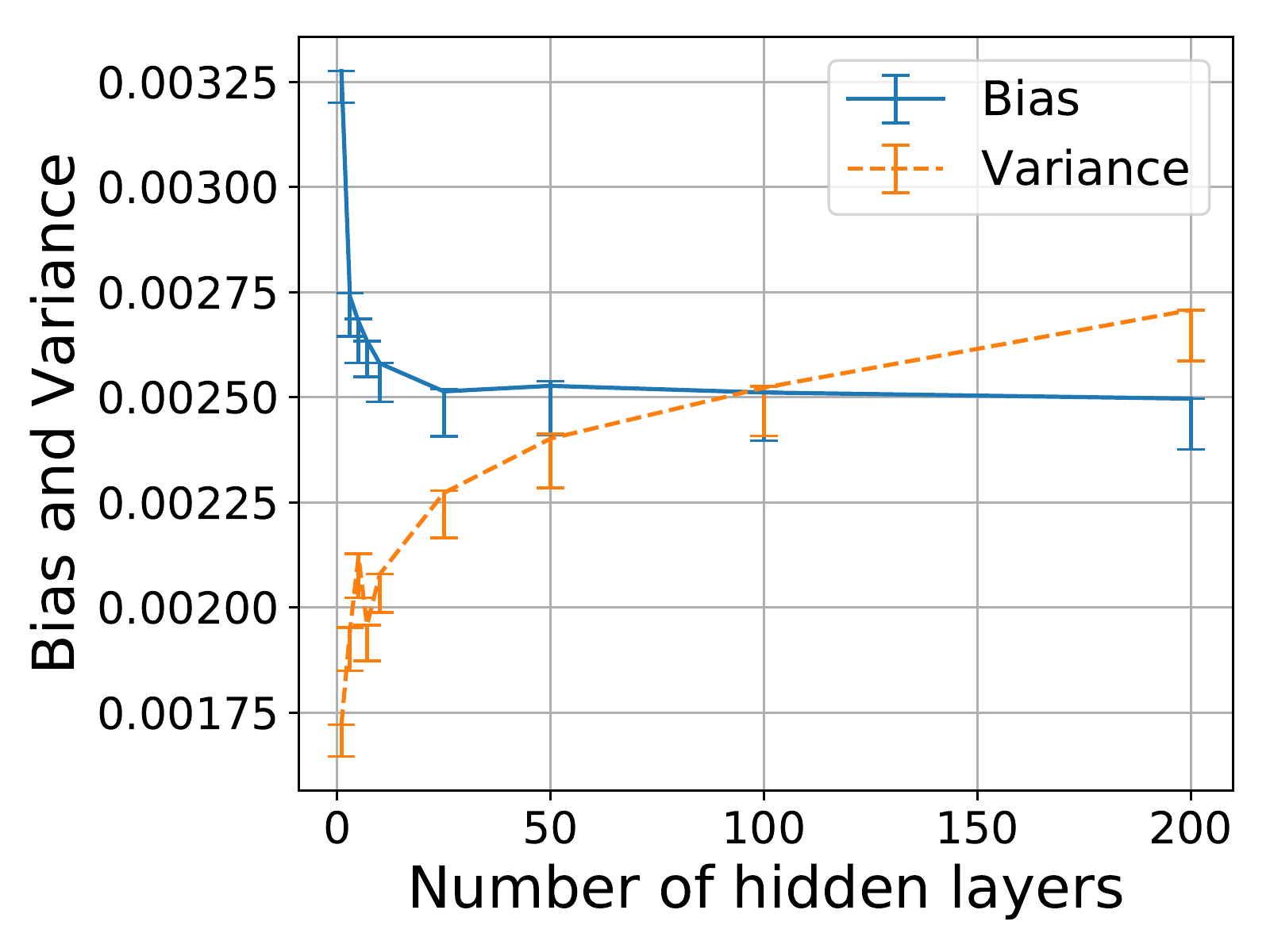}
         \caption{Bias and variance trends with depth, using dynamical isometry}
         \label{fig:dyn_iso_bv}
    \end{subfigure}
    \hfill
    \begin{subfigure}[t]{0.48\textwidth}
        \centering
         \includegraphics[width=\textwidth]{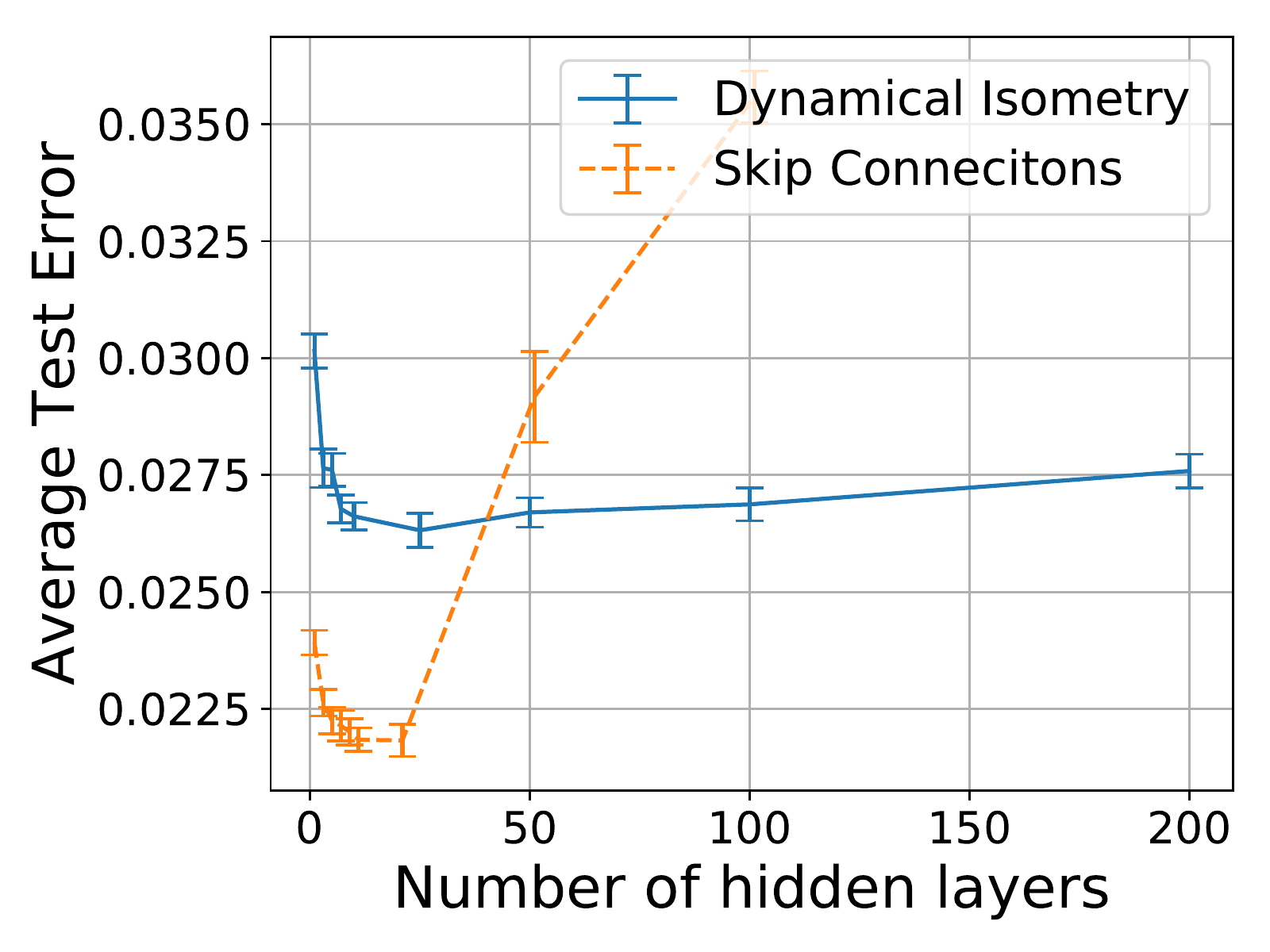}
         \caption{Test error trends, using dynamical isometry vs.\ skip connections}
        \label{fig:depth_test_error}
    \end{subfigure}
    \caption{We can see that, when using dynamical isometry, bias decreases with depth and variance slowly increases with depth (left). This increase in variance is so small that it only translates to a an increase in test error of about 0.1\% for depth 25 to depth 200 (right).}
\end{figure}

\subsection{Discussion on need for careful experimental design}
\label{app:depth_exp_protocol_discussion}

Depth is an important component of deep learning. We study its effect on bias and variance by fixing width and varying depth. However, there are pathological problems associated with training very deep networks such as vanishing/exploding gradient \citep{Hochreiter:91, long-term_dependencies, xavier2010}, signal not being able to propagate through the network \citep{deep_info_prop}, and gradients resembling white noise \citep{shattered_gradient}. \citet{resnet1} pointed out that very deep networks experience high test set error and argued it was due to high training set loss. However, while skip connections \citep{resnet1}, better initialization \citep{xavier2010}, and batch normalization \citep{batchnorm} have largely served to facilitate low training loss in very deep networks, the problem of high \textit{test set} error still remains.

The current best practices for achieving low test error in very deep networks arose out of trying to solve the above problems in training. An initial step was to ensure the mean squared singular value of the input-output Jacobian, at initialization, is close to 1 \citep{xavier2010}. More recently, there has been work on a stronger condition known as \textit{dynamical isometry}, where \textit{all} singular values remain close to 1 \citep{Saxe14exactsolutions, resurrecting_sigmoid}. \citet{resurrecting_sigmoid} also empirically found that dynamical isometry helped achieve low test set error. Furthermore, \citet[Figure 1]{xiao18} found evidence that test set performance did not degrade with depth when they lifted dynamical isometry to CNNs. This why we settled on dynamical isometry as the best known practice to control for as many confounding factors as possible.

We first ran experiments with vanilla full connected networks (\cref{fig:depth_vanilla}). These have clear training issues where networks of depth more than 20 take very long to train to the target training loss of \mbox{5e-5}. The bias curve is not even monotonically decreasing. Clearly, there are important confounding factors not controlled for in this simple setting. Still, note that variance increases roughly linearly with depth.

We then study fully connected networks with skip connections between every 2 layers (\cref{fig:depth_skip}). While this allows us to train deeper networks than without skip connections, many of the same issues persist (e.g.\ bias still not monotonically decreasing). The bias, variance, and test error curves are all checkmark-shaped. 

\subsection{Vanilla fully connected depth experiments}
\label{app:depth_vanilla}

\begin{figure}[H]
    \centering
    \begin{subfigure}[t]{0.45\textwidth}
        \centering
        \includegraphics[width=\textwidth]{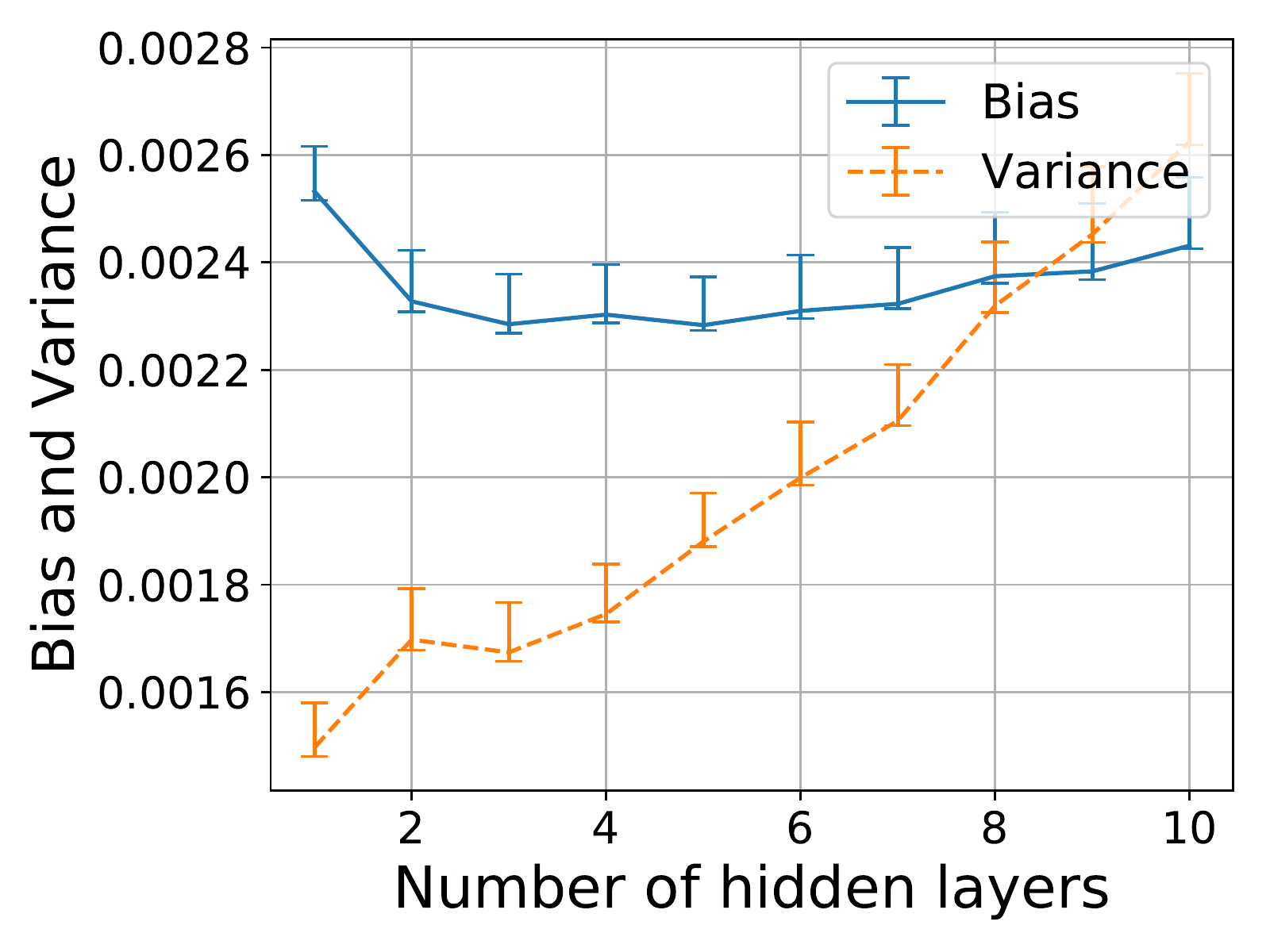}
    \end{subfigure}
    \hfill
    \begin{subfigure}[t]{0.45\textwidth}
        \centering
        \includegraphics[width=\textwidth]{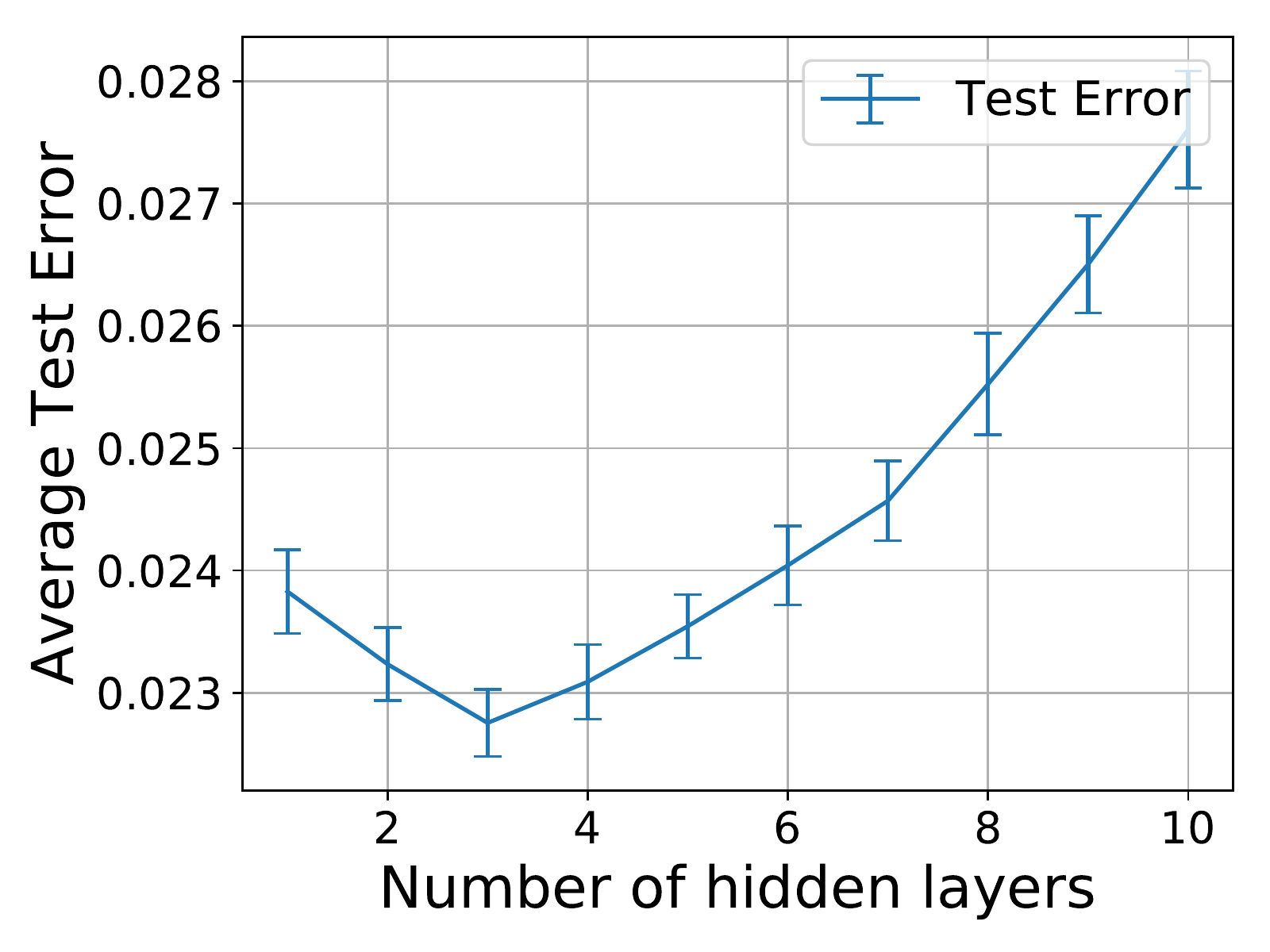}
    \end{subfigure}
    \caption{Test error quickly degrades in fairly shallow fully connected networks, and bias does not even monotonically decrease with depth. However, this is the first indication that variance might \textit{increase} with depth. All networks have training error 0 and are trained to the same training loss of 5e-5.}
    \label{fig:depth_vanilla}
\end{figure}

\subsection{Skip connections depth experiments}
\label{app:depth_skip}

\begin{figure}[H]
    \centering
    \begin{subfigure}[t]{0.45\textwidth}
        \centering
        \includegraphics[width=\textwidth]{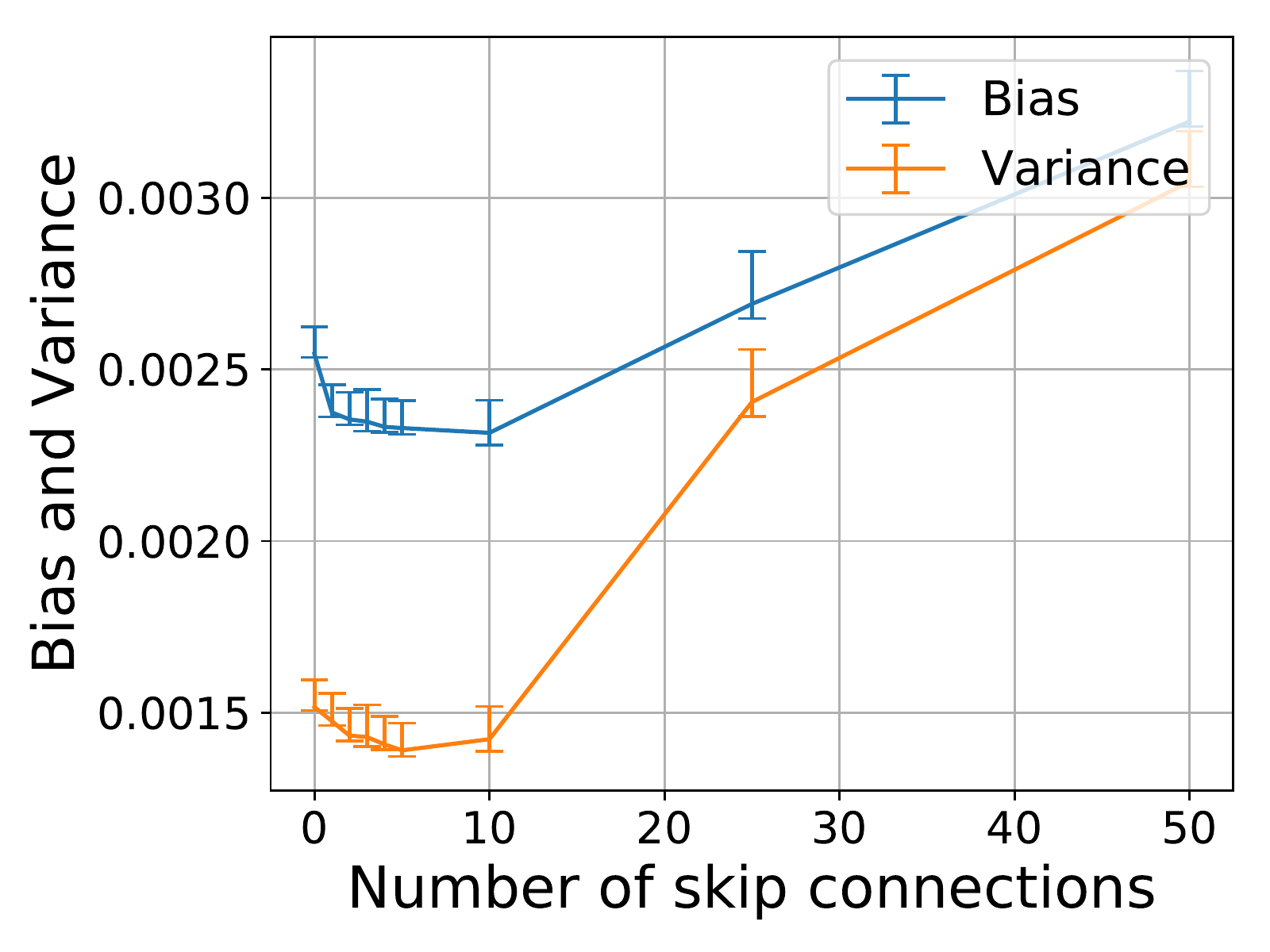}
    \end{subfigure}
    \hfill
    \begin{subfigure}[t]{0.45\textwidth}
        \centering
        \includegraphics[width=\textwidth]{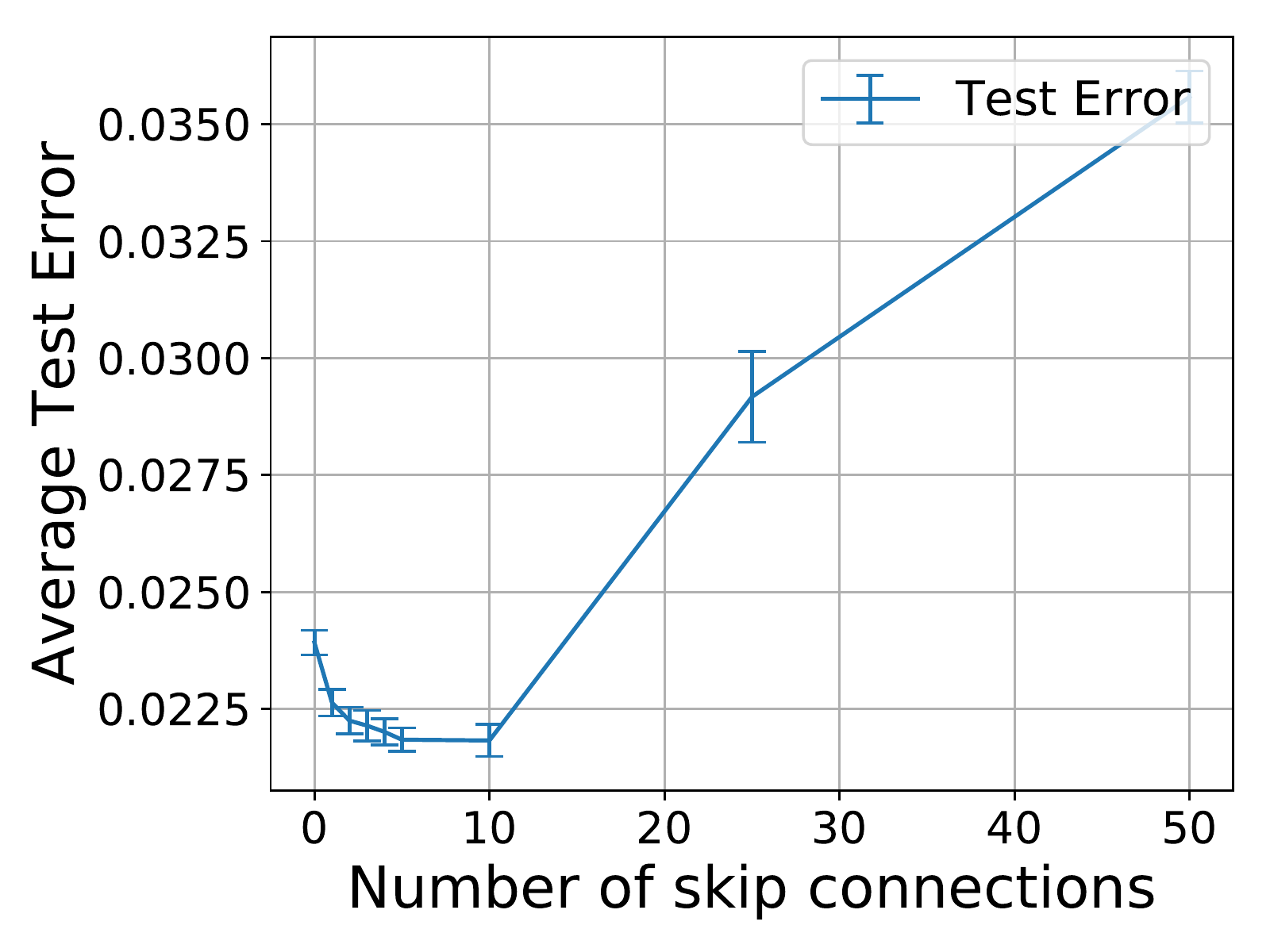}
    \end{subfigure}
    \caption{While the addition of skip connections (between every other layer) might push the bottom of the U curve in test error out to 10 skip connections (21 layers), which is further than where the bottom was observed without skip connections (3 layers), test error still degrades noticeably in greater depths. Additionally, bias still does not even monotonically decrease with depth. While skip connections appear to have helped control for the factors we want to control, they were not completely satisfying. All networks have training error 0 and are trained to the same training loss of 5e-5.}
    \label{fig:depth_skip}
\end{figure}

\subsection{Dynamical isometry depth experiments}
\label{app:depth_dyn_iso}

The figures in this section are included in the main paper, but they are included here for comparison to the above and for completeness.

\begin{figure}[H]
    \centering
    \begin{subfigure}[t]{0.45\textwidth}
        \centering
        \includegraphics[width=\textwidth]{figures/depth_dyn_iso/bias-variance}
    \end{subfigure}
    \hfill
    \begin{subfigure}[t]{0.45\textwidth}
        \centering
        \includegraphics[width=\textwidth]{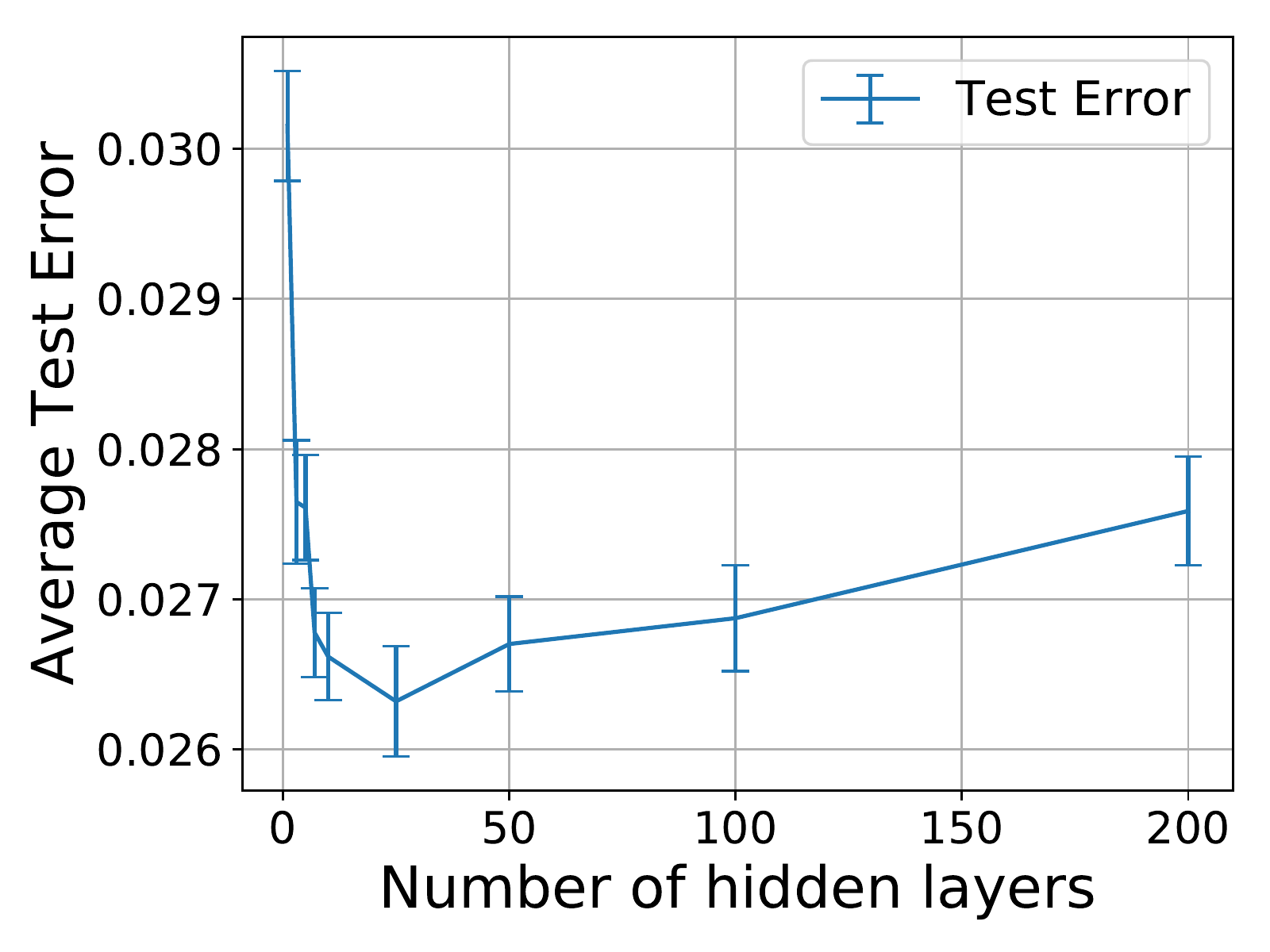}
    \end{subfigure}
    \caption{ Additionally, dynamical isometry seems to cause bias to decrease monotonically with depth. While skip connections appear to have helped control for the factors we want to control, they were not completely satisfying. All networks have training error 0 and are trained to the same training loss of 5e-5.}
\end{figure}

\section{Some Proofs} 
\label{app:proofs}

\subsection{Proof of Classic Result for Variance of Linear Model}
\label{app:linear_underparam}

Here, we reproduce the classic result that variance grows with the number of parameters in a linear model. This result can be found in \citet{hastie_09}'s book, and a similar proof can be found in \citet{linear_var_lecture}'s lecture slides.
\begin{proof} 
For a fixed $x$, we have $h(x) = x^T \hat{\theta}$. 
Taking $\hat{\theta} = \Sigma^{-1} X^T Y$ to be the gradient descent solution, and using $Y= X\theta + \epsilon$, we obtain:
$$
h(x) = x^T \Sigma^{-1} X^T (X\theta + \epsilon) = 
x^T \theta + x^T \Sigma^{-1} X^T \epsilon
$$
Hence $\E_\epsilon[h(x)] = x^T \theta$, and the variance is,  
\begin{align*}
    \Var_\epsilon(h(x)) &= \E_\epsilon[(h(x) - \E_\epsilon[h(x)])^2] \\
    &= \E_\epsilon[(x^T \theta + x^T \Sigma^{-1} X^T \epsilon - x^T \theta)^2] \\
    &= \E_\epsilon[(x^T \Sigma^{-1} X^T \epsilon)^2] \\
    &= \E_\epsilon[(x^T \Sigma^{-1} X^T \epsilon)(x^T \Sigma^{-1} X^T \epsilon)^T] \\
    &= \E_\epsilon[x^T \Sigma^{-1} X^T \epsilon \epsilon^T (x^T \Sigma^{-1} X^T)^T] \\
    &= \sigma_\epsilon^2 x^T \Sigma^{-1}  \Sigma \Sigma^{-1} x\\
    &= \sigma_\epsilon^2 x^T \Sigma^{-1}  \Sigma \Sigma^{-1} x\\
    &= \sigma_\epsilon^2 x^T \Sigma^{-1} x \\
    &= \sigma_\epsilon^2 \Tr(x^T \Sigma^{-1} x) \\
    &= \sigma_\epsilon^2 \Tr(x x^T \Sigma^{-1}) 
\end{align*}

Taking the expected value over the empirical distribution, $\hat{p}$, of the sample, we find an explicit increasing dependence on $N$:
\begin{align*}
    \E_{x \sim \hat{p}} [\Var_\epsilon(h(x))] &= \E_{x \sim \hat{p}} [\sigma_\epsilon^2 \Tr(x x^T \Sigma^{-1})] \\
    &= \sigma_\epsilon^2 \Tr(\E_{x \sim \hat{p}} [x x^T] \Sigma^{-1}) \\
    &= \sigma_\epsilon^2 \Tr \left(\frac{1}{m} \Sigma \Sigma^{-1} \right) \\
    &= \sigma_\epsilon^2 \frac{1}{m} \Tr(I_N) \\
    &= \sigma_\epsilon^2 \frac{N}{m}
\end{align*}

\end{proof}

\subsection{Proof of Result for Variance of Over-parameterized Linear Models}
\label{app:linear_overparam}

Here, we produce a variation on what was done in \cref{app:linear_underparam} to show that variance does not grow with the number of parameters in over-parameterized linear models. Recall that we are considering the setting where $N > m$, where $N$ is the number of parameters and $m$ is the number of training examples.

\begin{proof}
By the law of total variance, 
$$ 
\Var(h(x)) = \E_\epsilon\Var_{\theta_0}(h(x)) + \Var_{\epsilon}(\E_{\theta_0}[h(x)])
$$
Here have $h(x) = x^T \hat{\theta}$, where $\hat{\theta}$ the gradient descent solution   $\hat{\theta} = P_\perp(\theta_0) + \Sigma^+ X^T Y$, and $\theta_0 \sim \mathcal{N}(0, \frac{1}{N} I)$. Then, 
\begin{align*}
\Var_{\theta_0}(h(x)) &= \E_{\theta_0}[(h(x) - \E_{\theta_0}[h(x)])^2] \\
    &= \E_{\theta_0}[x^T(P_\perp(\theta_0) - \E_{\theta_0}[P_\perp(\theta_0)])^2]\\
    &=\Var_{\theta_0}(x^T P_\perp(\theta_0))\\
    &=\Var_{\theta_0}(P_\perp(x)^T P_\perp(\theta_0))\\
    &= \frac{1}{N} \|P_\perp(x)\|^2
\end{align*}
Since $\E_{\theta_0}(h(x)) = x^T \Sigma^+ X^T Y$, the calculation of $\Var_\epsilon(\E_{\theta_0}) h(x))$ is similar as in  \ref{app:linear_underparam}, where $\Sigma^{-1}$ is replaced by $\Sigma^+$.
Thus, 
$$
    \Var_\epsilon(\E_{\theta_0} h(x))   
    = \sigma_\epsilon^2 \Tr(x x^T \Sigma^{+}) 
$$

Taking the expected value over the empirical distribution, $\hat{p}$, of the sample, we find an explicit dependence on $r = \rank(X)$, not $N$:
\begin{align*}
    \E_{x \sim \hat{p}} [\Var(h(x))] &= 0 +  \E_{x \sim \hat{p}} [\sigma_\epsilon^2 \Tr(x x^T \Sigma^{+})] \\
    &= \sigma_\epsilon^2 \Tr(\E_{x \sim \hat{p}} [x x^T] \Sigma^{+}) \\
    &= \sigma_\epsilon^2 \Tr \left(\frac{1}{m} \Sigma \Sigma^{+} \right) \\
    &= \sigma_\epsilon^2 \frac{1}{m} \Tr(I_r^+) \\
    &= \sigma_\epsilon^2 \frac{r}{m}
\end{align*}
where $I_r^+$ denotes the diagonal matrix with 1 for the first $r$ diagonal elements and $0$ for the remaining $N - r$ elements.
\end{proof}

\subsection{Proof of \cref{thm:init-var-decay}}
\label{app:more_general_setting}

First we state some known concentration results \citep{ledoux2001concentration} that we will use in the proof.

\begin{lemma}[Levy] \label{Levy} Let $h: S^{n}_R \to \reals$ be a function on the $n$-dimensional Euclidean sphere of radius $R$, with Lipschitz constant $L$; and $\theta \in S^n_R$ chosen uniformly at random for the normalized measure.  Then 
\beq
\Prob(|h(\theta)-\E[h]| >\epsilon) \leq 2 \exp\left( - C \frac{n \epsilon^2}{ L^2 R^2 }\right)
\eeq
for some universal constant $C >0$.
\end{lemma}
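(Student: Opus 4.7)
The plan is to prove Levy's lemma by the classical route through spherical isoperimetry. First I would reduce to the unit sphere by rescaling: if $h$ is $L$-Lipschitz on $S^n_R$, then $\tilde h(\omega) := h(R\omega)$ is $LR$-Lipschitz on $S^n_1$, and the push-forward of the uniform measure on $S^n_R$ under $\omega \mapsto R\omega^{-1}$ is the uniform measure on $S^n_1$. So it suffices to establish the bound on $S^n_1$ with Lipschitz constant $LR$ in place of $L$, which reproduces $L^2R^2$ in the exponent once we put the two reductions together.

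Second, I would establish concentration around the median $M$ of $\tilde h$ (rather than around its mean) using the spherical isoperimetric inequality of Lévy and Schmidt. Let $A = \{\tilde h \leq M\}$, so $\mu(A) \geq 1/2$. The Lipschitz property implies that the geodesic $t$-neighborhood satisfies $A_t \subseteq \{\tilde h \leq M + LRt\}$. The isoperimetric inequality says that among all Borel subsets of $S^n_1$ with a given measure, spherical caps have $t$-enlargements of minimal measure; comparing $A$ against a hemisphere and bounding the complementary cap measure yields
\[
\mu(A_t) \;\geq\; 1 - \exp\!\left(-\tfrac{(n-1) t^2}{2}\right).
\]
Setting $t = \epsilon/(LR)$ gives $\Prob(\tilde h(\theta) > M + \epsilon) \leq \exp\!\bigl(-(n-1)\epsilon^2/(2 L^2 R^2)\bigr)$; the symmetric argument applied to $\{\tilde h \geq M\}$ gives the matching lower tail, and a union bound produces the desired two-sided inequality with median in place of mean.

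Third, I would replace the median by the expectation. Integrating the tail bound from step two shows $|\E \tilde h - M| \leq C_1 LR /\sqrt{n-1}$ for a universal $C_1$, so the events $\{|\tilde h - \E\tilde h| > \epsilon\}$ and $\{|\tilde h - M| > \epsilon/2\}$ coincide up to this discrepancy. For $\epsilon \geq 2C_1 LR/\sqrt{n-1}$ the concentration around $\E \tilde h$ follows from concentration around $M$ after absorbing the gap into a slightly smaller constant; for smaller $\epsilon$ the stated inequality is trivial because the right-hand side exceeds $1$ once $C$ is chosen appropriately. Combining this with the rescaling reduction of step one yields the claimed bound $\Prob(|h(\theta)-\E h|>\epsilon) \leq 2\exp(-Cn\epsilon^2/(L^2R^2))$.

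The main obstacle is the spherical isoperimetric inequality itself, which is a deep classical fact; I would invoke it as established (with reference to Ledoux's monograph, already cited in the paper) rather than reprove it from scratch via two-point symmetrization. A cleaner alternative that avoids isoperimetry is the Bakry--\'Emery route: $S^n_R$ has Ricci curvature $(n-1)/R^2$, so it satisfies a logarithmic Sobolev inequality with constant of order $R^2/(n-1)$, and Herbst's argument then converts this into the same sub-Gaussian concentration bound for Lipschitz functions, differing only in the value of the universal constant $C$.
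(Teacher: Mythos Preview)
Your proposal is a correct sketch of the classical proof of L\'evy's concentration inequality via spherical isoperimetry (with the Bakry--\'Emery alternative noted as well). The reduction to the unit sphere, concentration around the median through the isoperimetric comparison with caps, and the median-to-mean transfer are all standard and handled correctly; the only cosmetic slip is the notation ``$\omega \mapsto R\omega^{-1}$'' in the rescaling step, where you presumably mean the map $\theta \mapsto \theta/R$ from $S^n_R$ to $S^n_1$.

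However, the paper does not actually prove this lemma at all. It is stated in the appendix under the heading ``First we state some known concentration results \citep{ledoux2001concentration} that we will use in the proof,'' i.e.\ it is quoted as a black box from Ledoux's monograph and then used to derive the Gaussian concentration lemma, Corollary~\ref{Levy-variance}, and ultimately Theorem~\ref{thm:init-var-decay}. So there is no ``paper's own proof'' to compare against: you have supplied a proof where the authors simply cite one. Your argument is precisely the kind of proof one would find in the cited reference, so in that sense you are filling in what the paper deliberately outsourced.
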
 

Uniform measures on high dimensional spheres approximate Gaussian distributions \citep{ledoux2001concentration}. Using this, Levy's lemma yields an analogous concentration inequality for functions of Gaussian variables:

\begin{lemma}[Gaussian concentration] \label{Levy-Gauss} Let $h: \R^n \to \reals$ be a function on the Euclidean space $\R^n$, with Lipschitz constant $L$;  and $\theta \sim \mathcal{N}(0, \sigma \mathbb{I}_n)$  sampled from an isotropic $n$-dimensional Gaussian. Then: 
\beq
\Prob(|h(\theta)-\E[h]| >\epsilon) \leq 2 \exp\left( - C \frac{\epsilon^2}{ L^2 \sigma^2 }\right)
\eeq
for some universal constant $C >0$.
\end{lemma}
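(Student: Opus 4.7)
The plan is to derive Lemma 2 from Lemma 1 (Levy) via the classical Poincaré--Maxwell--Borel construction, which realizes an isotropic Gaussian in $\R^n$ as the limit of projections of uniform measures on high-dimensional spheres of radius proportional to $\sqrt{N}$. I would embed the given $L$-Lipschitz function $h$ into arbitrarily large ambient dimension, apply spherical concentration there with a judicious choice of radius, and pass to the limit.

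Concretely, for $N \gg n$, let $\tilde h : \R^N \to \R$ be defined by $\tilde h(y_1,\dots,y_N) = h(y_1,\dots,y_n)$. This extension is still $L$-Lipschitz in the Euclidean norm, since it ignores the extra coordinates. Restrict $\tilde h$ to the sphere $S^{N-1}_{R_N}$ with radius $R_N = \sigma\sqrt{N}$; the intrinsic (geodesic) Lipschitz constant on the sphere is bounded by the ambient Euclidean one, so it is still at most $L$. Let $\tilde\theta_N$ be uniform on $S^{N-1}_{R_N}$. Applying Lemma 1 gives
\beq
\Prob\!\left(|\tilde h(\tilde\theta_N) - \E[\tilde h(\tilde\theta_N)]| > \epsilon\right) \leq 2\exp\!\left(-C\,\frac{(N-1)\,\epsilon^2}{L^2 R_N^2}\right) = 2\exp\!\left(-C\,\frac{(N-1)}{N}\cdot\frac{\epsilon^2}{L^2\sigma^2}\right).
\eeq

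The next step is the passage to the Gaussian limit. By the Poincaré--Maxwell--Borel lemma, the marginal distribution of the first $n$ coordinates of $\tilde\theta_N$ (uniform on $S^{N-1}_{\sigma\sqrt{N}}$) converges weakly, as $N\to\infty$, to the isotropic Gaussian $\mathcal{N}(0,\sigma^2 I_n)$. Since $\tilde h(\tilde\theta_N)$ depends only on those first $n$ coordinates, $\tilde h(\tilde\theta_N) \Rightarrow h(\theta)$ in distribution with $\theta\sim\mathcal{N}(0,\sigma^2 I_n)$. The means converge as well: writing $m_N = \E[\tilde h(\tilde\theta_N)]$, a uniform-integrability argument using the Lipschitz growth $|\tilde h(y)| \le |h(0)| + L\|y\|$ together with the boundedness of $\E\|\tilde\theta_N\|^2 = R_N^2$ in the marginal sense shows $m_N \to m := \E[h(\theta)]$. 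Combining this with the portmanteau theorem (applied to the open set $\{|h(\cdot)-m|>\epsilon\}$, or to closed sets $\{|h(\cdot)-m|\ge \epsilon-\delta\}$ for arbitrary $\delta>0$) passes the spherical tail bound to the Gaussian tail bound, and letting $N\to\infty$ turns the prefactor $(N-1)/N$ into $1$, yielding the desired inequality (possibly after absorbing the $\delta$ by adjusting the universal constant $C$).

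The main obstacle is the limiting step, not the concentration itself: one must be careful that (i) weak convergence of the marginals is strong enough to transfer a tail inequality about an unbounded (Lipschitz) functional, and (ii) the shift from the spherical mean $m_N$ to the Gaussian mean $m$ does not spoil the constant in the exponent. The standard trick is to handle (i) via Skorokhod representation or by approximating $\tilde h$ by bounded Lipschitz truncations and letting the truncation level grow with $N$, and to handle (ii) by noting that $|m_N - m| \to 0$ so for any fixed $\epsilon$ and large enough $N$, $\{|\tilde h(\tilde\theta_N) - m| > \epsilon\} \subset \{|\tilde h(\tilde\theta_N) - m_N| > \epsilon - o(1)\}$, which only perturbs the constant. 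Everything else is routine once the embedding-and-limit framework is set up.
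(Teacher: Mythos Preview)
Your approach is essentially the same as the paper's: the paper does not give a detailed proof of this lemma but simply cites it from \citet{ledoux2001concentration} with the one-line remark that ``uniform measures on high dimensional spheres approximate Gaussian distributions'' and that ``using this, Levy's lemma yields an analogous concentration inequality for functions of Gaussian variables.'' Your Poincar\'e--Maxwell--Borel embedding argument is precisely a fleshed-out version of that sentence, so the route is the intended one and the calculation with $R_N=\sigma\sqrt{N}$ yielding the dimension-free exponent $\epsilon^2/(L^2\sigma^2)$ is correct.

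One minor point: in your uniform-integrability step you write ``boundedness of $\E\|\tilde\theta_N\|^2 = R_N^2$,'' but $R_N^2=\sigma^2 N$ is not bounded in $N$; what you actually need (and what holds) is that the second moment of the \emph{first $n$ coordinates} of $\tilde\theta_N$ equals $n\sigma^2$ uniformly in $N$, which is enough since $\tilde h$ depends only on those coordinates. With that correction the convergence of means and the portmanteau passage go through as you describe.
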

Note that in the Gaussian case, the bound is dimension free. 

In turn, concentration inequalities  give variance bounds for functions of random variables.

\begin{coro} \label{Levy-variance}
 Let $h$ be a function satisfying the conditions of Theorem \ref{Levy-Gauss},   and $\mbox{Var}(h) = \E[(h - \E[h])^2]$. Then 
 \beq 
  \mbox{Var}(h) \leq \frac{2 L^2 \sigma^2 }{C}
 \eeq
\end{coro}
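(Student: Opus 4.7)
The plan is to deduce the variance bound from the tail-probability estimate of Lemma~\ref{Levy-Gauss} via the standard ``layer-cake'' (tail-integral) identity. Writing $Y = |h(\theta) - \E[h]|$, I would observe that $\mbox{Var}(h) = \E[Y^2]$, and recall that for any nonnegative random variable $Z$ we have $\E[Z] = \int_0^\infty \Prob(Z > t)\, dt$, so in particular
$$
\mbox{Var}(h) = \int_0^\infty \Prob(Y^2 > t)\, dt = \int_0^\infty \Prob\bigl(Y > \sqrt{t}\,\bigr)\, dt.
$$

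Next I would apply Lemma~\ref{Levy-Gauss} with $\epsilon = \sqrt{t}$ to bound the integrand pointwise by $2\exp\!\bigl(-Ct/(L^2\sigma^2)\bigr)$. The remaining step is an elementary one-dimensional exponential integral that evaluates to $2L^2\sigma^2/C$, matching the stated bound exactly. Since the whole argument is independent of the ambient dimension $n$, the dimension-free nature of Lemma~\ref{Levy-Gauss} propagates directly to the corollary.

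There is no real obstacle here: Lipschitz continuity of $h$ together with finiteness of all Gaussian moments guarantees $\E[h]$ exists, so $Y$ and $\mbox{Var}(h)$ are well-defined and the tail integral converges. The one minor subtlety is that the raw bound $2\exp(-Ct/(L^2\sigma^2))$ can exceed $1$ for small $t$, and replacing it by $\min(1,\cdot)$ would yield a marginally tighter constant; however, since the statement only asks for the constant $2/C$, the unimproved integral is already sufficient, and no refinement is needed.
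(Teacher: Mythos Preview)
Your proposal is correct and is essentially the same argument as the paper's: both express $\Var(h)=\E[Y^2]$ via the layer-cake identity, plug in the Gaussian-concentration tail bound, and evaluate the resulting exponential integral. The only cosmetic difference is that the paper writes $\E[Y^2]=2\int_0^\infty t\,\Prob(Y>t)\,dt$ whereas you write $\E[Y^2]=\int_0^\infty \Prob(Y>\sqrt{t})\,dt$; these are the same formula up to the substitution $s=t^2$, and both yield $2L^2\sigma^2/C$.
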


\begin{proof} Let $g = h - \E[h]$. Then  $\mbox{Var}(h) = \mbox{Var}(g)$ and 
\beq
\mbox{Var}(g) = \E[|g|^2] 
= 2 \E\int_0^{|g|} t dt 
 =2 \E\int_0^\infty t \mathbbm{1}_{|g|>t} \, dt
\eeq
Now swapping expectation and integral (by Fubini theorem), and by using  the identity $\E \mathbbm{1}_{|g|>t} = \Prob(|g| > t)$, we obtain
\begin{align*}
\mbox{Var}(g) & = 2 \int_0^\infty t \, \Prob_R(|g| > t) \, d t \\
& \leq 2 \int_0^\infty 2 t  \exp\left( - C \frac{t^2}{ L^2\sigma^2}\right) d t \\
 & = 2 \left[-\frac{L^2 \sigma^2 }{C} \exp\left( - C \frac{t^2}{L^2\sigma^2}\right)\right]_0^\infty  = \frac{2 L^2\sigma^2}{C}
 \end{align*} 
\end{proof}

We are now ready to prove Theorem $\ref{thm:init-var-decay}$.
We first recall our assumptions:

\begin{assumption}
\label{assum:invariant-space}
The optimization of the loss function is invariant with respect to $\theta_{\mathcal{M}\perp}$.
\end{assumption}

\begin{assumption}
\label{assum:deterministic-solution}  
Along $\mathcal{M}$, optimization yields solutions independently of the initialization $\theta_0$.
\end{assumption}

We add the following assumptions.

\begin{assumption}
\label{assum:lipschitz}
The prediction $h_{\theta}(x)$ is 
$L$-Lipschitz with respect to $\theta_{\mathcal{M}\perp}$.
\end{assumption}

\begin{assumption}
\label{assum:init}
The network parameters are initialized as 
\beq
    \theta_0 \sim 
    \mathcal{N}(0, \frac{1}{N}\cdot I_{N\times N}).
\eeq
\end{assumption}

We first prove that the Gaussian concentration theorem  translates into concentration of predictions in the setting of \cref{sec:variance-from-optimization}.
\begin{theorem}[Concentration of predictions]
\label{thm:concentration-predictions}
Consider the setting of \cref{sec:back-to-nn} and Assumptions \ref{assum:invariant-space} and \ref{assum:init}.  
Let $\theta$ denote the parameters at the end of the learning process.
Then, for a fixed data set, $S$ we get concentration of the prediction, under initialization randomness, 
\begin{equation}
    \Prob(|h_{\theta}(x)-\E[h_{\theta}(x)]| >\epsilon) \leq 2 \exp\left( - C \frac{N \epsilon^2}{ L^2}\right)
\end{equation}
for some universal constant $C >0$.
\end{theorem}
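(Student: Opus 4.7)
The plan is to reduce the concentration statement to a direct application of \cref{Levy-Gauss} (Gaussian concentration), by viewing the prediction as a Lipschitz function of a single Gaussian vector obtained from the initialization. First, \cref{assum:invariant-space} freezes the $\mathcal{M}^\perp$-component of the parameters along optimization, so at the end of training $\theta_{\mathcal{M}^\perp} = \theta_{0,\mathcal{M}^\perp}$; combined with \cref{assum:deterministic-solution}, the $\mathcal{M}$-component $\theta_\mathcal{M}^*$ depends only on the fixed training set $S$, not on the initialization. Thus I can write $\theta = \theta_\mathcal{M}^* + \theta_{0,\mathcal{M}^\perp}$, and all randomness of $h_\theta(x)$ over initializations (for fixed $S$) is carried by $\theta_{0,\mathcal{M}^\perp}$ alone.

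Second, define $\tilde h : \mathcal{M}^\perp \to \reals$ by $\tilde h(u) := h_{\theta_\mathcal{M}^* + u}(x)$. By \cref{assum:lipschitz}, $\tilde h$ is $L$-Lipschitz, and $\mathcal{M}^\perp$ is isometric to $\R^{N-d(N)}$ with its Euclidean structure. Because the initialization $\theta_0 \sim \mathcal{N}(0, \frac{1}{N} I_N)$ is rotationally invariant, its orthogonal projection onto any linear subspace is again an isotropic Gaussian with the same per-coordinate variance, so $\theta_{0,\mathcal{M}^\perp} \sim \mathcal{N}(0, \frac{1}{N} I_{\mathcal{M}^\perp})$.

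Third, I would apply \cref{Levy-Gauss} to $\tilde h$ and $\theta_{0,\mathcal{M}^\perp}$ with Lipschitz constant $L$ and $\sigma^2 = 1/N$. Since $\tilde h(\theta_{0,\mathcal{M}^\perp}) = h_\theta(x)$, this gives directly
\[
\Prob\!\left(|h_\theta(x) - \E[h_\theta(x)]| > \epsilon \right) \leq 2 \exp\!\left( - C\,\frac{\epsilon^2}{L^2/N} \right) = 2 \exp\!\left( - C\,\frac{N \epsilon^2}{L^2} \right),
\]
which is the claimed bound; pairing it with \cref{Levy-variance} immediately recovers the variance estimate in \cref{thm:init-var-decay}, confirming the consistency between the two results.

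The step I expect to be most delicate is justifying the decomposition $\theta = \theta_\mathcal{M}^* + \theta_{0,\mathcal{M}^\perp}$ cleanly. \cref{assum:invariant-space} on its own only freezes the $\mathcal{M}^\perp$ coordinates of $\theta$, and \cref{assum:deterministic-solution} is essential to rule out residual dependence of $\theta_\mathcal{M}^*$ on the $\mathcal{M}$-component of the initialization. Without the latter, fluctuations of $\theta_{0,\mathcal{M}}$ could leak into $\theta_\mathcal{M}^*$ through the optimization dynamics, and one would need a Lipschitz control on the entire learning map (not just on the prediction function) to invoke Gaussian concentration. Once the decomposition is in place, the concentration step is routine.
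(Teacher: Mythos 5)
Your proof is correct and takes essentially the same route as the paper: decompose $\theta = \theta_\mathcal{M}^* + \theta_{0,\mathcal{M}^\perp}$, view $h_\theta(x)$ as an $L$-Lipschitz function of $\theta_{0,\mathcal{M}^\perp}$ alone (via \cref{assum:invariant-space}, \cref{assum:deterministic-solution}, and \cref{assum:lipschitz}), note that $\theta_{0,\mathcal{M}^\perp}$ is an isotropic Gaussian with per-coordinate variance $1/N$ (via \cref{assum:init}), and apply \cref{Levy-Gauss}. Two minor points of comparison: the paper sidesteps the projection-of-a-Gaussian step by assuming $\mathcal{M}$ is spanned by the first $d(N)$ coordinate axes near $\theta_\mathcal{M}^*$, whereas you invoke rotational invariance of the isotropic Gaussian directly, which is the cleaner justification for the general subspace; and you correctly flag that \cref{assum:deterministic-solution} is needed for the decomposition even though the theorem statement only names \cref{assum:invariant-space} and \cref{assum:init} — the paper relies on it implicitly through the phrase ``the setting of Section~\ref{sec:back-to-nn},'' so this is not a gap, just a clarification worth making explicit.
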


\begin{proof}
In our setting,  
the parameters at the end of learning can be expressed as
\begin{equation}
    \theta = \theta_\mathcal{M}
^* + \theta_{\mathcal{M}^\perp}
\end{equation}
where $\theta_\mathcal{M}^*$ is independent of the initialization $\theta_0$.   To simplify notation, we will assume that, at least locally around $\theta_\mathcal{M}^*$, $\mathcal{M}$ is spanned by the first $d(N)$ standard basis vectors, and $\mathcal{M}^\perp$ by the remaining $N-d(N)$.
This will allow us, from now on, to use the same variable names for $\theta_\mathcal{M}$ and $\thmp$ to denote their lower-dimensional representations of dimension $d(N)$ and $N-d(N)$ respectively.
More generally, we can assume that there is a mapping from $\theta_\mathcal{M}$ and $\thmp$ to those lower-dimensional representations. 

From Assumptions~\ref{assum:invariant-space} and \ref{assum:init} we get
\begin{equation}
    \theta_{\mathcal{M}^\perp}
    \sim \mathcal{N}\left(0, \frac{1}{N} I_{(N-d(N))\times (N-d(N))}\right).
\end{equation}

Let $g(\thmp) 
\triangleq h_{\theta_\mathcal{M}
^* + \thmp}(x)$.
By Assumption~\ref{assum:lipschitz}, 
$g(\cdot)$ is $L$-Lipschitz.
Then, by the Gaussian concentration theorem we get,
\begin{equation}
    \Prob(|g(\thmp)-\E[g(\thmp)]| >\epsilon) \leq 2 \exp\left( - C \frac{N \epsilon^2}{ L^2}\right).
\end{equation}
\end{proof}
The result of Theorem~\ref{thm:init-var-decay} immediately follows from Theorem~\ref{thm:concentration-predictions} and Corollary~\ref{Levy-variance}, with $\sigma^2 = 1/N$:
\beq 
\Var_{\theta_0}(h_\theta(x)) \leq C \frac{2L^2}{N}
\eeq
Provided the Lipschitz constant $L$ of the prediction  grows more slowly than the square of dimension, $L=o(\sqrt{N})$, we conclude that the variance vanishes to zero as $N$ grows.

\subsection{Bound on classification error in terms of regression error}
\label{app:classification_regression_relation}

\newcommand{\Rcl}{\cR_{\mbox{\tiny classif}}}
\newcommand{\Rreg}{\cR_{\mbox{\tiny reg}}}
In this section we give a bound on  classification risk $\Rcl$ in terms of the regression risk $\Rreg$. 

{\bf Notation.} Our classifier defines a  map $h: \mathcal{X} \to  \mathbb{R}^k$, which outputs probability vectors $h(x) \in \mathbb{R}^k$, with $\sum_{y=1}^k h(x)_y = 1$. The classification loss is defined by 
\begin{align} 
L(h) &= \mbox{Prob}_{x,y} \{h(x)_y < \max_{y'} h(x)_{y'}\} \nonumber \\
&= \mathbb{E}_{(x,y)} I(h(x)_y < \max_{y'} h(x)_{y'})
\end{align}
where $I(a) = 1$ if predicate $a$ is true and 0 otherwise.  Given trained predictors $h_S$ indexed by  training dataset $S$, the classification and regression risks are given by,
\beq 
\Rcl = \mathbb{E}_S L(h_S), \qquad \Rreg = \mathbb{E}_S \mathbb{E}_{(x,y)} ||h_S(x) - Y||^2_2
\eeq
where $Y$ denotes the one-hot vector representation of the class $y$. 

\begin{prop}
The classification risk is bounded by four times the regression risk,  
$\Rcl \leq 4 \Rreg$.  
\end{prop}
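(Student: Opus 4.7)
The plan is to establish the pointwise inequality
\[
I\!\left(h(x)_y < \max_{y'} h(x)_{y'}\right) \;\leq\; 4\, \|h(x) - Y\|_2^2
\]
for every predictor $h$ and every labeled pair $(x,y)$, and then take expectations over $(x,y)$ and over the training set $S$. Once this pointwise bound is in hand, linearity of expectation immediately gives $L(h) \leq 4\, \mathbb{E}_{(x,y)}\|h(x)-Y\|_2^2$, and a second expectation over $S$ yields $\Rcl \leq 4 \Rreg$.

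The one substantive step is to show that misclassification forces the predicted probability on the true label to be strictly below $1/2$. Suppose $(x,y)$ is misclassified, i.e.\ there exists $y' \neq y$ with $h(x)_{y'} > h(x)_y$. Since $h(x)$ lies on the probability simplex, $\sum_k h(x)_k = 1$, so in particular $h(x)_y + h(x)_{y'} \leq 1$. Combined with $h(x)_y < h(x)_{y'}$, this gives $2\, h(x)_y < h(x)_y + h(x)_{y'} \leq 1$, hence $h(x)_y < 1/2$.

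From here the bound is immediate. Since $Y$ is the one-hot representation of $y$, the coordinate $y$ of $Y$ equals $1$, so
\[
\|h(x) - Y\|_2^2 \;\geq\; (h(x)_y - 1)^2 \;>\; \tfrac{1}{4}.
\]
Thus whenever the indicator $I(\cdot)$ equals $1$ we have $\|h(x)-Y\|_2^2 > 1/4$, which is exactly the pointwise inequality $I \leq 4\,\|h(x)-Y\|_2^2$ (the inequality is trivial when $I = 0$). Taking $\mathbb{E}_{(x,y)}$ and then $\mathbb{E}_S$ on both sides finishes the proof.

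The main obstacle is essentially nonexistent; this is an elementary two-line lemma, and the only place the argument uses anything nontrivial is the appeal to $\sum_k h(x)_k = 1$ to force $h(x)_y < 1/2$. Without that normalization one would get only a weaker inequality of the form $\|h(x)-Y\|_2^2 \geq \tfrac12(h(x)_{y'} - h(x)_y)^2$, which does not yield a dimensionless multiplicative constant. The factor $4$ is therefore slightly loose (a more careful argument using both coordinates $y$ and $y'$ gives a factor of $2$), but $4$ suffices for the purpose stated in the paper.
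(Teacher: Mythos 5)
Your proof is correct and follows essentially the same route as the paper's: both hinge on the observation that a misclassification forces the probability on the true label below $\tfrac12$, hence $\|h(x)-Y\|_2^2 > \tfrac14$. The only packaging difference is that the paper phrases the last step via Markov's inequality applied to $\Prob\{\|h_S(x)-Y\|_2^2 > \tfrac14\}$, whereas you derive the equivalent pointwise indicator bound and take expectations directly---the same argument unwound; you also make explicit (via the simplex constraint) why $h(x)_y < \tfrac12$, a step the paper only asserts.
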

\begin{proof}  First note that, if $h(x) \in \R^k$ is a probability vector, then 
\[
h(x)_y < \max_{y'} h(x)_{y'}\,  \Longrightarrow \, h(x)_y < \frac12
\]
By taking the expectation over $x, y$,  we obtain the inequality $L(h) \leq {\widetilde L}(h)$ where
\beq {\widetilde L} (h) = \mbox{Prob}_{x,y} \{ h(x)_y < \frac12\}\eeq  

We then have,
\begin{align*} \Rcl :=  \mathbb{E}_S L(h_S) 
&\leq \mathbb{E}_S \tilde{L}(h_S) \\
& = \mbox{Prob}_{S; \, x,y} \{ {h_S(x)}_y < \frac12 \} \\
&= \mbox{Prob}_{S; \, x,y} \{|h_S(x)_y - Y_y| >\frac12\} \\
&\leq \mbox{Prob}_{S; \, x,y} \{||h_S(x) - Y||_2 > \frac12 \}  \\
& = \mbox{Prob}_{S; \, x,y} \{||h_S(x) - Y||^2_2 > \frac14 \}
 \leq 4 \Rreg
\end{align*}

where the last inequality follows from  Markov's inequality.  

\end{proof}
\section{Common intuitions from impactful works} \label{app:intuitions}

\subsection{Quotes from influential papers}

``Neural Networks and the Bias/Variance Dilemma'' from \citep{geman}: ``How big a network should we employ? A small network, with say one hidden unit, is likely to be biased, since the repertoire of available functions spanned by $f(x; w)$ over allowable weights will in this case be quite limited. If the true regression is poorly approximated within this class, there will necessarily be a substantial bias. On the other hand, if we overparameterize, via a large number of hidden units and associated weights, then the bias will be reduced (indeed, with enough weights and hidden units, the network will interpolate the data), but there is then the danger of a significant variance contribution to the mean-squared error. (This may actually be mitigated by incomplete convergence of the minimization algorithm, as we shall see in Section 3.5.5.)''

``An Overview of Statistical Learning Theory'' from \citep{Vapnik:1999}: ``To avoid over fitting (to get a small confidence interval) one has to construct networks with small VC-dimension.''

``Stability and Generalization'' from \citet{Bousquet2002}: ``It has long been known that when trying to estimate an unknown function from data, one needs to find a tradeoff between bias and variance. Indeed, on one hand, it is natural to use the largest model in order to be able to approximate any function, while on the other hand, if the model is too large, then the estimation of the best function in the model will be harder given a restricted amount of data." Footnote: ``We deliberately do not provide a precise definition of bias and variance and resort to common intuition about these notions."

``Understanding the Bias-Variance Tradeoff'' from \citet{fortmann-roe_2012}: ``At its root, dealing with bias and variance is really about dealing with over- and under-fitting. Bias is reduced and variance is increased in relation to model complexity. As more and more parameters are added to a model, the complexity of the model rises and variance becomes our primary concern while bias steadily falls. For example, as more polynomial terms are added to a linear regression, the greater the resulting model's complexity will be.''
\begin{figure}[h]
 \centering
 \includegraphics[width=.6\textwidth]{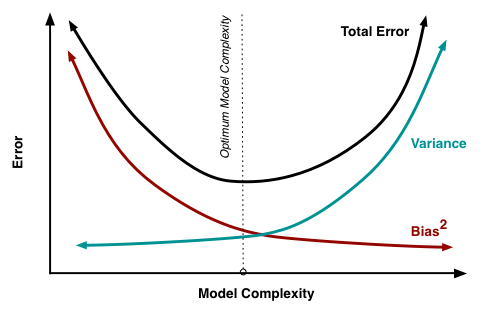}
 \caption{Illustration of common intuition for bias-variance tradeoff \citep{fortmann-roe_2012}}
\end{figure}

\subsection{Quotes from textbooks}

The concept of the bias-variance tradeoff is ubiquitious, appearing in many of the textbooks that are used in machine learning education:
\citet[Chapters 2.9 and 7.3]{hastie01statisticallearning},
\citet[Chapter 3.2]{Bishop:2006}, \citet[Chapter 5.4.4]{Goodfellow-et-al-2016}),
\citet[Chapter 2.3]{Abu-Mostafa:2012:LD:2207825},
\citet[Chapter 2.2.2]{James:2014:ISL:2517747}, \citet[Chapter 3.3]{hastie1990generalized}, \citet[Chapter 9.3]{DudaHart2001}. Here are two excerpts:
\begin{itemize}
    \item ``As a general rule, as we use more flexible methods, the variance will
    increase and the bias will decrease. The relative rate of change of these
    two quantities determines whether the test MSE increases or decreases. As
    we increase the flexibility of a class of methods, the bias tends to initially
    decrease faster than the variance increases. Consequently, the expected
    test MSE declines. However, at some point increasing flexibility has little
    impact on the bias but starts to significantly increase the variance. When
    this happens the test MSE increases'' \citep[Chapter 2.2.2]{James:2014:ISL:2517747}.
    
    \item ``Our goal is to minimize the expected loss, which we have decomposed into the
    sum of a (squared) bias, a variance, and a constant noise term. As we shall see, there
    is a trade-off between bias and variance, with very flexible models having low bias
    and high variance, and relatively rigid models having high bias and low variance'' \citep[Chapter 3.2]{Bishop:2006}.
    
    \item ``As the model complexity of our procedure is increased, the variance tends to increase and the squared bias tends to decrease'' \citep[Chapters 2.9]{hastie01statisticallearning}.
\end{itemize}

\end{appendices}

\end{document}